\renewenvironment{abstract}
  {{\centering\large\bfseries Abstract\par}\vspace{0.7ex}%
    \bgroup
       \leftskip 20pt\rightskip 20pt\small\noindent\ignorespaces}%
  {\par\egroup\vskip 0.25ex}
\newenvironment{keywords}
{\bgroup\leftskip 20pt\rightskip 20pt \small\noindent{\bfseries
Keywords:} \ignorespaces}%
{\par\egroup\vskip 0.25ex}
\newlength\aftertitskip     \newlength\beforetitskip
\newlength\interauthorskip  \newlength\aftermaketitskip
\newcommand{\artappendix}{\par
  \setcounter{section}{0}
  \setcounter{subsection}{0}
  \def\thesection{\Alph{section}}
  \def\presectionnum{Appendix~}%
}
\let\appendix\artappendix
\DeclareMathOperator*{\argmin}{arg\,min}
\DeclareMathOperator*{\mix}{mix}
\newcommand{\nosemic}{\SetEndCharOfAlgoLine{\relax}}
\newtheorem{assumption}{Assumption}
\newtheorem{rem}{Remark}
\newcommand{\at}[2][]{#1|_{#2}}
\newcommand{\E}{\mathbb{E}}
\newcommand{\Pc}{\mathcal{P}}
\newcommand{\Rc}{\mathcal{R}}
\newcommand{\Sc}{\mathcal{S}}
\newcommand{\Ac}{\mathcal{A}}
\newcommand{\Prob}{\mathbb{P}}
\title{\vspace{-2cm}{\Large{\bfseries{A Finite Time Analysis of Temporal Difference Learning With Linear Function Approximation}} }}
\author{Jalaj Bhandari}
\author{Daniel Russo}
\author{Raghav Singal}
\affil{Columbia University}
\date{}
\begin{document}

\maketitle

\begin{abstract}
\noindent Temporal difference learning (TD) is a simple iterative algorithm used to estimate the value function corresponding to a given policy in a Markov decision process. Although TD is one of the most widely used algorithms in reinforcement learning, its theoretical analysis has proved challenging and few guarantees on its statistical efficiency are available. In this work, we provide a \emph{simple and explicit finite time analysis} of temporal difference learning with linear function approximation. Except for a few key insights, our analysis mirrors standard techniques for  analyzing stochastic gradient descent algorithms, and therefore inherits the simplicity and elegance of that literature. Final sections of the paper show how all of our main results extend to the study of TD learning with eligibility traces, known as TD($\lambda$), and to Q-learning applied in high-dimensional optimal stopping problems.
\end{abstract}

\begin{keywords}
Reinforcement learning, temporal difference learning, finite time analysis, stochastic gradient descent. 
\end{keywords}

\section{Introduction} 
\label{sec:introduction}
Originally proposed by \cite{sutton1988learning}, temporal difference  learning (TD) is one of the most widely used reinforcement learning algorithms and a foundational idea on which more complex methods are built. 
The algorithm operates on a stream of data generated by applying some policy to a poorly understood Markov decision process. The goal is to learn an approximate value function, which can then be used to track the net present value of future rewards as a function of the system's evolving state. TD maintains a parametric approximation to the value function, making a simple incremental update to the estimated parameter vector each time a state transition occurs.  

While easy to implement, theoretical analysis of TD is subtle. Reinforcement learning researchers in the 1990s gathered both limited convergence guarantees \citep{jaakkola1994convergence} and examples of divergence \citep{baird1995residual}. Many issues were then clarified in the work of \cite{tsitsiklis1997analysis}, which establishes precise conditions for the asymptotic convergence of TD with linear function approximation and gives examples of divergent behavior when key conditions are violated. With guarantees of asymptotic convergence in place, a natural next step is to understand the algorithm's statistical efficiency. How much data is required to guarantee a given level of accuracy? Can one give uniform bounds on this, or could data requirements explode depending on the problem instance? Twenty years after the work of \cite{tsitsiklis1997analysis}, such questions remain largely unsettled. 

\subsection{Contributions}
This paper develops a \emph{simple and explicit non-asymptotic analysis of TD with linear function approximation}. The resulting guarantees provide assurances of robustness. They explicitly bound the worst-case dependence on problem features like the discount factor, the conditioning of the feature covariance matrix, and the mixing time of the underlying Markov chain. Our analysis reveals rigorous connections between TD and stochastic gradient descent algorithms, provides a template for finite time analysis of incremental algorithms with Markovian noise, and applies without modification to analyzing a class of high-dimensional optimal stopping problems. 
We elaborate on these contributions below.  

\begin{itemize}
\item \emph{Links with gradient descent:}  Despite a cosmetic connection to stochastic gradient descent (SGD), incremental updates of TD are not (stochastic) gradient steps with respect to any fixed loss function. It is therefore difficult to show that it makes consistent, quantifiable, progress toward its asymptotic limit point. Nevertheless, Section \ref{sec:mean_path} shows that expected TD updates obey crucial properties mirroring those of gradient descent on a particular quadratic loss function. In a model where the observations are corrupted by i.i.d.\,noise, these gradient-like properties of TD allow us to give state-of-the-art convergence bounds by essentially mirroring standard analyses of stochastic gradient descent (SGD). This approach may be of broader interest as SGD analyses are commonly taught in machine learning courses and serve as a launching point for a much broader literature on first-order optimization. Rigorous connections with the optimization literature can facilitate research on principled improvements to TD. 
\item \emph{Non-asymptotic treatment with Markovian noise:} TD is usually applied online to a single Markovian data stream. However, to our knowledge, there has been no successful\footnote{This was previously attempted by \cite{korda2015td}, but critical errors were shown by \cite{2017kordaissues}.} non-asymptotic analysis in the setting with Markovian observation noise. Instead, many papers have studied such algorithms under the  simpler i.i.d noise model mentioned earlier \citep{sutton2009convergent, sutton2009fast, liu2015finite, touati2017convergent, dalal2017concentration, lakshminarayanan2018linear}.  One reason is that the dependent nature of the data introduces a substantial technical challenge: the algorithm's updates are not only noisy, but can be severely biased. We use information theoretic techniques to control the magnitude of bias, yielding bounds that are essentially scaled by a factor of the mixing time of the underlying Markov process relative to those attained for i.i.d.\,model. Our analysis in this setting applies only to a variant of TD that projects the iterates onto a norm ball. This projection step imposes a uniform bound on the noise of gradient updates, which is needed for tractability. For similar reasons, projection operators are widely used throughout the stochastic approximation literature \citep[Section 2]{kushner2010stochastic}.
\item \emph{An extendable approach:} Much of the paper focuses on analyzing the most basic temporal difference learning algorithm, known as TD(0). We also extend this analysis to other algorithms. First, we  establish convergence bounds for temporal difference learning with eligibility traces, known as TD($\lambda$). This is known to often outperform TD(0) \citep{sutton1998reinforcement}, but a finite time analysis is more involved. Our analysis also applies without modification to Q-learning for a class of high-dimensional optimal stopping problems. Such problems have been widely studied due to applications in the pricing of financial derivatives \citep{tsitsiklis1999optimal, andersen2004primal, haugh2004pricing, desai2012pathwise, goldberg2018beating}. For our purposes, this example illustrates more clearly the link between value prediction and decision-making. It also shows our techniques extend seamlessly to analyzing an instance of non-linear stochastic approximation. To our knowledge, no prior work has provided non-asymptotic guarantees for either TD($\lambda$) or Q-learning with function approximation. 
\end{itemize}

\subsection{Related Literature}
\paragraph{Non-asymptotic analysis of TD(0):}
There has been very little non-asymptotic analysis of TD(0). To our knowledge, \cite{korda2015td} provided the first finite time analysis. However, several serious errors in their proofs were pointed out by \cite{2017kordaissues}. A very recent work by \cite{dalal2017finite} studies TD(0) with linear function approximation in an i.i.d.\,observation model, which assumes sequential observations used by the algorithm are drawn independently from their steady-state distribution. They focus on analysis with problem independent step-sizes of the form $1/T^{\sigma}$ for a fixed $\sigma\in(0,1)$ and establish that mean-squared error convergences at a rate\footnote{In personal communication, the authors have told us their analysis also yields a $O(1/T)$ rate of convergence for problem dependent step-sizes, though we have not been able to easily verify this.} of $O(1/T^{\sigma})$. Unfortunately, while the analysis is technically non-asymptotic, the constant factors in the bound display a complex dependence on the problem instance and even scale exponentially with the eigenvalues of certain matrices. \cite{dalal2017finite} also give a high-probability bound, a nice feature that we do not address in this work. 

This paper was accepted at the 2018 Conference on Learning Theory (COLT) and published in the proceedings as a two-page extended abstract. While the paper was under review, an interesting paper by \cite{lakshminarayanan2018linear} appeared. They study linear stochastic approximation algorithms under i.i.d\,noise, including TD(0), with constant step-sizes and iterate averaging. This line of work dates back to \cite{gyorfi1996averaged}, who show that the iterates of a constant step-size linear stochastic approximation algorithm form an ergodic Markov chain and, \emph{in the case of i.i.d.\,observation noise}, their expectation in steady-state is equal to the true solution of the linear system. By a central limit theorem for ergodic sequences, the average iterate converges to the true solution, with mean-squared error decaying at rate $O(1/T)$.  \cite{bach2013non} give a sophisticated non-asymptotic analysis of the least-mean-squares algorithm with constant step-size and iterate-averaging. \cite{lakshminarayanan2018linear} aim to understand whether such guarantees extend to linear stochastic approximation algorithms more broadly. In the process, their work provides $O(1/T)$ bounds for iterate-averaged TD(0) with constant step-size. A remarkable feature of their approach is that the choice of step-size is independent of the conditioning of the features (although the bounds themselves do degrade if features become ill-conditioned). It is worth noting that these results rely critically on the assumption that noise is i.i.d. In fact, \cite{gyorfi1996averaged} provide a very simple example of failure under correlated noise. In this example, under a linear stochastic approximation algorithm applied with any constant step-size, the averaged-iterate will converge to the wrong limit. 

The recent works of \cite{dalal2017finite} and \cite{lakshminarayanan2018linear} give bounds for TD(0) only under i.i.d.\,observation noise. Therefore their results are most comparable to what is presented in Section \ref{sec:iid_sampling}. For the i.i.d.\,noise model, the main argument in favor of our approach is that it allows for extremely simple proofs, interpretable constant terms, and illuminating connections with SGD. Moreover, it is worth emphasizing that our approach gracefully extends to more complex settings, including more realistic models with Markovian noise, the analysis of TD with eligibility traces, and the analysis of Q-learning for optimal stopping problems as shown in Sections \ref{sec:markov_chain_analyis}, \ref{sec:td_lambda} and \ref{sec:opt_stopping}. 

While not directly comparable to our results, we point the readers to the excellent work of \cite{schapire1996worst}. To facilitate theoretical analysis, they consider a slightly modified version of the TD($\lambda$). 
The authors provide a finite time analysis for this algorithm in an adversarial model where the goal is to predict the discounted sum of future rewards from each state. Performance is measured relative to the best fixed linear predictor in hindsight. The analysis is creative, but results depend on a several unknown constants and on the specific sequence of states and rewards on which the algorithm is applied. \cite{schapire1996worst} also apply their techniques to study value function approximation in a Markov decision process. In that case, the bounds are much weaker than what is established here. Their bound scales with the size of the state space--which is enormous in most practical problems--and applies only to TD(1)--a somewhat degenerate special case of TD($\lambda$) in which it is equivalent to Monte Carlo policy evaluation \citep{sutton1998reinforcement}. 


\paragraph{Asymptotic analysis of stochastic approximation:}
There is a well developed asymptotic theory of stochastic approximation, a field that studies noisy recursive algorithms like TD \citep{kushner2003stochastic, borkar2009stochastic, benveniste2012adaptive}. Most asymptotic convergence proofs in reinforcement learning use a technique known as the ODE method \citep{borkar2000ode}. Under some technical conditions and appropriate decaying step-sizes, this method ensures the almost-sure convergence of stochastic approximation algorithms to the invariant set of a certain `mean' differential equation. The technique greatly simplifies asymptotic convergence arguments, since it completely circumvents issues with noise in the system and issues of step-size selection. But this  also makes it a somewhat coarse tool, unable to generate insight into an algorithm's sensitivity to noise, ill-conditioning, or step-size choices. A more refined set of techniques begin to address these issues. Under fairly broad conditions, a central limit theorem for stochastic approximation algorithms characterizes their limiting variance. Such a central limit theorem has been specifically provided for TD by \cite{konda2002thesis} and \cite{devraj2017fastest}. 

Despite the availability of such asymptotic techniques, the modern literature on first-order stochastic optimization focuses heavily on non-asymptotic analysis \citep{bottou2016optimization, bubeck2015convex, jain2017non}. One reason is that such asymptotic analysis necessarily focuses on a regime where step-sizes are arbitrarily small relative to problem features and the iterates have already converged to a small neighborhood of the optimum. However, the use of a first-order method in the first place signals that a practitioner is mostly interesting in cheaply reaching a reasonably accurate solution, rather than the rate of convergence in the neighborhood of the optimum. In practice, it is common to use constant step-sizes, so iterates never truly converge to the optimum. A non-asymptotic analysis requires grappling with the algorithm's behavior in practically relevant regimes where step-sizes are still relatively large and iterates are not yet close to the true solution.  

\paragraph{Analysis of related algorithms:}
A number of papers analyze algorithms related to and inspired by the classic TD algorithm. First, among others, \cite{antos2008learning,lazaric2010finite, ghavamzadeh2010lstd, pires2012statistical,prashanth2014fast} and \cite{tu2017least} analyze least-squares temporal difference learning (LSTD). \cite{yu2009convergence} study the related least-squares policy iteration algorithm.  The asymptotic limit point of TD is a minimizer of a certain population loss, known as the mean-squared projected Bellman error. LSTD solves a least-squares problem, essentially computing the exact minimizer of this loss on the empirical data. It is easy to derive a central limit theorem for LSTD. Finite time bounds follow from establishing uniform convergence rates of the empirical loss to the population loss. Unfortunately, such techniques appear to be quite distinct from those needed to understand the online TD algorithms studied in this paper.  Online TD has seen much wider use due to significant computational advantages \citep{sutton1998reinforcement}. 

Gradient TD methods are another related class of algorithms. These were derived by \cite{sutton2009convergent, sutton2009fast}  to address the issue that TD can diverge in so-called “off-policy” settings, where data is collected from a policy different from the one for which we want to estimate the value function.  
Unlike the classic TD(0) algorithm, gradient TD methods are designed to mimic gradient descent with respect to the mean squared projected Bellman error. \cite{sutton2009convergent, sutton2009fast} propose asymptotically convergent two-time scale stochastic approximation schemes based on this and more recently \cite{dalal2017concentration} give a finite time analysis of two time scale stochastic approximation algorithms, including several variants of gradient TD algorithms. A creative paper by \cite{liu2015finite} reformulates the original optimization as a primal-dual saddle point problem and leverages convergence analysis form that literature  to give a non-asymptotic analysis. This work was later revisited by \cite{touati2017convergent}, who established a faster rate of convergence. The works of \cite{dalal2017concentration, liu2015finite} and \cite{touati2017convergent} all consider only i.i.d.\,observation noise. One interesting open question is whether our techniques for treating the Markovian observation model will also apply to these analyses. Finally, it is worth highlighting that, to the best of our knowledge, substantial new techniques are needed to analyze the widely used TD(0), TD($\lambda$) and the Q-learning algorithm studied in this paper. Unlike gradient TD methods, they do not mimic noisy gradient steps with respect to any fixed objective\footnote{This can be formally verified for TD(0) with linear function approximation. If the TD step were a gradient with respect to a fixed objective, differentiating it should give the Hessian and hence a symmetric matrix. Instead, the matrix one attains is typically not a symmetric one.}.

\section{Problem formulation} \label{sec:formulation}

\paragraph{Markov reward process.}
We consider the problem of evaluating the value function $V_{\mu}$ of a given policy $\mu$ in a Markov decision process (MDP). We work in the \emph{on policy} setting, where data is generated by applying the policy $\mu$ in the MDP. Because the policy $\mu$ is applied automatically to select actions, such problems are most naturally formulated as value function estimation in a Markov reward process (MRP). A MRP\footnote{We avoid $\mu$ from notation for simplicity.} comprises of $(\mathcal{S}, \mathcal{P}, \mathcal{R}, \gamma$) \citep{sutton1998reinforcement} where $\mathcal{S}$ is the set of states, $\mathcal{P}$ is the Markovian transition kernel, $\mathcal{R}$ is a reward function, and $\gamma<1$ is the discount factor. For a discrete state-space $\Sc$, $\mathcal{P}(s'|s)$ specifies the probability of transitioning from a state $s$ to another state $s'$. The reward function $\mathcal{R}(s,s')$ associates a reward with each state transition. We denote by $\Rc(s)= \sum_{s'\in \mathcal{S}} \mathcal{P}(s'|s) \mathcal{R}(s,s')$ the expected instantaneous reward generated from an initial state $s$. 

The value function associated with this MRP, $V_{\mu}$, specifies the expected cumulative discounted future reward as a function of the state of the system. In particular,
\begin{eqnarray*}
V_\mu(s) = \mathbb{E}\left[ \sum_{t=0}^\infty \gamma^t \Rc(s_t) \mid s_0=s \right],
\end{eqnarray*}
where the expectation is over sequences of states generated according to the transition kernel $\Pc$. This value function obeys the Bellman equation $T_\mu V_\mu = V_\mu$, where the Bellman operator $T_\mu$ associates a value function $V:\Sc \to \mathbb{R}$ with another value function $T_{\mu}V$ satisfying
\[
(T_\mu V)(s)= \Rc(s)+ \gamma \sum_{s' \in \mathcal{S}} \mathcal{P}(s'|s) V(s') \quad \forall \,\, s\in \Sc. 
\]
We assume rewards are bounded uniformly such that
\[
\abs{\Rc(s,s')} \leq r_{\max} \quad \forall \,\, s,s' \in \mathcal{S}.
\]
Under this assumption, value functions are assured to exist and are the unique solution to Bellman's equation \citep{bertsekas2012dynamic}. 
We also assume that the Markov reward process induced by following the policy $\mu$ is ergodic with a unique stationary distribution $\pi$. For any two states $s, s'$: $\pi(s') = \lim_{t\to \infty} \Prob(s_t= s'| s_0=s)$. 

Following common references \citep{bertsekas2012dynamic,dann2014policy, de2003linear}, we will simplify the presentation by assuming the state space $\Sc$ is a finite set of size $n=|\Sc|$. We elaborate on this choice in the remark below. 
\begin{rem}
Working with a finite state space allows for the use of compact matrix notation, which is the convention in work on linear value function approximation. It also avoids measure theoretic notation for conditional probability distributions. Our proofs extend in an obvious way to problems with countably infinite state-spaces. For problems with general state-space, even the core results in dynamic programming hold only under suitable technical conditions \citep{bertsekas1978stochastic}. 
\end{rem}

\paragraph{Value function approximation.}
Given a fixed policy $\mu$, the problem is to efficiently estimate the corresponding value function $V_{\mu}$ using only the observed rewards and state transitions. Unfortunately, due to the curse of dimensionality, most modern applications have intractably large state spaces, rendering exact value function learning hopeless. Instead, researchers resort to parametric approximations of the value function, for example by using a linear function approximator \citep{sutton1998reinforcement} or a non-linear function approximation such as a neural network \citep{mnih2015human}. In this work, we consider a linear function approximation architecture where the true value-to-go $V_{\mu}(s)$ is approximated as
$$V_{\mu}(s) \approx V_{\theta}(s) = \phi(s)^\top \theta, $$ 
where $\phi(s) \in \mathbb{R}^d$ is a fixed feature vector for state $s$ and $\theta \in \mathbb{R}^d$ is a parameter vector that is shared across states. When the state space is the finite set $\Sc = \{s_1, \ldots, s_n\}$, $V_{\theta}\in \mathbb{R}^n$ can be expressed compactly as
\begin{align*}
V_{\theta} &= \begin{bmatrix}
                               \phi(s_1)^\top \\
              				   \vdots \\
              				   \phi(s_n)^\top
             				   \end{bmatrix} \theta = \begin{bmatrix}
              				  						   \phi_1(s_1) & \phi_k(s_1) & \phi_d(s_1)  \\
                                                       \vdots & \vdots & \vdots \\
              				   						   \phi_1(s_n) & \phi_k(s_n) & \phi_d(s_n) \\
             				  						   \end{bmatrix} \theta = \Phi \theta,                                       \end{align*}
where $ \Phi \in \mathbb{R}^{n \times d}$ and $\theta \in \mathbb{R}^d$. We assume throughout that the $d$ features vectors $\{\phi_k\}_{k=1}^d$, forming the columns of $\Phi$ are linearly independent. 


\paragraph{Norms in value function and parameter space.}
For a symmetric positive definite matrix $A$, define the inner product $\langle x, y \rangle_{A} = x^\top A  y$ and the associated norm $\| x\|_{A} = \sqrt{ x^\top A x}.$ If $A$ is positive semi-definite rather than positive definite then $\| \cdot \|_{A}$ is called a semi-norm.  Let $D = \text{diag}(\pi(s_1),\ldots,\pi(s_n)) \in \mathbb{R}^{n \times n}$ denote the diagonal matrix whose elements are given by the entries of the stationary distribution $\pi(\cdot)$. Then, for two value functions $V$ and $V'$, 
\[
\| V -V' \|_D = \sqrt{\sum_{s \in \mathcal{S}} \pi(s) \left( V(s)-V'(s) \right)^2},
\]
measures the mean-square difference between the value predictions under $V$ and $V'$, in steady-state. This suggests a natural norm on the space of parameter vectors. In particular, for any $\theta, \theta' \in \mathbb{R}^d$,  
\[ 
\| V_{\theta} - V_{\theta'}\|_D = \sqrt{\sum_{s \in \mathcal{S}} \pi(s)\left(\phi(s)^\top ( \theta - \theta')\right)^2 } = \| \theta - \theta' \|_{\Sigma}
\]
where 
\[
\Sigma := \Phi^\top D \Phi = \sum_{s\in \Sc} \pi(s) \phi(s) \phi(s)^\top
\]
is the steady-state feature covariance matrix. 

\paragraph{Feature regularity.}
We assume that any entirely redundant or irrelevant features have been removed, so $\Sigma$ has full rank. Additionally, we also assume that $\|\phi(s)\|_2^2 \leq 1$ for all $s\in \Sc$, which can be ensured through feature normalization. This also ensures that $\Sigma$ exists. Let $\omega>0$ be the minimum eigenvalue of $\Sigma$.  From our bound on the feature vectors, the maximum eigenvalue of $\Sigma$ is less than $1$\footnote{Let  $\lambda_{\max}(A) = \max_{\|x\|_2=1} x^\top A x$ denote the maximum eigenvalue of a symmetric positive-semidefinite matrix. Since this is a convex function, $\lambda_{\max}(\Sigma) \leq \sum_{s\in \Sc} \pi(s) \lambda_{\max}(\phi(s)\phi(s)^\top) \leq \sum_{s\in \Sc} \pi(s) =1$.}   
, so $1/\omega$ bounds the condition number of the feature covariance matrix. The following lemma is an immediate consequence of our assumptions.
\begin{restatable}[Norm equivalence]{lem}{}
\label{lemma:strong_conv}
For all $\theta \in \mathbb{R}^d$, $
\sqrt{\omega} \| \theta \|_2 \leq \| V_{\theta} \|_D \leq  \|\theta \|_2.$
\end{restatable}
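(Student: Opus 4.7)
The plan is to unpack the norm $\|V_\theta\|_D$ using the definitions already established in the excerpt and then apply the standard variational characterization of eigenvalues of a symmetric positive definite matrix.

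First I would write out $\|V_\theta\|_D^2$ directly: since $V_\theta = \Phi \theta$, we have
\[
\|V_\theta\|_D^2 = V_\theta^\top D V_\theta = \theta^\top \Phi^\top D \Phi \, \theta = \theta^\top \Sigma \, \theta = \|\theta\|_\Sigma^2,
\]
using the definition $\Sigma = \Phi^\top D \Phi$ introduced just above the lemma. This reduces the claim to showing $\omega \|\theta\|_2^2 \leq \theta^\top \Sigma \theta \leq \|\theta\|_2^2$.

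Next I would invoke the Rayleigh quotient inequalities for the symmetric positive definite matrix $\Sigma$:
\[
\lambda_{\min}(\Sigma) \|\theta\|_2^2 \;\leq\; \theta^\top \Sigma \theta \;\leq\; \lambda_{\max}(\Sigma) \|\theta\|_2^2.
\]
By definition $\lambda_{\min}(\Sigma) = \omega$, which gives the lower bound. For the upper bound, I would cite the computation in the footnote preceding the lemma: since $\lambda_{\max}$ is convex and each outer product $\phi(s)\phi(s)^\top$ has maximum eigenvalue $\|\phi(s)\|_2^2 \leq 1$, convexity applied to $\Sigma = \sum_s \pi(s)\phi(s)\phi(s)^\top$ yields $\lambda_{\max}(\Sigma) \leq \sum_s \pi(s) = 1$. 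Taking square roots on both sides of $\omega\|\theta\|_2^2 \leq \|V_\theta\|_D^2 \leq \|\theta\|_2^2$ finishes the proof.

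There is no real obstacle here; the lemma is essentially a restatement of the facts that (i) $\|V_\theta\|_D^2 = \theta^\top \Sigma \theta$ and (ii) $\Sigma$ has eigenvalues in $[\omega, 1]$, both of which are already established in the surrounding text. The only thing to be careful about is making sure $\Sigma$ is genuinely positive definite (so that $\omega > 0$ and square roots are well defined), but this is guaranteed by the full-rank assumption on $\Phi$ combined with $D \succ 0$, which holds because $\pi$ is the stationary distribution of an ergodic chain and hence assigns positive mass to every state.
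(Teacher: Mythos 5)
Your proof is correct and follows exactly the route the paper intends: the paper states this lemma as an immediate consequence of the identity $\|V_\theta\|_D = \|\theta\|_\Sigma$ and the eigenvalue bounds $\lambda_{\min}(\Sigma)=\omega$ and $\lambda_{\max}(\Sigma)\leq 1$ established in the surrounding text and footnote. Nothing is missing.
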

\noindent While we assume it has full rank, we will establish some finite time bounds that are independent of the conditioning of the feature covariance matrix. 

\section{Temporal difference learning}\label{sec:algorithm}
We consider the classic temporal difference learning algorithm \citep{sutton1988learning}. The algorithm starts with an initial parameter estimate $\theta_0$ and at every time step $t$, it observes one data tuple $O_t=(s_t, r_t=\Rc(s_t,s_t'), s'_t)$ consisting of the current state, the current reward and the next state reached by playing policy $\mu$ in the current state. This tuple is used to define a loss function, which is taken to be the squared sample Bellman error. It then proceeds to compute the next iterate $\theta_{t+1}$ by taking a gradient step. Some of our bounds guarantee accuracy of the average iterate, denoted by $\bar{\theta}_t= t^{-1}\sum_{i=0}^{t-1} \theta_i$. The version of TD presented in Algorithm \ref{algo:increment_TD} also makes online updates to the averaged iterate.  

We present in Algorithm \ref{algo:increment_TD} the simplest variant of TD, which is known as TD(0). 
It is also worth highlighting that here we study online temporal difference learning, which makes incremental gradient-like updates to the parameter estimate based on the most recent data observations only. Such algorithms are widely used in practice, but harder to analyze than so-called batch TD methods like the LSTD algorithm of \cite{bradtke1996linear}. 

 \begin{algorithm2e}
     \SetAlgoLined
     \SetKwInOut{Input}{Input}
     \SetKwInOut{Output}{Output}
     \Input{initial guess $\theta_0$, step-size sequence $\{\alpha_{t}\}_{t\in \mathbb{N}}$.}
     Initialize: $\bar{\theta}_{0} \gets \theta_0$. \\
     \For{$t=0,1,\ldots$}{
        	\nosemic Observe tuple: $O_t = (s_t,r_t=\Rc(s_t,s_t'),s'_t)$\;
            Define target: $y_t = \Rc(s_t,s_t') + \gamma V_{\theta_t}(s'_t)$\tcc*[r]{sample Bellman operator}
            Define loss function: $\frac{1}{2} (y_t - V_{\theta}(s_t))^2$\tcc*[r]{sample Bellman error squared}
            Compute negative gradient: $g_t(\theta_t) = - \frac{\partial}{\partial \theta} \frac{1}{2}  (y_t - V_{\theta}(s_t))^2\at[\big]{\theta = \theta_t}$\;
            Take a gradient step: $\theta_{t+1} = \theta_t + \alpha_t g_t(\theta_t)$
        \tcc*[r]{$\alpha_t$:step-size}
        Update averaged iterate: $\bar{\theta}_{t+1} \gets \left(\frac{t}{t+1}\right)\bar{\theta}_{t}+ \left(\frac{1}{t+1}\right)\theta_{t+1} $ 
        \tcc*[r]{$\bar{\theta}_{t+1} = \frac{1}{t+1} \sum_{\ell=0}^{t} \theta_\ell$}
        }
           \caption{TD(0) with linear function approximation} 
           \label{algo:increment_TD}
    \end{algorithm2e}
    
     At time $t$, TD takes a step in the direction of the negative gradient $g_{t}(\theta_t)$ evaluated at the current parameter. As a general function of $\theta$ and the tuple $O_t=(s_t, r_t, s_t')$, the negative gradient can be written as
    \begin{eqnarray}
    \label{eq:grad_TD(0)}
     g_t(\theta) = \Big( r_t + \gamma \phi(s'_{t})^\top \theta - \phi(s_t)^\top \theta \Big)\phi(s_t) .
    \end{eqnarray}
    The long-run dynamics of TD are closely linked to the expected negative gradient step when the tuple $O_t=(s_t, r_t, s'_t)$ follows its \emph{steady-state} behavior:
    \[ 
    \bar{g}(\theta) := \sum_{s, s' \in \Sc} \pi(s) \Pc(s'|s) \left(\Rc(s,s')+\gamma \phi(s')^\top \theta - \phi(s)^\top \theta \right)\phi(s) \quad \forall \,\, \theta \in \mathbb{R}^d. 
    \]
    This can be rewritten more compactly in several useful ways. One such way is, 
    \begin{equation}
    \label{eq: expectation form of gbar}
    \bar{g}(\theta) = \E\left[ \phi r\right] + \E\left[\phi(\gamma\phi' - \phi)^\top \right] \theta,  
    \end{equation}
    where $\phi = \phi(s)$ is the feature vector of a random initial state $s\sim \pi$, $\phi'=\phi(s')$ is the feature vector of a random next state drawn according to $s' \sim \Pc(\cdot \mid s)$, and $r=\mathcal{R}(s,s')$. In addition, since $\sum_{s' \in \mathcal{S}} \Pc(s'| s)\left(\Rc(s, s') + \gamma \phi(s')^\top \theta  \right)= (T_{\mu} \Phi \theta)(s)$, we can recognize that
    \begin{equation}
    \label{eq:Expected grad matrix form}
    \bar{g}(\theta)=\Phi^\top D (T_{\mu}\Phi\theta - \Phi \theta).
    \end{equation}
See \cite{tsitsiklis1997analysis} for a derivation of this fact. 
\section{Asymptotic convergence of temporal difference learning}
\label{sec:asymptotic_TD(0)}
The main challenge in analyzing TD is that the gradient steps $g_{t}(\theta)$ are not true stochastic gradients with respect to any fixed objective. 
The gradient step taken at time $t$ pulls the value prediction $V_{\theta_{t+1}}(s_t)$ closer to $y_t$, but $y_t$ itself depends on $V_{\theta_t}$. So does this circular process converge? The key insight of \cite{tsitsiklis1997analysis} was to interpret this as a stochastic approximation scheme for solving a fixed point equation known as the projected Bellman equation. Contraction properties together with general results from stochastic approximation theory can then be used to show convergence.  

Should TD converge at all, it should be to a stationary point. Because the feature covariance matrix $\Sigma$ is full rank there is a unique\footnote{This follows formally as a consequence of Lemma \ref{lemma:expected_gradient} in this paper.} vector $\theta^*$ with $\bar{g}(\theta^*)=0$.  We briefly review results that offer insight into $\theta^*$ and proofs of the asymptotic convergence of TD. 

\paragraph{Understanding the TD limit point.} 
\cite{tsitsiklis1997analysis} give an interesting characterization of the limit point $\theta^*$. They show it is the unique solution to the \emph{projected} Bellman equation
\begin{equation}
\label{eq:projected_Bellmann_eq}
\Phi \theta = \Pi_{D} T_{\mu} \Phi \theta,
\end{equation}
where $\Pi_{D}(\cdot)$ is the projection operator onto the subspace $\{\Phi x \mid x\in \mathbb{R}^d  \}$ spanned by these features in the inner product $\langle \cdot , \cdot \rangle_D$. To see why this is the case, note that by using $\bar{g}(\theta^*)=0$ along with Equation \eqref{eq:Expected grad matrix form}, 
\[ 
0 = x^\top \bar{g}(\theta^*) = \langle \Phi x \, , \, T_{\mu}\Phi \theta^* - \Phi \theta^* \rangle_D \quad \forall \,\, x \in \mathbb{R}^d.
\]
That is, the Bellman error at $\theta^*$, given by $(T_{\mu}\Phi \theta^* - \Phi \theta^*)$, is orthogonal to the space spanned by the features in the inner product $\langle \cdot , \cdot \rangle_D$. By definition, this means $ \Pi_{D} \left(T_{\mu}\Phi \theta^* - \Phi \theta^*\right) = 0$ and hence $\theta^*$ must satisfy the projected Bellman equation.  

The following lemma shows the projected Bellman operator, $\Pi_{D} T_{\mu}(\cdot)$ is a contraction, and so in principle, one could converge to the approximate  value function $\Phi \theta^*$ by repeatedly applying it. 
TD appears to serve a simple stochastic approximation scheme for solving the projected-Bellman fixed point equation.
\begin{restatable}[]{lem}{}
	\label{lemma:BO_prop}
	[\textbf{\cite{tsitsiklis1997analysis}}]
	$\Pi_{D} T_\mu (\cdot)$ is a contraction with respect to $\| \cdot \|_{D}$ with modulus $\gamma$, that is,
	\begin{eqnarray*}
		\norm{\Pi_{D} T_\mu V_{\theta} - \Pi_{D} T_\mu V_{\theta'}}_{D} \leq \gamma \norm{V_{\theta} - V_{\theta'}}_{D} \quad \forall \, \theta, \theta' \in \mathbb{R}^d. 
	\end{eqnarray*}
\end{restatable}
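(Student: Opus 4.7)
The plan is to decompose $\Pi_D T_\mu$ into its two factors and show (i) that $T_\mu$ is itself a $\gamma$-contraction on value functions in the weighted norm $\|\cdot\|_D$, and (ii) that $\Pi_D$ is a non-expansion in $\|\cdot\|_D$. Composing the two gives the stated $\gamma$-contraction.

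First I would handle the Bellman operator. For any value functions $V,V'$, the reward term cancels so $T_\mu V - T_\mu V' = \gamma P(V - V')$, where $P$ is the transition matrix with entries $P(s,s') = \Pc(s'\mid s)$. The heart of the proof is showing that $P$ is a non-expansion in $\|\cdot\|_D$. I would compute
\begin{equation*}
\|PV\|_D^2 = \sum_{s}\pi(s)\Bigl(\sum_{s'}P(s,s')V(s')\Bigr)^2 \leq \sum_{s}\pi(s)\sum_{s'}P(s,s')V(s')^2
\end{equation*}
by Jensen's inequality applied to the convex function $x \mapsto x^2$ against the probability distribution $P(s,\cdot)$. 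Then I would swap the order of summation and use the stationarity identity $\sum_s \pi(s)P(s,s') = \pi(s')$ to conclude $\|PV\|_D^2 \leq \sum_{s'}\pi(s')V(s')^2 = \|V\|_D^2$. Applied to $V - V'$, this gives $\|T_\mu V - T_\mu V'\|_D \leq \gamma \|V - V'\|_D$.

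Next I would argue that $\Pi_D$ is a non-expansion in $\|\cdot\|_D$. Since $\Pi_D$ is the orthogonal projection in the inner product $\langle \cdot, \cdot \rangle_D$ onto the subspace spanned by the feature columns, the Pythagorean identity applied to the decomposition $V = \Pi_D V + (V - \Pi_D V)$ (with orthogonality in $\langle \cdot, \cdot \rangle_D$) yields $\|V\|_D^2 = \|\Pi_D V\|_D^2 + \|V - \Pi_D V\|_D^2 \geq \|\Pi_D V\|_D^2$. By linearity of $\Pi_D$, this gives $\|\Pi_D (V_\theta - V_{\theta'})\|_D \leq \|V_\theta - V_{\theta'}\|_D$ for any $\theta,\theta'$.

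Chaining the two bounds, $\|\Pi_D T_\mu V_\theta - \Pi_D T_\mu V_{\theta'}\|_D \leq \|T_\mu V_\theta - T_\mu V_{\theta'}\|_D \leq \gamma \|V_\theta - V_{\theta'}\|_D$, completing the argument. The main subtle step is the non-expansiveness of $P$ in $\|\cdot\|_D$: it relies crucially on $\pi$ being the stationary distribution of $P$, which is exactly where the problem's ergodicity assumption enters. Everything else is routine (Jensen plus the Pythagorean theorem).
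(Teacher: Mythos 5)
Your proof is correct and is exactly the standard argument from the cited source (Tsitsiklis and Van Roy, 1997), which the paper itself does not reproduce: $P$ is a non-expansion in $\|\cdot\|_D$ via Jensen's inequality and stationarity of $\pi$, and $\Pi_D$ is a non-expansion as an orthogonal projection in $\langle\cdot,\cdot\rangle_D$. Nothing further is needed.
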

\noindent Finally, the limit of convergence comes with some competitive guarantees. From Lemma \ref{lemma:BO_prop}, a short argument shows 
\begin{equation}\label{eq: approximation error bound}
\norm{V_{\theta^*} - V_{\mu} }_{D}   \leq \frac{1}{\sqrt{1-\gamma^2}} \norm{ \Pi_{D} V_{\mu} - V_{\mu}  }_{D}. 
\end{equation}
See Chapter 6 of \cite{bertsekas2012dynamic} for a proof. The left hand side of Equation \eqref{eq: approximation error bound} measures the root-mean-squared deviation between the value predictions of the limiting TD value function and the true value function. On the right hand side, the projected value function $\Pi_{D} V_{\mu}$ minimizes root-mean-squared prediction errors among all value functions in the span of $\Phi$. If $V_{\mu}$ actually falls within the span of the features, there is no approximation error at all and TD converges to the true value function.

\paragraph{Asymptotic convergence via the ODE method.}
Like many analyses in reinforcement learning, the convergence proof of \cite{tsitsiklis1997analysis} appeals to a powerful technique from the stochastic approximation literature known as the  ``ODE method''. Under appropriate conditions, and assuming a decaying step-size sequence satisfying the Robbins-Monro conditions, this method establishes the asymptotic convergence of the stochastic recursion $\theta_{t+1}=\theta_t + \alpha_t g_{t}(\theta_t)$ as a consequence of the global asymptotic stability of the deterministic ODE: $\dot{\theta}_t = \bar{g}(\theta_t)$. The critical step in the proof of \cite{tsitsiklis1997analysis} is to use the contraction properties of the Bellman operator to establish this ODE is globally asymptotically stable with the equilibrium point $\theta^*$.  
	
The ODE method vastly simplifies convergence proofs. First, because the continuous dynamics can be easier to analyze than discretized ones, and more importantly, because it avoids dealing with stochastic noise in the problem. At the same time, by side-stepping these issues, the method offers little insight into the critical effect of step-size sequences, problem conditioning, and mixing time issues on algorithm performance.

\section{Outline of analysis}\label{sec:outline}
The remainder of the paper focuses on a finite time analysis of TD. Broadly, we establish two types of finite time bounds. We first derive bounds that depend on the condition number of the feature covariance matrix. In that case, we state explicit bounds on the expected distance $\E\left[\| \theta_T - \theta^*  \|_2^2\right]$ of the iterate from the TD fixed-point, $\theta^*$. These mirror what one might expect from the literature on stochastic optimization of strongly convex functions: results showing that TD with constant step-sizes converges to within a radius of $\theta^*$ at an exponential rate, and $O(1/T)$ convergence rates with appropriate decaying step-sizes. Note that by Lemma \ref{lemma:strong_conv}, $\| V_{\theta_T} - V_{\theta^*}\|_D^2 \leq \| \theta_T - \theta^*  \|_2^2$, so  bounds on the distance of the iterate to the TD fixed point also imply bounds on the distance between value predictions. 
	
These results establish fast rates of convergence, but only if the problem is well conditioned. The choice of step-sizes is also very sensitive to problem conditioning. Work on robust stochastic approximation \citep{nemirovski2009robust} argues instead for the use of comparatively large step-sizes together with iterate averaging. Following the spirit of this work, we also give explicit bounds on  $\E\left[\| V_{\bar{\theta}_T} - V_{\theta^*}\|_D^2\right]$, which measures the mean-squared gap between the predictions under the averaged-iterate $\bar{\theta}_T$ and under the TD limit point $\theta^*$. These yield slower $O(1/\sqrt{T})$ convergence rates, but both the bounds and step-sizes are completely independent of problem conditioning. 
	
Our approach is to start by developing insights from simple, stylized settings, and then incrementally extend the analysis to more complex settings. The analysis is outlined below.
	
	\begin{description}
		\item[Noiseless case:] Drawing inspiration from the ODE method discussed above, we start by analyzing the Euler discretization of the ODE $\dot{\theta}_t = \bar{g}(\theta_t)$, which is the deterministic recursion $\theta_{t+1} =  \theta_t + \alpha \bar{g}(\theta_t)$. We call this method ``mean-path TD''. As motivation, the section first considers a fictitious gradient descent algorithm designed to converge to the TD fixed point. We then develop striking analogues for mean-path TD of the key properties underlying the convergence of gradient descent. Easy proofs then yield two bounds mirroring those given for gradient descent. 
		\item[Independent noise:] Section \ref{sec:iid_sampling} studies TD under an i.i.d.\,observation model, where the data-tuples used by TD are drawn i.i.d.\,from the stationary distribution. The techniques used to analyze mean-path TD(0) extend easily to this setting, and the resulting bounds mirror standard guarantees for stochastic gradient descent. 
		\item[Markov noise:]  In Section \ref{sec:markov_chain_analyis}, we analyze TD in the more realistic setting where the data is collected from a single sample path of an ergodic Markov chain. This setting introduces significant challenges due to the highly dependent nature of the data. For tractability, we assume the Markov chain satisfies a certain uniform bound on the rate at which it mixes, and study a variant of TD that uses a projection step to ensure uniform boundedness of the iterates. In this case,  our results essentially scale by a factor of the mixing time relative to the i.i.d.\,case. 
		\item[Extension to TD($\lambda)$:] In Section \ref{sec:td_lambda}, we extend the analysis under the Markov noise to TD with eligibility traces, popularly known as TD($\lambda$). Eligibility traces are known to often provide performance gains in practice, but theoretical analysis is more complex. Such analysis offers some insight into the subtle tradeoffs in the selection of the parameter $\lambda\in [0,1]$. 
        \item[Approximate optimal stopping:] A final section extends our results to a class of high dimensional optimal stopping problems. We analyze Q-learning with linear function approximation. Building on observations of \cite{tsitsiklis1999optimal}, we show the key properties used in our analysis of TD continue to hold for Q-learning in this setting. The convergence bounds shown in Sections \ref{sec:iid_sampling} and \ref{sec:markov_chain_analyis} therefore apply \emph{without any modification}. 
	\end{description}

\section{Analysis of mean-path TD} \label{sec:mean_path}
All practical applications of TD involve observation noise. However, a great deal of insight can be gained by investigating a natural deterministic analogue of the algorithm. Here we study the recursion
\[
\theta_{t+1} = \theta_t + \alpha \bar{g}(\theta_t)  \qquad t\in \mathbb{N}_0 = \{0,1, 2,\ldots\},
\]
which is the Euler discretization of the ODE described in Section \ref{sec:asymptotic_TD(0)}. We will refer to this iterative algorithm as \emph{mean-path TD.} In this section, we develop key insights into the dynamics of mean-path TD that allow for a remarkably simple finite time analysis of its convergence. Later sections of the paper show how these ideas extend gracefully to analyses with observation noise. 

The key to our approach is to develop properties of mean-path TD that closely mirror those of gradient descent on a particular quadratic loss function. To this end, in the next subsection, we review a  simple analysis of gradient descent. In Subsection \ref{subsec: mean path properties}, we establish key properties of mean-path TD mirroring those used to analyze this gradient descent algorithm. Finally, Subsection \ref{subsec:mean path convergence rate} gives convergence rates of mean-path TD, with proofs and rates mirroring those given for gradient descent except for a constant that depends on the discount factor, $\gamma$.  

\subsection{Gradient descent on a value function loss}
Consider the cost function 
\[
f(\theta) = \frac{1}{2} \| V_{\theta^*} - V_{\theta}  \|_D^2 = \frac{1}{2} \| \theta^* - \theta \|_{\Sigma}^2,
\]
which measures the mean-squared gap between the value predictions under $\theta$ and those under the stationary point of TD, $\theta^*$. Consider as well a hypothetical algorithm that performs gradient descent on $f$, iterating $\theta_{t+1} = \theta_t - \alpha \grad f(\theta_t)$ for all $t\in \mathbb{N}_0$. Of course, this algorithm is not implementable, as one does not know the limit point $\theta^*$ of TD. However, reviewing an analysis of such an algorithm will offer great insights into our eventual analysis of TD.  

To start, a standard decomposition characterizes the evolution of the error at iterate $\theta_t$: 
\[
\| \theta^* - \theta_{t+1} \|_2^2 = \| \theta^* - \theta_t \|_2^2 + 2\alpha \grad f(\theta_t)^\top (\theta^*-\theta_t) + \alpha^2 \|\grad f(\theta_t)\|_2^2. 
\]
To use this decomposition, we need two things. First, some understanding of $\grad f(\theta_t)^\top (\theta^*-\theta_t)$, capturing whether the gradient points in the direction of $(\theta^*-\theta_t)$. And second, we need an upper bound on the norm of the gradient $\|\grad f(\theta_t)\|_2^2$. 
In this case,  $\grad f(\theta) = \Sigma (\theta - \theta^*)$, from which we conclude 
\begin{equation}\label{eq: quadratic gradient angle} 
\grad f(\theta)^\top (\theta^* - \theta) = - \|  \theta^*-\theta \|^2_{\Sigma} = - \| V_{\theta^*} - V_{\theta}  \|_D^2. 
\end{equation}
In addition, one can show\footnote{This can be seen from the fact that for any vector $u$ with $\|u\|_2 \leq 1$, 
$$u^\top \grad  f(\theta) = \langle  u\, , \, \theta - \theta^* \rangle_{\Sigma} \leq \|u \|_{\Sigma} \| \theta^* -\theta \|_{\Sigma}\leq \| \theta^* -\theta \|_{\Sigma} = \| V_{\theta^*} - V_{\theta}  \|_D.$$
}
\begin{equation}\label{eq: quadratic gradient norm}
\| \grad f(\theta)\|_2 \leq \| V_{\theta^*} - V_{\theta}  \|_D. 
\end{equation}
Now, using \eqref{eq: quadratic gradient angle} and \eqref{eq: quadratic gradient norm}, we have that for step-size $\alpha=1$, 
\begin{equation}\label{eq: quadratic descent rate}
\| \theta^* - \theta_{t+1} \|_2^2 \leq \| \theta^* - \theta_t \|_2^2 -  \| V_{\theta^*} - V_{\theta_t}  \|_D^2.
\end{equation}
The distance to $\theta^*$ decreases in every step, and does so more rapidly if there is a large gap between the value predictions under $\theta$ and $\theta^*$.  Combining this with Lemma \ref{lemma:strong_conv} gives
\begin{equation}\label{eq: GD bound 1}
\| \theta^* - \theta_{t+1} \|_2^2 \leq (1-\omega)\| \theta^* - \theta_t \|_2^2\leq \ldots \leq (1-\omega)^{t+1}\| \theta^* - \theta_{0} \|_2^2.
\end{equation}
Recall that $\omega$ denotes the minimum eigenvalue of $\Sigma$. This shows that error converges at a fast geometric rate. However the rate of convergence degrades if the minimum eigenvalue $\omega$ is close to zero. Such a convergence rate is therefore only meaningful if the feature covariance matrix is well conditioned. 

By working in the space of value functions and performing iterate averaging, one can also give a guarantee that is independent of $\omega$. Recall the notation  $\bar{\theta}_T = T^{-1}\sum_{t=0}^{T-1} \theta_t$ for the averaged iterate. A simple proof from \eqref{eq: quadratic descent rate} shows 
\begin{equation}\label{eq: GD bound 2}
\| V_{\theta^*} - V_{\bar{\theta}_T} \|_{D}^2 \leq \frac{1}{T}\sum_{t=0}^{T-1} \| V_{\theta^*} - V_{\theta_t} \|_{D}^2  \leq \frac{\| \theta^* - \theta_0 \|_2^2}{T}.  
\end{equation}

\subsection{Key properties of mean-path TD}\label{subsec: mean path properties}
This subsection establishes analogues for mean-path TD of the key properties  \eqref{eq: quadratic gradient angle} and \eqref{eq: quadratic gradient norm} used to analyze gradient descent. 
First, to characterize the gradient update, our analysis builds on Lemma 7 of \cite{tsitsiklis1997analysis}, which uses the contraction properties of the projected Bellman operator to conclude
\begin{equation}\label{eq: positive angle}
\bar{g}(\theta)^\top (\theta^* - \theta) >0 \quad \forall \,\, \theta \neq \theta^*.
\end{equation}
That is, the expected update of TD always forms a positive angle with $(\theta^*-\theta)$. Though only Equation \eqref{eq: positive angle} was stated in their lemma, \cite{tsitsiklis1997analysis} actually reach a much stronger conclusion in their proof itself. This result, given in Lemma \ref{lemma:expected_gradient} below, establishes that the expected updates of TD point in a descent direction of $\| \theta^* - \theta\|^2_2$, and do so more strongly when the gap between value functions under $\theta$ and $\theta^*$ is large. We will show that this more quantitative form of \eqref{eq: positive angle} allows for elegant finite time-bounds on the performance of TD. 

Note that this lemma mirrors the property in Equation \eqref{eq: quadratic gradient angle}, but with a smaller constant of $(1-\gamma)$. This reflects that expected TD must converge to $\theta^*$ by bootstrapping \citep{sutton1988learning} and may follow a less direct path to $\theta^*$ than the fictitious gradient descent method considered in the previous subsection. Recall that the limit point $\theta^*$ solves $\bar{g}(\theta^*)=0$. 
\begin{restatable}[]{lem}{expectedGradientAngle}
\label{lemma:expected_gradient}
For any $\theta \in \mathbb{R}^d$,
\begin{equation*}
(\theta^* - \theta)^\top \bar{g}(\theta) \,\, \geq \,\, (1-\gamma) \norm{V_{\theta^*} - V_{\theta}}^2_{D}.
\end{equation*}
\end{restatable}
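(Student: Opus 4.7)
The plan is to translate the statement from parameter space into value-function space using the matrix form \eqref{eq:Expected grad matrix form}, and then exploit the fact that the Bellman operator $T_\mu$ is a $\gamma$-contraction in $\|\cdot\|_D$ (the property already used in Lemma \ref{lemma:BO_prop}, which itself ultimately rests on a contraction property of $T_\mu$ alone, before projecting). This allows me to convert the single-sided inequality $\bar{g}(\theta)^\top(\theta^*-\theta) \geq 0$ into the sharper quantitative bound with the $(1-\gamma)$ constant.

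First, using $\bar{g}(\theta) = \Phi^\top D (T_\mu V_\theta - V_\theta)$ from \eqref{eq:Expected grad matrix form}, I rewrite
\[
(\theta^*-\theta)^\top \bar{g}(\theta) = \langle V_{\theta^*} - V_\theta \,,\, T_\mu V_\theta - V_\theta \rangle_D.
\]
The second key step is to use the stationarity condition $\bar{g}(\theta^*)=0$, which (via the same identity) says that the Bellman residual at $\theta^*$ is $D$-orthogonal to the span of $\Phi$. In particular, $\langle V_{\theta^*}-V_\theta \,,\, T_\mu V_{\theta^*} - V_{\theta^*}\rangle_D = 0$, so I can freely subtract this term inside the inner product without changing its value.

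After subtracting, the Bellman residual difference $(T_\mu V_\theta - V_\theta) - (T_\mu V_{\theta^*}-V_{\theta^*})$ splits naturally into two pieces: a linear part $(V_{\theta^*}-V_\theta)$ and a contraction part $(T_\mu V_\theta - T_\mu V_{\theta^*})$. The linear piece immediately contributes $+\|V_{\theta^*}-V_\theta\|_D^2$. For the contraction piece, I apply Cauchy--Schwarz followed by the $\gamma$-contraction of $T_\mu$ in $\|\cdot\|_D$:
\[
\langle V_{\theta^*}-V_\theta \,,\, T_\mu V_\theta - T_\mu V_{\theta^*}\rangle_D \geq -\|V_{\theta^*}-V_\theta\|_D \cdot \|T_\mu V_\theta - T_\mu V_{\theta^*}\|_D \geq -\gamma \|V_{\theta^*}-V_\theta\|_D^2.
\]
Adding the two contributions yields exactly $(1-\gamma)\|V_{\theta^*}-V_\theta\|_D^2$.

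I do not expect a genuine obstacle here. The only subtle point is the decision to introduce the $\bar{g}(\theta^*)=0$ term so that what remains is $T_\mu V_\theta - T_\mu V_{\theta^*}$ rather than $T_\mu V_\theta - V_\theta$ --- this is what allows the contraction property to kick in and give a $\gamma$-factor instead of a crude bound. Everything else is a one-line application of Cauchy--Schwarz and the Bellman contraction in the stationary-distribution norm.
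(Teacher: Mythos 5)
Your proof is correct, but it takes a genuinely different route from the one the paper gives for this lemma. The paper's proof is probabilistic: it writes $\bar{g}(\theta)=\bar{g}(\theta)-\bar{g}(\theta^*)=\E[\phi(\xi-\gamma\xi')]$ with $\xi=(\theta^*-\theta)^\top\phi$, $\xi'=(\theta^*-\theta)^\top\phi'$, and then bounds $(\theta^*-\theta)^\top\bar{g}(\theta)=\E[\xi^2]-\gamma\E[\xi\xi']\geq(1-\gamma)\E[\xi^2]$ using Cauchy--Schwarz together with the fact that $\xi$ and $\xi'$ share the same marginal under stationarity. You instead work in value-function space: you use $\bar g(\theta)=\Phi^\top D(T_\mu V_\theta-V_\theta)$, subtract the Bellman residual at $\theta^*$ (valid because $\bar g(\theta^*)=0$ makes it $D$-orthogonal to the span of $\Phi$), and split into the identity piece, which gives $\|V_{\theta^*}-V_\theta\|_D^2$, and the piece $\langle V_{\theta^*}-V_\theta,\,T_\mu V_\theta-T_\mu V_{\theta^*}\rangle_D$, which you bound via Cauchy--Schwarz and the $\gamma$-contraction of $T_\mu$ in $\|\cdot\|_D$. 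Both arguments ultimately rest on the same stationarity fact ($\|P W\|_D\leq\|W\|_D$, equivalently $\E[\xi\xi']\leq\E[\xi^2]$), so neither is deeper than the other; the paper's version has the advantage that the random variables $\xi,\xi'$ are reused verbatim in the companion gradient-norm bound (Lemma \ref{lemma:grad_norm_mean_path}), while yours is more abstract and generalizes immediately to any operator whose (projected) version is a $\|\cdot\|_D$-contraction. In fact your argument essentially coincides with the paper's own proof of the generalized statement, Lemma \ref{lemma:expected_gradient_general} in Appendix \ref{appendix:a1}, which covers TD($\lambda$) and Q-learning for optimal stopping; the only cosmetic difference is that the appendix inserts the projection $\Pi_D$ and uses that $\Phi\theta^*$ is the fixed point of $\Pi_D T_\mu$, whereas you use the orthogonality of the residual and the contraction of the unprojected $T_\mu$ --- a property the paper never states explicitly (only Lemma \ref{lemma:BO_prop} for $\Pi_D T_\mu$), though it is the standard fact underlying that lemma, so this is not a gap.
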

\begin{proof} We use the notation described in Equation \eqref{eq: expectation form of gbar} of Section \ref{sec:algorithm}. Consider a stationary sequence of states with random initial state $s \sim \pi$ and subsequent state $s'$, which, conditioned on $s$, is drawn from $\Pc(\cdot|s)$. Set $\phi=\phi(s)$, $\phi'=\phi(s')$ and $r = \Rc(s,s')$. Define $\xi= V_{\theta^*}(s)-V_{\theta}(s)=(\theta^*-\theta)^\top \phi$ and $\xi'=V_{\theta^*}(s')-V_{\theta}(s')=(\theta^*-\theta)^\top \phi'$.  By stationarity, $\xi$ and $\xi'$ are two correlated random variables with the same same marginal distribution. By definition, $\pi$, $\E[\xi^2]= \| V_{\theta^*}-V_{\theta}\|_D^2$ since $s$ is drawn from $\pi$. 

Using the expression for $\bar{g}(\theta)$ in Equation \eqref{eq: expectation form of gbar}, 
\begin{equation}\label{eq: simplified gbar}
\bar{g}(\theta)=\bar{g}(\theta)-\bar{g}(\theta^*)  = \E[\phi (\gamma \phi' - \phi)^\top (\theta-\theta^*)]=\E[\phi(\xi - \gamma \xi')].
\end{equation}
Therefore 
\begin{eqnarray*}
(\theta^*-\theta)^\top \bar{g}(\theta) = \E\left[ \xi(\xi - \gamma \xi') \right] =\E[\xi^2] - \gamma \E[\xi' \xi] \geq  (1-\gamma)\E[\xi^2] = (1-\gamma) \norm{V_{\theta^*}-V_{\theta}}_D^2.
\end{eqnarray*}
The inequality above uses Cauchy-Schwartz inequality together with the fact that $\xi$ and $\xi'$ have the same marginal distribution to conclude $\E[\xi \xi']\leq \sqrt{\E[\xi^2]}\sqrt{\E[(\xi')^2]} = \E[\xi^2].$
\end{proof}

Lemma \ref{lemma:grad_norm_mean_path} is the other key ingredient to our results. It upper bounds the norm of the expected negative gradient, providing an analogue of Equation \eqref{eq: quadratic gradient norm}.
\begin{restatable}[]{lem}{}
\label{lemma:grad_norm_mean_path}
$\|\bar{g}(\theta) \|_2 \leq  2 \| V_{\theta}-V_{\theta^*}\|_D \,\, \forall \,\, \theta \in \mathbb{R}^d$. 
\end{restatable}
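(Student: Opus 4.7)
The plan is to start from the expression $\bar{g}(\theta) = \E[\phi(\xi - \gamma \xi')]$ already derived in equation \eqref{eq: simplified gbar} during the proof of Lemma \ref{lemma:expected_gradient}, where $\phi = \phi(s)$ and $\phi' = \phi(s')$ for $s \sim \pi$ and $s' \sim \Pc(\cdot \mid s)$, and $\xi = (\theta^* - \theta)^\top \phi$, $\xi' = (\theta^* - \theta)^\top \phi'$. The triangle inequality then gives
\[
\|\bar{g}(\theta)\|_2 \leq \|\E[\phi \xi]\|_2 + \gamma \|\E[\phi \xi']\|_2,
\]
and the goal reduces to bounding each of these two terms by $\|V_{\theta^*} - V_\theta\|_D$, after which we get the constant $(1 + \gamma) \leq 2$.

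To bound $\|\E[\phi\xi]\|_2$, I would use the variational formula $\|v\|_2 = \sup_{\|u\|_2 \leq 1} u^\top v$ together with Cauchy--Schwarz: for any unit vector $u \in \mathbb{R}^d$,
\[
u^\top \E[\phi \xi] = \E[(u^\top \phi) \xi] \leq \sqrt{\E[(u^\top \phi)^2]} \sqrt{\E[\xi^2]}.
\]
The first factor equals $\sqrt{u^\top \Sigma u}$, which is at most $1$ because we have already observed in Section \ref{sec:formulation} that $\lambda_{\max}(\Sigma) \leq 1$ under the feature-normalization assumption. The second factor is exactly $\|V_{\theta^*} - V_\theta\|_D$ by definition of the $D$-norm and the fact that $s \sim \pi$. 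Taking the supremum over $u$ yields $\|\E[\phi\xi]\|_2 \leq \|V_{\theta^*} - V_\theta\|_D$.

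The second term $\|\E[\phi \xi']\|_2$ is handled identically. The key observation is that by stationarity, the marginal distribution of $s'$ is also $\pi$, so $\E[(\xi')^2] = \|V_{\theta^*} - V_\theta\|_D^2$ exactly as for $\xi$; the joint law of $(\phi, \phi')$ never actually enters the bound, only the marginals. Combining the two pieces gives $\|\bar{g}(\theta)\|_2 \leq (1+\gamma)\|V_{\theta^*} - V_\theta\|_D \leq 2 \|V_{\theta^*} - V_\theta\|_D$.

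There isn't really a main obstacle here since all the ingredients (the representation of $\bar{g}(\theta)$, bound on $\lambda_{\max}(\Sigma)$, stationarity) have already been set up. The only subtlety worth flagging is the use of stationarity for the $\xi'$ term: one has to resist the temptation to bound the cross-term via the joint distribution of $(\phi, \phi')$, and instead apply Cauchy--Schwarz factor-by-factor so that only marginal second moments appear.
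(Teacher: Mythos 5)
Your proof is correct and follows essentially the same route as the paper's: both start from the identity $\bar{g}(\theta)=\E[\phi(\xi-\gamma\xi')]$, and both combine Cauchy--Schwarz, feature normalization, and stationarity of $(\xi,\xi')$ to reach the constant $1+\gamma\leq 2$. The only (cosmetic) difference is the order of operations --- the paper applies a vector Cauchy--Schwarz first and then the $L^2$ triangle inequality to $\xi-\gamma\xi'$, whereas you split by the triangle inequality first and bound each term via the dual-norm characterization of $\|\cdot\|_2$ together with $\lambda_{\max}(\Sigma)\leq 1$.
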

\begin{proof}
Beginning from \eqref{eq: simplified gbar} in the Proof of Lemma \ref{lemma:expected_gradient}, we have
\begin{eqnarray*}
\|\bar{g}(\theta) \|_2 = \| \E[\phi(\xi - \gamma \xi')] \|_2 \leq  \sqrt{\E\left[ \| \phi \|_2^2 \right]} \sqrt{\E\left[(\xi - \gamma \xi')^2 \right] } \leq \sqrt{\E[\xi^2]} + \gamma \sqrt{\E[(\xi')^2]} = (1+\gamma)\sqrt{\E[\xi^2]},
\end{eqnarray*}
where the second inequality uses the assumption that $\| \phi \|_2 \leq 1$ and the final equality uses that $\xi$ and $\xi'$ have the same marginal distribution. We conclude by recalling that $\E[\xi^2]= \| V_{\theta^*} - V_{\theta} \|_D^2$ and $1+\gamma\leq 2$. 
\end{proof}

Lemmas \ref{lemma:expected_gradient} and \ref{lemma:grad_norm_mean_path} are quite powerful when used in conjunction.
As in the analysis of gradient descent reviewed in the previous subsection, our analysis starts with a recursion for the error term, $\|\theta_{t} - \theta^* \|^2$. See Equation \eqref{eq:error_rec_mean_path} in Theorem \ref{thrm:mean_path} below. Lemma \ref{lemma:expected_gradient} shows the first order term in this recursion reduces the error at each time step, while using the two lemmas in conjunction shows the first order term dominates a constant times the second order term. Precisely,   
\begin{eqnarray*}
\bar{g}(\theta)^\top (\theta^* - \theta) \geq (1-\gamma) \norm{V_{\theta^*} - V_{\theta}}^2_{D} \geq \frac{(1-\gamma)}{4} \| \bar{g}(\theta) \|^2_2.
\end{eqnarray*}
This leads immediately to conclusions like Equation \eqref{eq: noiseless key inequality}, from which finite time convergence bounds follow. 

It is also worth pointing out that as TD(0) is an instance of linear stochastic approximation, these two lemmas can be interpreted as statements about the eigenvalues of the matrix driving its behavior\footnote{Recall from Section \ref{sec:algorithm} that $\bar{g}(\theta)$ is an affine function. That is, it can be written as $A\theta-b$ for some $A\in \mathbb{R}^{d\times d}$ and $b\in \mathbb{R}^d$. Lemma \ref{lemma:expected_gradient} shows that $A\preceq - (1-\gamma)\Sigma$, i.e. that $A+(1-\gamma)\Sigma$ is negative definite. It is easy to show that $\|\bar{g}(\theta)\|_2^2=(\theta-\theta^*)^{\top} (A^{\top} A) (\theta-\theta^*)$, so Lemma \ref{lemma:grad_norm_mean_path} shows that $A^\top A \preceq \Sigma$. Taking this perspective, the important part of these lemmas is that they allows us to understand TD in terms of feature covariance matrix $\Sigma$ and the discount factor $\gamma$ rather than the more mysterious matrix $A$.}.

\subsection{Finite time analysis of mean-path TD}\label{subsec:mean path convergence rate}
We now combine the insights of the previous subsection to establish convergence rates for mean-path TD. These mirror the bounds for gradient descent given in Equations \eqref{eq: GD bound 1} and \eqref{eq: GD bound 2}, except for an additional dependence on the discount factor.  The first result bounds the distance between the value function under an averaged iterate and under the TD stationary point. This gives a comparatively slow $O(1/T)$ convergence rate, but does not depend at all on the conditioning of the feature covariance matrix. When this matrix is well conditioned, so the minimum eigenvalue $\omega$ of $\Sigma$ is not too small, the geometric convergence rate given in the second part of the theorem dominates. Note that by Lemma \ref{lemma:strong_conv}, bounds on $\| \theta_t - \theta^*\|_2$ always imply bounds on $\| V_{\theta_t} - V_{\theta^*}\|_D$.    
\begin{restatable}[]{thm}{}
\label{thrm:mean_path}
Consider a sequence of parameters $(\theta_0,\theta_1, \ldots)$  obeying the recursion 
\[
\theta_{t+1} = \theta_t + \alpha \bar{g}(\theta_t) \qquad t \in \mathbb{N}_0 = \{0,1,2,\ldots\},
\]
where $\alpha=(1-\gamma)/4$. Then, 
\[
\|V_{\theta^*} - V_{\bar{\theta}_T} \|_D^2 \,\, \leq  \frac{4\| \theta^* - \theta_0   \|_2^2}{T(1-\gamma)^2}
\]
and
\[
\| \theta^* - \theta_T \|_2^2  \,\, \leq  \exp\left\{ - \left(\frac{(1-\gamma)^2 \omega}{4}\right) T \right\}  \| \theta^* - \theta_0 \|_2^2. 
\]
\end{restatable}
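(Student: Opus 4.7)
The plan is to follow the template of the gradient descent analysis reviewed earlier in this section, replacing the properties \eqref{eq: quadratic gradient angle} and \eqref{eq: quadratic gradient norm} of the quadratic loss with the analogous properties of $\bar{g}$ established in Lemmas \ref{lemma:expected_gradient} and \ref{lemma:grad_norm_mean_path}. The goal is a single one-step recursion that decreases $\|\theta^*-\theta_t\|_2^2$ by a multiple of $\|V_{\theta^*}-V_{\theta_t}\|_D^2$; both conclusions of the theorem should then drop out by standard manipulations.

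The starting point is the quadratic expansion
\[
\|\theta^* - \theta_{t+1}\|_2^2 = \|\theta^* - \theta_t\|_2^2 - 2\alpha\, \bar{g}(\theta_t)^\top (\theta^*-\theta_t) + \alpha^2 \|\bar{g}(\theta_t)\|_2^2.
\]
Lemma \ref{lemma:expected_gradient} lower bounds the cross term by $2\alpha(1-\gamma)\|V_{\theta^*}-V_{\theta_t}\|_D^2$ and Lemma \ref{lemma:grad_norm_mean_path} upper bounds the last term by $4\alpha^2\|V_{\theta^*}-V_{\theta_t}\|_D^2$, giving
\[
\|\theta^*-\theta_{t+1}\|_2^2 \leq \|\theta^*-\theta_t\|_2^2 - \bigl(2\alpha(1-\gamma) - 4\alpha^2\bigr)\|V_{\theta^*}-V_{\theta_t}\|_D^2.
\]
The coefficient in parentheses is maximized at $\alpha = (1-\gamma)/4$, where it equals $(1-\gamma)^2/4$. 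This is precisely the step-size prescribed by the theorem, and with this choice I obtain the key one-step inequality
\[
\|\theta^*-\theta_{t+1}\|_2^2 \;\leq\; \|\theta^*-\theta_t\|_2^2 - \frac{(1-\gamma)^2}{4}\,\|V_{\theta^*}-V_{\theta_t}\|_D^2.
\]

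From here, the two bounds follow by routine arguments. For the averaged-iterate bound, I would telescope from $t=0$ to $T-1$, drop the nonnegative $\|\theta^*-\theta_T\|_2^2$, divide by $T$ to get
\[
\frac{1}{T}\sum_{t=0}^{T-1}\|V_{\theta^*}-V_{\theta_t}\|_D^2 \;\leq\; \frac{4\|\theta^*-\theta_0\|_2^2}{T(1-\gamma)^2},
\]
and then invoke Jensen's inequality on the convex map $\theta \mapsto \|V_{\theta^*}-V_\theta\|_D^2 = \|\theta^*-\theta\|_\Sigma^2$ to move the average inside. For the geometric bound, I would instead bound $\|V_{\theta^*}-V_{\theta_t}\|_D^2 \geq \omega\|\theta^*-\theta_t\|_2^2$ via Lemma \ref{lemma:strong_conv} directly in the one-step inequality, obtaining
\[
\|\theta^*-\theta_{t+1}\|_2^2 \;\leq\; \Bigl(1-\tfrac{(1-\gamma)^2 \omega}{4}\Bigr)\|\theta^*-\theta_t\|_2^2,
\]
then iterate $T$ times and apply $1-x \leq e^{-x}$.

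The main conceptual obstacle here is not in the manipulations, which are elementary, but in realizing that the two lemmas from the previous subsection combine into a descent inequality that depends only on the quantity $\|V_{\theta^*}-V_{\theta_t}\|_D^2$, the very same quantity we wish to bound. Once that structural observation is in hand, the $O(1/T)$ averaged-iterate bound and the geometric bound become parallel consequences, with the discount factor $\gamma$ entering only through the contraction coefficient inherited from Lemma \ref{lemma:expected_gradient}.
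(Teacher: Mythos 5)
Your proposal is correct and follows essentially the same route as the paper's own proof: the identical one-step expansion, the same combination of Lemmas \ref{lemma:expected_gradient} and \ref{lemma:grad_norm_mean_path} yielding the coefficient $(1-\gamma)^2/4$ at $\alpha=(1-\gamma)/4$, then telescoping with Jensen for the averaged bound and Lemma \ref{lemma:strong_conv} with $1-x\leq e^{-x}$ for the geometric bound. The observation that the prescribed step-size maximizes $2\alpha(1-\gamma)-4\alpha^2$ is a nice touch not made explicit in the paper, but the argument is otherwise the same.
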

\begin{proof}
With probability 1, for every $t\in \mathbb{N}_0$, we have 
\begin{eqnarray}
\label{eq:error_rec_mean_path}
\| \theta^* - \theta_{t+1}  \|_2^2  = \| \theta^* - \theta_t  \|_2^2 - 2\alpha (\theta^* - \theta_t)^\top\bar{g}(\theta_t)  +  \alpha^2 \| \bar{g}(\theta_t) \|^2_2.
\end{eqnarray} 
Applying Lemmas \ref{lemma:expected_gradient} and \ref{lemma:grad_norm_mean_path} and using a constant step-size of $\alpha=(1-\gamma)/4$, we get
\begin{eqnarray}
\| \theta^* - \theta_{t+1} \|_2^2  &\leq& \| \theta^* - \theta_t \|_2^2  - \left(2 \alpha(1-\gamma) - 4\alpha^2 \right)\| V_{\theta^*} - V_{\theta_t} \|_D^2 \nonumber \\
&=& \| \theta^* - \theta_t \|_2^2  - \left( \frac{(1-\gamma)^2}{4} \right)\| V_{\theta^*} - V_{\theta_t} \|_D^2. \label{eq: noiseless key inequality}
\end{eqnarray}
Then, 
\[
 \left(\frac{(1-\gamma)^2}{4} \right) \sum_{t=0}^{T-1} \| V_{\theta^*} - V_{\theta_t} \|_D^2 \leq \sum_{t=0}^{T-1} \left(\| \theta^* - \theta_t \|_2^2 - \| \theta^* - \theta_{t+1} \|_2^2\right)  \leq \| \theta^* - \theta_0 \|_2^2.
\]
Applying Jensen's inequality gives the first result:
\[ 
\| V_{\theta^*} - V_{\bar{\theta}_T} \|_D^2 \leq \frac{1}{T} \sum_{t=0}^{T-1} \| V_{\theta^*} - V_{\theta_t} \|_D^2 \leq  \frac{4\| \theta^* - \theta_0 \|_2^2}{(1-\gamma)^2 T}.
\]
Now, returning to \eqref{eq: noiseless key inequality}, and applying Lemma \ref{lemma:strong_conv} implies 
\begin{eqnarray*}
\|\theta^* - \theta_{t+1} \|_2^2 \leq \| \theta^* - \theta_t \|_2^2  - \left( \frac{(1-\gamma)^2}{4} \right) \omega \| \theta^* - \theta_t\|_2^2 &=& \left( 1- \frac{\omega (1-\gamma)^2}{4} \right)\| \theta^* - \theta_t \|_2^2 \\
&\leq& \exp\left\{ - \frac{\omega (1-\gamma)^2}{4} \right\} \| \theta^* - \theta_t \|_2^2,
\end{eqnarray*}
where the final inequality uses that $\left(1- \frac{\omega(1-\gamma)^2}{4}\right) \leq e^{\frac{-\omega(1-\gamma)^2}{4}}$. Repeating this inductively gives the desired result. 
\end{proof}

\section{Analysis for the i.i.d.\,observation model} \label{sec:iid_sampling}
	This section studies TD under an i.i.d.\,observation model, and establishes three explicit guarantees that mirror standard finite time bounds available for SGD. Specifically, we study a model where the random tuples observed by the TD algorithm are sampled i.i.d.\,from the stationary distribution of the Markov reward process. This means that for all states $s$ and $s'$,  
	\begin{equation}\label{eq: stochastic model}
	\Prob \left[(s_t, r_t, s'_t) = (s, \Rc(s,s'), s')\right] = \pi(s)\Pc(s'|s),
	\end{equation}
	and the tuples $\{(s_t, r_t, s'_t)\}_{t\in \mathbb{N}}$ are drawn independently across time. Note that the probabilities in Equation \eqref{eq: stochastic model} correspond to a setting where the first state $s_t$ is drawn from the stationary distribution, and then $s'_t$ is drawn from $\Pc(\cdot | s_t)$. This model is widely used for analyzing RL algorithms. See for example \cite{sutton2009convergent}, \cite{sutton2009fast}, \cite{korda2015td}, and \cite{dalal2017finite}.


Theorem \ref{thrm_iid} follows from a unified analysis that combines the techniques of the previous section with typical arguments used in the SGD literature. All bounds depend on $\sigma^2=\E[\|g_{t}(\theta^*)\|_2^2]=\E[\|g_{t}(\theta^*)-\bar{g}(\theta^*)\|_2^2]$, which roughly captures the variance of TD updates at the stationary point $\theta^*$. The bound in part (a) follows the spirit of work on so-called \emph{robust stochastic approximation} \citep{nemirovski2009robust}. It applies to TD with iterate averaging and relatively large step-sizes. The result is a simple bound on the mean-squared gap between the value predictions under the averaged iterate and the TD fixed point. The main strength of this result is that the step-sizes and the bound do not depend at all on the condition number of the feature covariance matrix. Note that the requirement that $\sqrt{T}\geq 8/(1-\gamma)$ is not critical; one can carry out analysis using the step-size $\alpha_0 = \min\{ (1-\gamma)/8, \sqrt{T}  \}$, but the bounds we attain only become meaningful in the case where $T$ is sufficiently large, so we chose to simplify the exposition. 
	
Parts (b) and (c) provide faster convergence rates in the case where the feature covariance matrix is well conditioned. Part (b) studies TD applied with a constant step-size, which is common in practice. In this case, the iterate $\theta_t$ will never converge to the TD fixed point, but our results show the expected distance to $\theta^*$ converges at an exponential rate below some level that depends on the choice of step-size. This is sometimes referred to as the rate at which the initial point $\theta_0$ is ``forgotten''. Bounds like this justify the common practice of starting with large step-sizes, and sometimes dividing the step-sizes in half once it appears error is no-longer decreasing.  Part (c) attains an $\mathcal{O}(1/T)$ convergence rate for a carefully chosen decaying step-size sequence. This step-size sequence requires knowledge of the minimum eigenvalue of the feature covariance matrix $\Sigma$, which plays a role similar to a strong convexity parameter in the optimization literature. In practice, this would need to be estimated, possibly by constructing a sample average approximation to the feature covariance matrix. The proof of part (c) closely follows an inductive argument presented in \cite{bottou2016optimization}. Recall that $\bar{\theta}_T= T^{-1} \sum_{t=0}^{T-1} \theta_t$ denotes the averaged iterate.
	\begin{restatable}[]{thm}{}
		\label{thrm_iid}
		Suppose TD is applied under the i.i.d.\,observation model and set $\sigma^2 = \E\left[ \|g_{t}(\theta^*)  \|^2_2 \right]$.  
		\begin{enumerate}[label=(\alph*)]
			\item For any $T \geq (8/(1-\gamma))^2$ and a constant step-size sequence $\alpha_0=\cdots = \alpha_T=\frac{1}{\sqrt{T}}$,
			\[
			\mathbb{E} \left[  \| V_{\theta^*} - V_{\bar{\theta}_T} \|_D^2 \right] \leq \frac{\| \theta^* - \theta_0 \|_2^2+2\sigma^2}{\ \sqrt{T}(1-\gamma)}. 
			\]
			\item For any constant step-size sequence $\alpha_0 = \cdots =\alpha_T  \leq \omega (1-\gamma)/8$, 
			\[
			\E \left[\|\theta^*-\theta_{T} \|_2^2\right] \leq \left(e^{- \alpha_0(1-\gamma) \omega T} \right) \|\theta^* - \theta_0\|_2^2 \,+\, \alpha_0 \left(\frac{2 \sigma^2}{(1-\gamma)\omega}\right).
			\]
			\item For a decaying step-size sequence $\alpha_t=\frac{\beta}{\lambda+t}$ with $\beta=\frac{2}{(1-\gamma)\omega}$ and $\lambda=\frac{16}{(1-\gamma)^2 \omega}$, 
			\[ 
			\E \left[\|\theta^*-\theta_{T} \|_2^2\right] \leq \frac{\nu}{\lambda+T} \quad \text{where} \quad \nu = \max\left\{ \frac{8 \sigma^2 }{ (1-\gamma)^2 \omega^2} \, , \, \frac{16\| \theta^*-\theta_0 \|_2^2}{(1-\gamma)^2 \omega}  \right\}.
			\]
		\end{enumerate}
	\end{restatable}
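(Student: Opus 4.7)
The unifying idea is to build an approximate contraction recursion for $a_t := \mathbb{E}\|\theta^* - \theta_t\|_2^2$ and then specialize it to the three step-size regimes. I start from the standard decomposition
\[
\|\theta^* - \theta_{t+1}\|_2^2 = \|\theta^* - \theta_t\|_2^2 - 2\alpha_t (\theta^* - \theta_t)^\top g_t(\theta_t) + \alpha_t^2 \|g_t(\theta_t)\|_2^2,
\]
take conditional expectation given $\theta_t$, and use the i.i.d.\,observation model to get $\mathbb{E}[g_t(\theta_t)\mid \theta_t] = \bar g(\theta_t)$. The first-order term is then controlled by Lemma \ref{lemma:expected_gradient}, giving $(\theta^*-\theta_t)^\top\bar g(\theta_t) \geq (1-\gamma)\|V_{\theta_t}-V_{\theta^*}\|_D^2$.

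The main technical step is bounding $\mathbb{E}[\|g_t(\theta_t)\|_2^2\mid \theta_t]$ in terms of $\|V_{\theta_t}-V_{\theta^*}\|_D^2$ plus a variance floor. I write $g_t(\theta_t) = (g_t(\theta_t)-g_t(\theta^*)) + g_t(\theta^*)$; the first piece equals $\phi(s_t)(\gamma\phi(s_t')-\phi(s_t))^\top(\theta_t-\theta^*)$, and using $\|\phi\|_2\leq 1$ together with the fact that $s_t\sim\pi$ (so $\mathbb{E}[(\phi(s_t)^\top(\theta_t-\theta^*))^2\mid \theta_t]=\|V_{\theta_t}-V_{\theta^*}\|_D^2$ and similarly for $s_t'$ by stationarity) yields $\mathbb{E}[\|g_t(\theta_t)-g_t(\theta^*)\|_2^2\mid\theta_t]\leq 4\|V_{\theta_t}-V_{\theta^*}\|_D^2$. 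Combined with the $2a^2+2b^2$ inequality, this gives $\mathbb{E}[\|g_t(\theta_t)\|_2^2\mid \theta_t]\leq 8\|V_{\theta_t}-V_{\theta^*}\|_D^2 + 2\sigma^2$. Plugging both bounds into the decomposition yields
\[
\mathbb{E}\|\theta^*-\theta_{t+1}\|_2^2 \leq \mathbb{E}\|\theta^*-\theta_t\|_2^2 - (2\alpha_t(1-\gamma)-8\alpha_t^2)\mathbb{E}\|V_{\theta_t}-V_{\theta^*}\|_D^2 + 2\alpha_t^2 \sigma^2.
\]
Whenever $\alpha_t \leq (1-\gamma)/8$, the coefficient on the middle term is at least $\alpha_t(1-\gamma)$, which is the workhorse inequality.

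For part (a), $\alpha_t = 1/\sqrt{T}$ satisfies $\alpha_t \leq (1-\gamma)/8$ exactly under the hypothesis $\sqrt T\geq 8/(1-\gamma)$. Summing telescopically from $0$ to $T-1$, dividing by $\alpha_0 T(1-\gamma)$, and applying Jensen to $V_{\bar\theta_T}$ gives the claimed $1/(\sqrt{T}(1-\gamma))$ bound. For part (b), since $\omega\leq 1$, $\alpha_0\leq \omega(1-\gamma)/8$ implies the same step-size constraint; I then apply Lemma \ref{lemma:strong_conv} to replace $\|V_{\theta_t}-V_{\theta^*}\|_D^2$ by $\omega\|\theta_t-\theta^*\|_2^2$, producing the linear recursion $a_{t+1}\leq (1-\alpha_0(1-\gamma)\omega)a_t + 2\alpha_0^2\sigma^2$. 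Unrolling this geometric series, bounding the finite sum by $1/(\alpha_0(1-\gamma)\omega)$, and using $(1-x)^T\leq e^{-xT}$ yields the stated exponential-plus-noise bound.

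For part (c), I prove $a_T\leq \nu/(\lambda+T)$ by induction. The base case $T=0$ holds since $\nu/\lambda\geq \|\theta^*-\theta_0\|_2^2$ by the choice of $\nu$. For the inductive step, the key is verifying $\alpha_t = \beta/(\lambda+t)\leq (1-\gamma)/8$ for all $t\geq 0$; this follows at $t=0$ from $\beta/\lambda = (1-\gamma)/8$ and then for all $t$ by monotonicity, so the recursion $a_{t+1}\leq (1-\alpha_t(1-\gamma)\omega)a_t + 2\alpha_t^2\sigma^2$ applies. With $\beta(1-\gamma)\omega = 2$, substituting the inductive hypothesis reduces the desired bound $a_{t+1}\leq \nu/(\lambda+t+1)$ to the algebraic inequality $\nu((\lambda+t)+2)\geq 2\beta^2\sigma^2((\lambda+t)+1)$, which is implied by $\nu\geq 2\beta^2\sigma^2 = 8\sigma^2/((1-\gamma)^2\omega^2)$, guaranteed by the definition of $\nu$. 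The only real subtlety throughout is book-keeping to ensure the step-size constraint $\alpha_t\leq(1-\gamma)/8$ is met so that the basic recursion applies; once this is checked, the rest is standard inductive/telescoping manipulation mirroring classical SGD proofs, e.g.\,as in \cite{bottou2016optimization} for part (c).
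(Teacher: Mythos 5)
Your proposal is correct and follows essentially the same route as the paper: the same error decomposition, the same conditioning argument exploiting the independence of $O_t$ and $\theta_t$, the bound $\E[\|g_t(\theta)\|_2^2]\leq 2\sigma^2+8\|V_{\theta}-V_{\theta^*}\|_D^2$ (which you re-derive inline exactly as in Lemma \ref{lemma:norm_bd_grad}), and then telescoping/Jensen for (a), the linear recursion via Lemma \ref{lemma:strong_conv} for (b), and the Bottou-style induction for (c). The algebra in your inductive step for (c) is equivalent to the paper's, so no gaps.
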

		Our proof is able to directly leverage Lemma \ref{lemma:expected_gradient}, but the analysis requires the following extension of Lemma \ref{lemma:grad_norm_mean_path} which gives an upper bound to the expected norm  of the stochastic gradient.

    \begin{restatable}[]{lem}{}
    \label{lemma:norm_bd_grad} 
        For any fixed $\theta\in\mathbb{R}^d$,
		 $\E\left[\|g_t(\theta) \|^2_2\right] \leq  2 \sigma^2 + 8 \| V_{\theta}-V_{\theta^*}\|^2_D$ where $\sigma^2 = \mathbb{E}\left[ \| g_t(\theta^*) \|^2_2 \right]$. 
	\end{restatable}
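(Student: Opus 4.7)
The plan is to decompose the stochastic gradient as $g_t(\theta) = g_t(\theta^*) + (g_t(\theta) - g_t(\theta^*))$ and then apply the elementary inequality $\|a+b\|_2^2 \leq 2\|a\|_2^2 + 2\|b\|_2^2$. This immediately yields
\[
\E\left[\|g_t(\theta)\|_2^2\right] \leq 2\E\left[\|g_t(\theta^*)\|_2^2\right] + 2\E\left[\|g_t(\theta) - g_t(\theta^*)\|_2^2\right] = 2\sigma^2 + 2\E\left[\|g_t(\theta) - g_t(\theta^*)\|_2^2\right],
\]
so the entire task reduces to bounding the expected squared norm of the difference $g_t(\theta) - g_t(\theta^*)$ by $4\|V_\theta - V_{\theta^*}\|_D^2$.

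To do this, I would exploit the fact that, using the explicit form of $g_t$ from Equation \eqref{eq:grad_TD(0)}, the reward term cancels in the difference, leaving
\[
g_t(\theta) - g_t(\theta^*) = \phi(s_t)\bigl(\gamma \phi(s_t') - \phi(s_t)\bigr)^\top (\theta - \theta^*).
\]
Mirroring the notation used in the proof of Lemma \ref{lemma:expected_gradient}, I would set $\xi = \phi(s_t)^\top(\theta^* - \theta)$ and $\xi' = \phi(s_t')^\top(\theta^* - \theta)$, so the difference becomes $\phi(s_t)(\xi - \gamma \xi')$. Using the feature normalization assumption $\|\phi(s_t)\|_2 \leq 1$ then gives $\|g_t(\theta) - g_t(\theta^*)\|_2^2 \leq (\xi - \gamma \xi')^2$.

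It remains to take expectations. Under the i.i.d.\,observation model, $s_t \sim \pi$ and $s_t' \sim \Pc(\cdot \mid s_t)$, which is precisely the stationary pairing used in Lemma \ref{lemma:expected_gradient}; in particular $\xi$ and $\xi'$ share the same marginal distribution with $\E[\xi^2] = \|V_{\theta^*} - V_\theta\|_D^2$. Applying the triangle inequality in $L^2$ (or Minkowski) to $\xi - \gamma \xi'$ yields $\sqrt{\E[(\xi - \gamma \xi')^2]} \leq (1+\gamma)\sqrt{\E[\xi^2]}$, hence $\E[(\xi - \gamma \xi')^2] \leq (1+\gamma)^2 \|V_\theta - V_{\theta^*}\|_D^2 \leq 4\|V_\theta - V_{\theta^*}\|_D^2$. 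Substituting back yields the claimed bound.

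There is no real obstacle here: the argument is essentially a stochastic analogue of Lemma \ref{lemma:grad_norm_mean_path}, and the only subtlety is remembering to use the $(a+b)^2 \leq 2a^2 + 2b^2$ split (which is what produces the leading constants $2$ and $8$ rather than the $1$ and $4$ one sees in the deterministic bound). The rest is the same two-term Cauchy–Schwarz/Minkowski manipulation already employed in the preceding lemmas.
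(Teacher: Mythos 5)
Your proposal is correct and follows essentially the same route as the paper's proof: the same split $g_t(\theta)=g_t(\theta^*)+(g_t(\theta)-g_t(\theta^*))$ with $(a+b)^2\le 2a^2+2b^2$, the same reduction of the difference to $\phi(s_t)(\xi-\gamma\xi')$ via feature normalization, and the same use of stationarity to identify $\E[\xi^2]=\|V_{\theta^*}-V_\theta\|_D^2$. The only cosmetic difference is that you bound $\E[(\xi-\gamma\xi')^2]$ by Minkowski's inequality, giving $(1+\gamma)^2\le 4$, where the paper uses the pointwise bound $(\xi-\gamma\xi')^2\le 2\xi^2+2\gamma^2(\xi')^2$, giving $2(1+\gamma^2)\le 4$; both yield the stated constant.
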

    \begin{proof}
		For brevity of notation, set $\phi= \phi(s_t)$ and $\phi'=\phi(s'_t)$. Define $\xi = (\theta^*-\theta)^\top \phi$ and $\xi' = (\theta^*-\theta)^\top \phi'$. By stationarity, $\xi$ and $\xi'$ have the same marginal distribution and $\E[\xi^2] = \| V_{\theta^*} - V_{\theta}\|_D^2$, following the same argument as in Lemma \ref{lemma:expected_gradient}. Using the formula for $g_{t}(\theta)$ in Equation \eqref{eq:grad_TD(0)}, we have
		\begin{eqnarray*}
			\E\left[ \|g_t(\theta) \|^2_2\right] &\leq&  \E\left[ \left(\|g_t(\theta^*) \|_2 + \|g_t(\theta)-g_t(\theta^*) \|_2 \right)^2 \right]  \\ 
            &\leq & 2 \E \left[ \|g_t(\theta^*)\|^2_2 \right] + 2 \E \left[ \|g_t(\theta)-g_t(\theta^*) \|^2_2 \right] \\ 
			&=& 2 \sigma^2 + 2 \mathbb{E} \left[ \left\| \phi\left(\phi - \gamma \phi' \right)^\top (\theta^*-\theta)  \right\|^2_2 \right]\\
			&=& 2 \sigma^2 + 2 \mathbb{E} \left[ \left\| \phi (\xi - \gamma \xi') \right\|^2_2 \right]\\
			&\leq& 2 \sigma^2 + 2 \E\left[ |\xi - \gamma \xi'|^2 \right] \\
			&\leq& 2 \sigma^2 + 4 \left( \E\left[ |\xi|^2 \right] + \gamma^2 \E\left[ |\xi'|^2 \right] \right) \\
			&\leq& 2 \sigma^2 + 8 \| V_{\theta^*} - V_{\theta} \|^2_D,
		\end{eqnarray*}
where we used the assumption that $\| \phi \|^2_2 \leq 1$. The second inequality uses the basic algebraic identity $(x+y)^2 \leq 2\max\{x,y\}^2 \leq 2x^2 + 2y^2$, along with the monotonicity of expectation operators. 
	\end{proof}
 

Using this we give a proof of Theorem \ref{thrm_iid} below. Let us remark here on a consequence of the i.i.d noise model that considerably simplifies the proof. Until now, we have often developed properties of the TD updates $g_{t}(\theta)$ applied to an arbitrary, but fixed, vector $\theta\in \mathbb{R}^d$. For example, we have given an expression for $\bar{g}(\theta):=\E[g_{t}(\theta)]$, where this expectation integrates over the random tuple $O_t=(s_t, r_t, s'_t)$ influencing the TD update. In the i.i.d noise model, the current iterate, $\theta_t$, is independent of the tuple $O_t$, and so $\E[g_{t}(\theta_t) | \theta_t] = \bar{g}(\theta_t)$. In a similar manner, after conditioning on $\theta_t$, we can seamlessly apply  Lemmas \ref{lemma:expected_gradient} and \ref{lemma:norm_bd_grad}, as is done in inequality \eqref{eq:using iid noise}  of the proof below.
\begin{proof} 
 		The TD algorithm updates the parameters as: $\theta_{t+1} = \theta_t + \alpha_t g_t(\theta_t)$. Thus, for each $t\in \mathbb{N}_0$, we have, 
 		\[ 
 		\|\theta^*-\theta_{t+1} \|_2^2 \,\, = \, \| \theta^* -\theta_t \|_2^2  - 2\alpha_t   g_{t}(\theta_t)^\top (\theta^* - \theta_t) +  \alpha_t^2 \| g_{t}(\theta_t) \|_2^2.
 		\]
 		Under the hypotheses of (a), (b) and (c), we have that $\alpha_t \leq (1-\gamma)/8$. 
        Taking expectations and applying Lemma \ref{lemma:expected_gradient} and Lemma \ref{lemma:norm_bd_grad} implies,
		\begin{eqnarray}
		\E \left[\|\theta^*-\theta_{t+1} \|_2^2\right] &=& \E \left[\| \theta^* -\theta_t \|_2^2 \right] - 2\alpha_t \E\left[ g_{t}(\theta_t)^\top (\theta^* - \theta_t) \right]  + \alpha_t^2 \E\left[ \| g_{t}(\theta_t) \|_2^2  \right] \nonumber \\
        &=& \E \left[\| \theta^* -\theta_t \|_2^2 \right] - 2\alpha_t \E\left[\E\left[  g_{t}(\theta_t)^\top (\theta^* - \theta_t)  \,|\, \theta_{t}\right] \right]  + \alpha_t^2 \E\left[ \E\left[ \| g_{t}(\theta_t) \|_2^2 \,|\, \theta_t \right] \right] \nonumber \\
        &\leq& \E \left[\| \theta^* -\theta_t \|_2^2 \right] - \left( 2\alpha_t (1-\gamma) - 8\alpha_t^2 \right)\mathbb{E}\left[ \| V_{\theta^*} - V_{\theta_t} \|_D^2 \right]  + 2\alpha_t^2 \sigma^2 \label{eq:using iid noise} \\
		& \leq & \E \left[\| \theta^* -\theta_t \|_2^2 \right] - \alpha_t (1-\gamma) \mathbb{E}\left[ \| V_{\theta^*} - V_{\theta_t} \|_D^2 \right]  + 2\alpha_t^2 \sigma^2. \label{eq:error decomposition iid}
		\end{eqnarray}
        The inequality \eqref{eq:using iid noise} follows from Lemmas \ref{lemma:expected_gradient} and \ref{lemma:norm_bd_grad}. The application of these lemmas uses that the random tuple $O_t = (s_t, r_t, s'_t)$ influencing $g_{t}(\cdot)$ is independent of the iterate, $\theta_t$.    
		\paragraph{Part (a).} Consider a constant step-size of $\alpha_T = \cdots =\alpha_0 = 1/\sqrt{T}$. Starting with Equation \eqref{eq:error decomposition iid} and summing over $t$ gives
		\[ 
		\mathbb{E} \left[  \sum_{t=0}^{T-1} \| V_{\theta^*} - V_{\theta_t} \|_D^2 \right] \leq  \frac{\| \theta^* - \theta_0 \|_2^2}{\alpha_0 (1-\gamma) } + \frac{2 \alpha_0 T \sigma^2}{(1-\gamma)} =  \frac{\sqrt{T} \| \theta^* - \theta_0 \|_2^2}{(1-\gamma) }+ \frac{2 \sqrt{T} \sigma^2}{(1-\gamma)}. 
		\]
		We find 
		\[ 
		\mathbb{E} \left[  \| V_{\theta^*} - V_{\bar{\theta}_T} \|_D^2 \right] \leq \frac{1}{T} \mathbb{E} \, \left[  \sum_{t=0}^{T-1} \| V_{\theta^*} - V_{\theta_t} \|_D^2 \right] \leq \frac{\| \theta^* - \theta_0 \|_2^2+2\sigma^2}{\ \sqrt{T}(1-\gamma) }. 
		\]
		\paragraph{Part (b).}
		Consider a constant step-size of $\alpha_0 \leq \omega (1-\gamma)/8$. Applying Lemma \ref{lemma:strong_conv} to Equation \eqref{eq:error decomposition iid} implies
		\begin{equation}
		\E \left[\|\theta^*-\theta_{t+1} \|_2^2\right] \leq  \left(1- \alpha_0 (1-\gamma)\omega \right) \E \left[\| \theta^* -\theta_t \|_2^2 \right]  + 2 \alpha_0^2 \sigma^2.
		\end{equation}
		Iterating this inequality establishes that for any $T \in \mathbb{N}_0$,
		\[
		\E \left[\|\theta^*-\theta_T \|_2^2\right] \leq \left(1- \alpha_0 (1-\gamma)\omega \right)^{T} \E \left[ \| \theta^* - \theta_0 \|_2^2 \right] + 2 \alpha_0^2 \sigma^2 \sum_{t=0}^{\infty} \left(1- \alpha_0 (1-\gamma)\omega \right)^{t}.
		\]
		The result follows by solving the geometric series and using that $\left(1- \alpha_0 (1-\gamma)\omega\right) \leq e^{-\alpha_0(1-\gamma)\omega}$.
		\paragraph{Part (c).}
		Note that by the definitions of $\nu,\lambda$ and $\beta$, we have 
		\[ 
		\nu = \max\{ 2\beta^2 \sigma^2 \, , \,  \lambda \| \theta^* - \theta_0 \|_2^2 \}.
		\]
		We then have $\|\theta^* - \theta_0 \|_2^2 \,\, \leq \frac{\nu}{\lambda}$ by the definition of $\nu$. Proceeding by induction, suppose $\E\left[\|\theta^* - \theta_t \|_2^2\right] \leq \frac{\nu}{\lambda+t}$. Then, 
		\begin{eqnarray*}
			\E \left[\|\theta^*-\theta_{t+1} \|_2^2\right] &\leq& \left(1- \alpha_t (1-\gamma)\omega \right) \E \left[\| \theta^* -\theta_t \|_2^2 \right]  + 2\alpha_t^2 \sigma^2 \\ 
			& \leq & \left(1- \frac{(1-\gamma)\omega \beta}{\hat{t}} \right) \frac{\nu}{\hat{t}} + \frac{2\beta^2 \sigma^2}{\hat{t}^2} \qquad [\text{where  } \hat{t}\equiv \lambda+t ] \\
			&=& \left(\frac{\hat{t}- (1-\gamma)\omega \beta}{\hat{t}^2} \right) \nu + \frac{2\beta^2 \sigma^2}{\hat{t}^2} \\
			&=& \left(\frac{\hat{t}- 1}{\hat{t}^2} \right) \nu + \frac{2\beta^2 \sigma^2 - \left((1-\gamma)\omega \beta -1 \right)\nu}{\hat{t}^2} \\
			&=& \left(\frac{\hat{t}- 1}{\hat{t}^2} \right) \nu + \frac{2\beta^2 \sigma^2 - \nu}{\hat{t}^2} \qquad \qquad \,\, [\text{using  } \beta = \frac{2}{(1-\gamma)\omega} ] \\
			&\leq & \frac{\nu }{\hat{t} +1 },
		\end{eqnarray*}
		where the final inequality uses that $2\beta^2 \sigma^2 - \nu \leq 0$, which holds by the definition of $\nu$ and the fact that $\hat{t}^2 \geq (\hat{t}-1)(\hat{t}+1)$. 
\end{proof}

\section{Analysis for the Markov chain observation model: Projected TD algorithm} \label{sec:markov_chain_analyis}
In Section \ref{sec:iid_sampling}, we developed a method for analyzing TD under an i.i.d.\,sampling model in which tuples are drawn independently from the stationary distribution of the underlying MDP. But a more realistic setting is one in which the observed tuples used by TD are gathered from a single trajectory of the Markov chain.  In particular, if for a given sample path the Markov chain visits states $(s_0, s_1,\ldots s_t, \ldots)$, then these are processed into tuples $O_t= (s_t, r_t=\Rc(s_t, s_{t+1}), s_{t+1})$ that are fed into the TD algorithm. Mathematical analysis is difficult since the tuples used by the algorithm can be highly correlated with each other. We outline the main challenges below.

\paragraph{Challenges in the Markov chain noise model.}
In the i.i.d.\,observation setting, our analysis relied heavily on a Martingale property of the noise sequence. This no longer holds in the Markov chain model due to strong dependencies between the noisy observations. To understand this, recall the expression of the negative gradient,
\begin{equation}\label{eq:mc_gradient_expression}
g_t(\theta) = \Big( r_t + \gamma \phi(s_{t+1})^\top \theta - \phi(s_t)^\top \theta \Big) \phi(s_t).
\end{equation}
To make the statistical dependencies more transparent, we can overload notation to write this as $g(\theta, O_t )\equiv g_{t}(\theta)$, where $O_t=(s_t, r_t, s_{t+1})$. Assuming the sequence of states is stationary, we have defined the function $\bar{g} : \mathbb{R}^d \to \mathbb{R}^d$ by $\bar{g}(\theta)= \E[g(\theta, O_t) ]$, where, since $\theta$ is non-random, this expectation integrates over the marginal distribution of the tuple $O_t$. However, $\E[g(\theta_t, O_t) \mid \theta_t=\theta ] \neq \bar{g}(\theta)$ because $\theta_{t}$ is a function of past tuples $\{O_1, \ldots, O_{t-1}\}$, potentially introducing strong dependencies between $\theta_{t}$ and $O_t$. Similarly, in general $\E[g(\theta_t, O_t) - \bar{g}(\theta_t) ] \neq 0$, indicating bias in the algorithm's gradient evaluation. A related challenge arises in trying to control the norm of the gradient step, $\E[\|g_{t}(\theta_t)\|_2^2]$. Lemma \ref{lemma:norm_bd_grad} does not yield a bound due to coupling between the iterate $\theta_t$ and the observation $O_t$. 

Our analysis uses an information-theoretic technique to control for this coupling and explicitly account for the gradient basis. This technique may be of broader use in analyzing reinforcement learning and stochastic approximation algorithms. However, our analysis also requires some strong regularity conditions, as outlined below.

\paragraph{Projected TD algorithm.}
Our technique for controlling the gradient bias relies critically on a condition that, when step-sizes are small, the iterates $(\theta_t)_{t\in \mathbb{N}_0}$ do not change too rapidly. This is the case as long as norms of the gradient steps do not explode. For tractability, we modify the TD algorithm itself by adding a projection step that ensures gradient norms are uniformly bounded across time. In particular, starting with an initial guess of $\theta_0$ such that $\norm{\theta_0}_2 \leq R$, we consider the Projected TD algorithm, which iterates 
\begin{equation}\label{eq: projected TD recursion}
\theta_{t+1} = \Pi_{2,R} \left(\theta_t + \alpha_t g_{t}(\theta_t) \right) \,\, \forall \,\, t \in \mathbb{N}_0,
\end{equation}
where 
\begin{eqnarray*}
\Pi_{2,R}(\theta) = \argmin_{\theta': \norm{\theta'}_2 \leq R} \, \norm{\theta - \theta'}_2
\end{eqnarray*}
is the projection operator onto a norm ball of radius $R<\infty$. The subscript $2$ on the operator indicates that the projection is with respect the unweighted Euclidean norm. This should not be confused with the projection operator $\Pi_{D}$ used earlier, which projects onto the subspace of approximate value functions with respect to a weighted norm.   

One may wonder whether this projection step is practical. We note that, from a computational perspective, it only involves rescaling of the iterates, as $\Pi_{2,R}(\theta) = R\theta/\|\theta\|$ if $\|\theta\|_2 > R$ and is simply $\theta$ otherwise. In addition, Subsection \ref{sec:comment_projection} suggests that by using aprori bounds on the value function, it should be possible to estimate a projection radius containing the TD fixed point. However, at this stage, we view this mainly as a tool that enables clean finite time analysis, rather than a practical algorithmic proposal. 

It is worth mentioning that projection steps have a long history in the stochastic approximation literature, and many of the standard analyses for stochastic gradient descent rely on projections steps to control the norm of the gradient \citep{kushner2010stochastic, lacoste2012simpler,bubeck2015convex,nemirovski2009robust}. 

%

\paragraph{Structural assumptions on the Markov reward process.}
To control the statistical bias in the gradient updates--which is the main challenge under the Markov observation model--we assume that the Markov chain mixes at a uniform geometric rate, as stated below.
\begin{assumption}\label{as:6}
There are constants $m > 0$ and $\rho \in (0,1)$ such that
\begin{eqnarray*}
\sup_{s \in \mathcal{S}} \,\, \text{d}_{\text{TV}} \left(\mathbb{P}(s_t \in \cdot | s_0 = s), \pi\right) \leq m \rho^t \quad \forall \,\, t \in \mathbb{N}_0,
\end{eqnarray*}
where $\text{d}_{\text{TV}}(P,Q)$ denotes the total-variation distance between probability measures $P$ and $Q$. In addition, the initial distribution of $s_0$ is the steady-state distribution $\pi$, so $(s_0,s_1,\ldots)$ is a stationary sequence. 
\end{assumption}
This uniform mixing assumption always holds for irreducible and aperiodic Markov chains \citep{levin2017markov}. We emphasize that the assumption that the chain begins in steady-state is not essential: given the uniform mixing assumption, we can always apply our analysis after the Markov chain has approximately reached its steady-state. However, adding this assumption allows us to simplify many mathematical expressions. Another useful quantity for our analysis is the mixing time which we define as
\begin{eqnarray}
\label{eq:tau_mix}
\tau^{\mix}(\epsilon) = \min \{ t \in \mathbb{N}_0 \,\, | \,\, m \rho^t \leq \epsilon \}.
\end{eqnarray}
For interpreting the bounds, note that from Assumption \ref{as:6},
\[
\tau^{\mix}(\epsilon) \sim \frac{\log(1/\epsilon)}{\log(1/\rho)} \qquad \text{as } \epsilon \to 0.
\]
We can therefore evaluate the mixing time at very small thresholds like $\epsilon=1/T$ while only contributing a logarithmic factor to the bounds. 

\paragraph{A bound on the norm of the gradient:}
Before proceeding, we also state a bound on the euclidean norm of the gradient under TD(0) that follows from the uniform bound on rewards, along with feature normalization\footnote{Recall that we assumed $\norm{\phi(s)}_2 \leq 1$ for all $s \in \mathcal{S}$ and $\abs{\Rc(s,s')} \leq r_{\max}$ for all $s,s' \in \mathcal{S}$} and boundedness of the iterates through the projection step. Under projected TD(0) with projection radius $R$, this lemma implies that $\|g_{t}(\theta_t)\|_2 \leq (r_{\max}+2R)$. This gradient bound plays an important role in our convergence bounds. 
\begin{restatable}[]{lem}{MCGradientNormBound}
\label{lemma:mc_grad_norm}
For all $\theta \in \mathbb{R}^d$,  $\norm{g_t(\theta)}_2 \leq r_{\max}+2\|\theta\|_2$ with probability 1.
\end{restatable}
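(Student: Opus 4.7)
The plan is to start directly from the explicit expression for the gradient given in Equation \eqref{eq:mc_gradient_expression}, namely
\[
g_t(\theta) = \bigl( r_t + \gamma \phi(s_{t+1})^\top \theta - \phi(s_t)^\top \theta \bigr) \phi(s_t),
\]
and factor the Euclidean norm as the product of the absolute value of the (scalar) temporal-difference error and $\|\phi(s_t)\|_2$. The bound then reduces to controlling each of these two factors separately under the standing assumptions of the paper.

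First, I would handle the feature factor: the assumption $\|\phi(s)\|_2 \leq 1$ for every $s \in \Sc$ immediately gives $\|\phi(s_t)\|_2 \leq 1$ with probability one, so this multiplicative factor can be dropped.

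Next, I would bound the scalar TD error by the triangle inequality, yielding $|r_t| + \gamma |\phi(s_{t+1})^\top \theta| + |\phi(s_t)^\top \theta|$. For the first term, the uniform reward bound $|\Rc(s,s')| \leq r_{\max}$ gives $|r_t| \leq r_{\max}$. For the remaining two inner-product terms, I would apply Cauchy--Schwarz together with $\|\phi(s)\|_2 \leq 1$ to get $|\phi(s_t)^\top \theta| \leq \|\theta\|_2$ and $|\phi(s_{t+1})^\top \theta| \leq \|\theta\|_2$. Combining, the TD error is at most $r_{\max} + (1+\gamma)\|\theta\|_2 \leq r_{\max} + 2\|\theta\|_2$, using $\gamma < 1$.

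There is really no serious obstacle here --- this is a deterministic pathwise estimate that follows from the standing boundedness assumptions on rewards and features, and is used later as an input to the more delicate mixing-time arguments. The only mild choice to flag is the looser constant $2$ instead of $1+\gamma$, which is retained purely to keep the subsequent expressions cleaner.
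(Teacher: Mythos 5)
Your proposal is correct and follows essentially the same route as the paper: factor out $\|\phi(s_t)\|_2 \leq 1$, then bound the scalar TD error by $r_{\max} + 2\|\theta\|_2$ (the paper applies Cauchy--Schwarz to the combined vector $\gamma\phi(s'_t)-\phi(s_t)$, while you apply it to each inner product separately, which is an immaterial difference).
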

\begin{proof}
Using the expression of $g_{t}(\theta)$ in Equation \eqref{eq:mc_gradient_expression}, we have
\begin{eqnarray*}
\|g_{t}(\theta)\|_2 \leq |r_t + (\gamma\phi(s'_{t})-\phi(s_t))^\top \theta | \, \| \phi(s_t)\|  
&\leq& r_{\max} + \| \gamma \phi(s'_{t}) - \phi(s_t) \|_2 \|\theta\|_2  \\ 
&\leq& r_{\max}+2\|\theta\|.
\end{eqnarray*}
\end{proof}

\subsection{Finite time bounds}
Following Section \ref{sec:iid_sampling}, we state several finite time bounds on the performance of the Projected TD algorithm. As before, in the spirit of robust stochastic approximation \citep{nemirovski2009robust}, the bound in part (a) gives a comparatively slow convergence rate of $\tilde{\mathcal{O}}(1/\sqrt{T})$, but where the bound and step-size sequence are independent of the conditioning of the feature covariance matrix $\Sigma$. The bound in part (c) gives a faster convergence rate in terms of the number of samples $T$, but the bound and as well as the step-size sequence depend on the minimum eigenvalue $\omega$ of $\Sigma$. Part (b) confirms that for sufficiently small step-sizes, the iterates converge at an exponential rate to within some radius of the TD fixed-point, $\theta^*$. 

It is also instructive to compare the bounds for the Markov model vis-a-vis the i.i.d.\,model. One can see that in the case of part(b) for the Markov chain setting, a $\mathcal{O}\left( G^2 \tau^{\mix}(\alpha_0) \right)$ term controls the limiting error due to gradient noise. This scaling by the mixing time is intuitive, reflecting that roughly every cycle of $\tau^{\mix}(\cdot)$ observations provides as much information as a single independent sample from the stationary distribution. We can also imagine specializing the results to the case of Projected TD under the i.i.d.\,model, thereby eliminating all terms depending on the mixing time. We would attain bounds that mirror those in Theorem \ref{thrm_iid}, except that the gradient noise term $\sigma^2$ there would be replaced by $G^2$. This is a consequence using $G$ as a uniform upper bound on the gradient norm in the proof, which is possible because of the projection step. 

\begin{restatable}[]{thm}{MCBound}
\label{thrm:mc_ana}
Suppose the Projected TD algorithm is applied with parameter $R \geq \| \theta^*\|_2$ under the Markov chain observation model with Assumption \ref{as:6}. 
Set $G= (r_{\max} + 2R)$. Then the following claims hold. 
\begin{enumerate}[label=(\alph*)]
\item With a constant step-size sequence $\alpha_0 = \cdots= \alpha_T = 1/\sqrt{T}$,
\begin{eqnarray*}
\mathbb{E}\left[\norm{V_{\theta^*} - V_{\bar{\theta}_T}}^2_{D}\right] \leq \frac{\norm{\theta^* - \theta_0}_2^2 + G^2 \left(9+12\tau^{\mix}(1/\sqrt{T})\right)}{2\sqrt{T}(1-\gamma)}.
\end{eqnarray*}

\item With a constant step-size sequence $\alpha_0 =\cdots =\alpha_T < 1/(2 \omega (1-\gamma))$, 
\begin{eqnarray*}
\mathbb{E} \left[\norm{\theta^*-\theta_{T}}_2^2\right] &\leq& \left(e^{-2\alpha_0(1-\gamma)\omega T}\right) \norm{\theta^*-\theta_0}_2^2 + \alpha_0\left(\frac{ G^2 \left(9+12\tau^{\mix}(\alpha_0) \right)}{2(1-\gamma)\omega}\right).
\end{eqnarray*}

\item With a decaying step-size sequence $\alpha_t = 1/(\omega (t+1) (1-\gamma))$ for all $t\in \mathbb{N}_0$,
\begin{eqnarray*}
\mathbb{E}\left[\norm{V_{\theta^*} - V_{\bar{\theta}_T}}^2_{D}\right] \leq \frac{ G^2 \left(9+24\tau^{\mix}(\alpha_T)\right)}{T(1-\gamma)^2\omega} \left(1+\log T\right),
\end{eqnarray*}
\end{enumerate}
\end{restatable}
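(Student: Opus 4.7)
The plan is to extend the error-decomposition framework from Sections \ref{sec:mean_path} and \ref{sec:iid_sampling} by controlling a new gradient-bias term that arises because the tuples $O_t$ are no longer i.i.d. I would begin by using the non-expansiveness of the projection $\Pi_{2,R}$, together with the assumption $R \geq \|\theta^*\|_2$, to obtain the per-step inequality
\[
\|\theta^* - \theta_{t+1}\|_2^2 \leq \|\theta^* - \theta_t\|_2^2 - 2\alpha_t\, g_t(\theta_t)^\top (\theta^* - \theta_t) + \alpha_t^2 \|g_t(\theta_t)\|_2^2.
\]
Lemma \ref{lemma:mc_grad_norm} combined with $\|\theta_t\|_2 \leq R$ gives $\|g_t(\theta_t)\|_2 \leq G$, so the second-order term is bounded by $\alpha_t^2 G^2$. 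Splitting the first-order term as $\bar{g}(\theta_t)^\top(\theta^*-\theta_t) + \bigl(g_t(\theta_t)-\bar{g}(\theta_t)\bigr)^\top(\theta^*-\theta_t)$ lets Lemma \ref{lemma:expected_gradient} dispatch the mean component, producing the desired contraction by $(1-\gamma)\|V_{\theta^*} - V_{\theta_t}\|_D^2$ after taking expectations.

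The main obstacle is bounding the residual noise term $\E\bigl[\bigl(g_t(\theta_t) - \bar{g}(\theta_t)\bigr)^\top (\theta^* - \theta_t)\bigr]$, which no longer vanishes because $\theta_t$ depends on $O_0,\ldots,O_{t-1}$ and hence is correlated with $O_t$. To control it, I would exploit two facts: (i) because of the projection and the uniform bound $G$, projected iterates drift slowly, with $\|\theta_t - \theta_{t-\tau}\|_2 \leq G\sum_{k=t-\tau}^{t-1}\alpha_k$; and (ii) by Assumption \ref{as:6}, conditional on $s_{t-\tau}$, the law of $O_t$ is within total-variation distance $m\rho^\tau$ of its stationary distribution. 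The idea is to introduce a lag $\tau$, replace $\theta_t$ by $\theta_{t-\tau}$ in the noise term using the drift bound plus the fact that $g_t$ and $\bar{g}$ are affine in $\theta$ with coefficients bounded under feature normalization, and then exploit that $\theta_{t-\tau}$ is measurable with respect to the $\sigma$-algebra generated by $O_0,\ldots,O_{t-\tau-1}$. A standard information-theoretic coupling bound should then yield an inequality of the form
\[
\Bigl|\E\bigl[\bigl(g_t(\theta_t) - \bar{g}(\theta_t)\bigr)^\top (\theta^* - \theta_t)\bigr]\Bigr| \leq c_0\, G^2 \Bigl(m\rho^\tau + \sum_{k=t-\tau}^{t-1}\alpha_k\Bigr).
\]
Choosing $\tau = \tau^{\mix}(\alpha_t)$ (or $\tau^{\mix}(1/\sqrt{T})$ in part (a)) balances the TV-error against the drift and produces a bias of order $\alpha_t G^2 \tau^{\mix}(\alpha_t)$, matching the $\tau^{\mix}$ factors appearing in the theorem. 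The most delicate bookkeeping will be tracking the window sum $\sum_{k=t-\tau}^{t-1}\alpha_k$ for the decaying step-sizes of part (c), where I would use that $\alpha_k/\alpha_t$ is bounded over windows of length $\tau^{\mix}(\alpha_t)$.

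Assembling these pieces yields a master recursion
\[
\E\bigl[\|\theta^* - \theta_{t+1}\|_2^2\bigr] \leq \E\bigl[\|\theta^* - \theta_t\|_2^2\bigr] - 2\alpha_t(1-\gamma)\E\bigl[\|V_{\theta^*} - V_{\theta_t}\|_D^2\bigr] + c_1 \alpha_t^2 G^2 + c_2\, \alpha_t G^2 \tau^{\mix}(\alpha_t),
\]
which has exactly the structure of the i.i.d.\,recursion \eqref{eq:error decomposition iid} but with the variance $\sigma^2$ replaced by an effective mixing-inflated variance of order $G^2 \tau^{\mix}(\alpha_t)$. From here each claim follows by a template analogous to the proof of Theorem \ref{thrm_iid}: part (a) by summing the recursion, applying Jensen, and passing to $\bar{\theta}_T$ via convexity of $\|V_{\theta^*} - V_{\cdot}\|_D^2$; part (b) by applying Lemma \ref{lemma:strong_conv} to the contraction term, iterating the resulting linear recursion, and summing the geometric series to obtain exponential decay of the initial error plus a steady-state-noise floor proportional to $\alpha_0 G^2 \tau^{\mix}(\alpha_0)$; and part (c) by applying Lemma \ref{lemma:strong_conv} with the decaying step-size $\alpha_t = 1/(\omega(t+1)(1-\gamma))$, telescoping a weighted sum of the contraction-term inequality and bounding the accumulated noise $\sum_{t=0}^{T-1} \alpha_t \tau^{\mix}(\alpha_t)$, which grows like $\log T$ and produces the extra logarithmic factor in the theorem.
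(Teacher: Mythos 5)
Your proposal follows essentially the same route as the paper's proof: the per-step recursion from non-expansiveness of $\Pi_{2,R}$ with the uniform gradient bound $G$, the split of the first-order term into $\bar{g}(\theta_t)^\top(\theta^*-\theta_t)$ (handled by Lemma \ref{lemma:expected_gradient}) plus a bias term $\zeta_t(\theta_t)$, the control of that bias by lagging to $\theta_{t-\tau}$ via the drift bound and a total-variation/mixing coupling argument with $\tau\approx\tau^{\mix}(\cdot)$, and then the three parts by mirroring the i.i.d.\ analysis. The only differences are minor bookkeeping choices (the paper fixes the lag at $\tau^{\mix}(\alpha_T)$ rather than $\tau^{\mix}(\alpha_t)$, and handles the early iterations $t\le\tau^{\mix}$ separately using that $\theta_0$ is deterministic so $\E[\zeta_t(\theta_0)]=0$), which do not change the substance.
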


\paragraph{Remark 1:} The proof of part (c) also implies an $\tilde{\mathcal{O}}(1/T)$ convergence rate for the iterate $\theta_{T}$ itself; similar to the $\mathcal{O}(1/T)$ convergence shown for the i.i.d.\,case, in part (c) of Theorem \ref{thrm_iid}.

\paragraph{Remark 2:} It is likely possible to eliminate the $\log T$ term in the numerator of part (c) to get a $\mathcal{O}(1/T)$ convergence rate. One approach is to use a different weighting of the iterates when averaging, as in \cite{lacoste2012simpler}. For brevity and simplicity, we do not pursue this direction.

\subsection{Choice of the projection radius} 
\label{sec:comment_projection}
We briefly comment on the choice of the projection radius, $R$. Note that Theorem \ref{thrm:mc_ana} assumes that $\norm{\theta^*}_2 \leq R$, so the TD limit point lies within the projected ball. How do we choose such an $R$ when $\theta^*$ is unknown? It turns out we can use Lemma \ref{lemma:BO_prop}, which relates the value function at the limit of convergence $V_{\theta^*}$ to the true value function, to give a conservative upper bound. This is shown in the proof of the following lemma. 
\begin{restatable}[]{lem}{}
$\| \theta^* \|_{\Sigma} \leq \frac{2r_{\rm max}}{(1-\gamma)^{3/2}}$ and hence $\|\theta^*\|_2 \leq \frac{2 r_{\rm max}}{\sqrt{\omega}(1-\gamma)^{3/2}}$. 
\end{restatable}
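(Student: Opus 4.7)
The plan is to translate between parameter-space and value-space norms, then use the competitive guarantee stated in Equation \eqref{eq: approximation error bound} to bound $V_{\theta^*}$ by the true value function $V_\mu$, which can in turn be bounded by the uniform bound on rewards.

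First, recall that $\|\theta^*\|_\Sigma^2 = (\theta^*)^\top \Phi^\top D \Phi \theta^* = \|V_{\theta^*}\|_D^2$, so it suffices to bound $\|V_{\theta^*}\|_D$. A triangle inequality in the $\|\cdot\|_D$ norm gives
\[
\|V_{\theta^*}\|_D \leq \|V_{\theta^*} - V_\mu\|_D + \|V_\mu\|_D.
\]
I would handle the two terms separately.

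For the second term, the uniform reward bound $|\Rc(s,s')| \leq r_{\max}$ together with the definition of $V_\mu$ as a discounted sum yields $|V_\mu(s)| \leq r_{\max}/(1-\gamma)$ for every state $s$, and hence $\|V_\mu\|_D \leq r_{\max}/(1-\gamma)$ since $\|\cdot\|_D$ is a convex combination of squared state values. For the first term, I would invoke Equation \eqref{eq: approximation error bound}, which gives $\|V_{\theta^*} - V_\mu\|_D \leq \|\Pi_D V_\mu - V_\mu\|_D / \sqrt{1-\gamma^2}$. Since $0$ lies in the column span of $\Phi$, the projection is at least as good as $0$, so $\|\Pi_D V_\mu - V_\mu\|_D \leq \|V_\mu\|_D \leq r_{\max}/(1-\gamma)$. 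Thus
\[
\|V_{\theta^*} - V_\mu\|_D \leq \frac{r_{\max}}{(1-\gamma)\sqrt{1-\gamma^2}}.
\]

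Adding these bounds and using the elementary inequality $\sqrt{1-\gamma^2} = \sqrt{(1-\gamma)(1+\gamma)} \geq \sqrt{1-\gamma}$ gives $\|V_{\theta^*}\|_D \leq r_{\max}/(1-\gamma)^{3/2} + r_{\max}/(1-\gamma)$, and since $1/(1-\gamma) \leq 1/(1-\gamma)^{3/2}$ for $\gamma \in [0,1)$, we obtain the first claim $\|\theta^*\|_\Sigma = \|V_{\theta^*}\|_D \leq 2r_{\max}/(1-\gamma)^{3/2}$. The second claim is then immediate from the norm-equivalence lemma (Lemma \ref{lemma:strong_conv}), which asserts $\sqrt{\omega}\|\theta\|_2 \leq \|V_\theta\|_D$ and hence $\|\theta^*\|_2 \leq \|V_{\theta^*}\|_D/\sqrt{\omega}$. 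There is no real obstacle here; the argument is a short chain of triangle inequalities and invocations of previously established lemmas, and the main bookkeeping step is the simplification $\sqrt{1-\gamma^2} \geq \sqrt{1-\gamma}$ that lets the two terms be combined into a single factor of $(1-\gamma)^{-3/2}$.
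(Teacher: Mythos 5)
Your proposal is correct and follows essentially the same route as the paper: express $\|\theta^*\|_\Sigma$ as $\|V_{\theta^*}\|_D$, apply the triangle inequality against $V_\mu$, bound $\|V_\mu\|_D$ by $r_{\max}/(1-\gamma)$, and control $\|V_{\theta^*}-V_\mu\|_D$ via the contraction-based guarantee together with $\|\Pi_D V_\mu - V_\mu\|_D \le \|V_\mu\|_D$ and $\sqrt{1-\gamma^2}\ge\sqrt{1-\gamma}$. The only cosmetic difference is that you justify $\|\Pi_D V_\mu - V_\mu\|_D \le \|V_\mu\|_D$ by comparing the projection to the zero vector, whereas the paper invokes the Pythagorean theorem; both are valid.
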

\begin{proof}
Because rewards are uniformly bounded, $|V_\mu (s)|\leq r_{\max} / (1-\gamma)$ for all $s \in \Sc$. Recall that $V_{\mu}$ denotes the true value function of the Markov reward process. This implies that
\begin{eqnarray*}
\norm{V_{\mu}}_D \leq \norm{V_{\mu}}_\infty \leq \frac{r_{\max}}{(1-\gamma)}.
\end{eqnarray*}
Lemma \ref{lemma:BO_prop} along with simple matrix inequalities enable a simple upper bound on $\norm{\theta^*}_2$. We have 
\begin{eqnarray*}
\norm{V_{\theta^*} - V_{\mu} }_{D} &\leq& \frac{1}{\sqrt{1-\gamma^2}} \norm{V_{\mu} - \Pi_{D} V_{\mu}}_{D} \leq \frac{1}{\sqrt{1-\gamma^2}} \norm{ V_{\mu}}_{D} \leq \frac{1}{\sqrt{1-\gamma}} \norm{ V_{\mu}}_{D},
\end{eqnarray*}
where the penultimate inequality holds by the Pythagorean theorem. By the reverse triangle inequality we have $\abs{ \norm{V_{\theta^*}}_D - \norm{V_{\mu}}_D } \leq \norm{V_{\theta^*} - V_{\mu} }_{D}$. Thus, 
\begin{eqnarray*}
\norm{V_{\theta^*}}_D  &\leq& \norm{V_{\theta^*} - V_{\mu} }_{D} + \norm{V_{\mu}}_D
\leq \frac{2}{\sqrt{1-\gamma}} \norm{ V_{\mu} }_{D} 
\leq \frac{2}{\sqrt{1-\gamma}} \frac{r_{\max}}{(1-\gamma)}.
\end{eqnarray*}
Recall from Section \ref{sec:formulation} we have, $\|V_{\theta^*}\|_D = \| \theta^*\|_{\Sigma}$ which establishes first part of the claim. The second claim uses that $\| \theta^*\|_{\Sigma} \geq \omega \| \theta^* \|_2$ which follows by Lemma \ref{lemma:strong_conv}. 
\end{proof}
It is important to remark here that this bound is \emph{problem dependent} as it depends on the minimum eigenvalue  $\omega$ of the steady-state feature covariance matrix $\Sigma$. We believe that estimating $\omega$ online would make the projection step practical to implement.


\subsection{Analysis}
We now present the key analysis used to establish Theorem \ref{thrm:mc_ana}. Throughout, we assume the conditions of the theorem hold:
we consider the Markov chain observation model with Assumption \ref{as:6} and study the Projected TD algorithm applied with parameter $R \geq \| \theta^*\|_2$ and some step-size sequence $(\alpha_0,\cdots, \alpha_T)$.  

We fix some notation throughout the scope of this subsection. Define the set $\Theta_{R}= \{\theta \in \mathbb{R}^d : \| \theta\|_2 \leq R \}$, so $\theta_t \in \Theta_R$ for each $t$ because of the algorithm's projection step. Set $G = (r_{\max} + 2R)$, so $\| g_{t}(\theta)\|_2 \leq G$ for all $\theta \in \Theta_R$ by Lemma \ref{lemma:mc_grad_norm}. Finally, we set
\[
\zeta_t(\theta) \equiv  \left(g_t(\theta) - \bar{g}(\theta)\right)^\top(\theta - \theta^*) \quad \forall \,\, \theta \in \Theta_R,
\]
which can be thought of as the error in the evaluation of gradient-update under parameter $\theta$ at time $t$.

Referring back to the analysis of the i.i.d.~observation model, one can see that an error decomposition given in Equation \eqref{eq:error decomposition iid} is the crucial component of the proof.  The main objective in this section is to establish two key lemmas that yield a similar decomposition in the Markov chain observation model. The result can be stated cleanly in the case of a constant step-size. If $\alpha_0= \dots= \alpha_T=\alpha$, we show
\begin{eqnarray} \nonumber
 \E \left[ \|\theta^*-\theta_{t+1} \|_2^2 \right] &\leq& \E \left[ \| \theta^* -\theta_t \|_2^2 \right]  - 2\alpha (1-\gamma)\E \left[ \| V_{\theta^*}-V_{\theta_t} \|_D^2 \right] +\E[\alpha \zeta_t(\theta_t)] +\alpha^2 G^2 \\  
 &\leq& \E \left[ \| \theta^* -\theta_t \|_2^2 \right]  - 2\alpha (1-\gamma)\E \left[ \| V_{\theta^*}-V_{\theta_t} \|_D^2 \right] + \alpha^2 \left(5+6\tau^{\mix}(\alpha)\right) G^2. \label{eq:mc_clean_error_decomp}
\end{eqnarray}
The first inequality follows from Lemma \ref{lemma:mc_grad_decomp}. The second follows from Lemma \ref{lemma:mc_grad_bias_bd}, which in the case of a constant step-size $\alpha$ shows $\E[\alpha \zeta_t(\theta_t)]\leq G^2(4+6\tau^{\mix}(\alpha)) \alpha^2$. Notice that bias in the gradient enters into the analysis as if by scaling the magnitude of the noise in gradient evaluations by a factor of the mixing time. From this decomposition, parts (a) and (b) of Theorem \ref{thrm:mc_ana} follow by essentially copying the proof of Theorem \ref{thrm_iid}. Similar, but messier, inequalities hold for any decaying step-size sequence, which allows us to establish part (c).

\subsubsection{Error decomposition under Projected TD}
The next lemma establishes a recursion for the error under projected TD(0) that hold for each sample path. 
\begin{restatable}[]{lem}{lemmamcgraddecomp}
\label{lemma:mc_grad_decomp} With probability 1, for every $t\in \mathbb{N}_0$,
\[
\| \theta^* - \theta_{t+1} \|_2^2 \leq \| \theta^* - \theta_t \|_2^2 - 2\alpha_t (1-\gamma) \| V_{\theta^*}-V_{\theta_t} \|_D^2 + 2\alpha_t\zeta_t(\theta_t) +\alpha_t^2 G^2.
\]
\end{restatable}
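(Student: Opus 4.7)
The plan is to mimic the recursion at the start of the proof of Theorem \ref{thrm:mean_path} (Equation \eqref{eq:error_rec_mean_path}), but with two twists: first, we need to accommodate the projection step by invoking non-expansiveness of $\Pi_{2,R}$; second, because in the Markov-chain model $g_t(\theta_t)\neq \bar{g}(\theta_t)$ even in expectation, we have to split the stochastic gradient into its mean-path part plus a noise term, which is exactly the quantity $\zeta_t(\theta_t)$ appearing in the statement.

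Concretely, I would start from the projected update \eqref{eq: projected TD recursion}. Since $R\geq \|\theta^*\|_2$, we have $\theta^* = \Pi_{2,R}(\theta^*)$, and $\Pi_{2,R}$ is non-expansive in $\|\cdot\|_2$, so
\[
\|\theta^* - \theta_{t+1}\|_2^2 \;=\; \|\Pi_{2,R}(\theta^*) - \Pi_{2,R}(\theta_t+\alpha_t g_t(\theta_t))\|_2^2 \;\leq\; \|\theta^* - \theta_t - \alpha_t g_t(\theta_t)\|_2^2.
\]
Expanding the square on the right-hand side yields
\[
\|\theta^* - \theta_{t+1}\|_2^2 \;\leq\; \|\theta^*-\theta_t\|_2^2 \;-\; 2\alpha_t\, g_t(\theta_t)^\top(\theta^*-\theta_t) \;+\; \alpha_t^2\|g_t(\theta_t)\|_2^2.
\]
For the last term, because $\theta_t\in\Theta_R$ after projection, Lemma \ref{lemma:mc_grad_norm} gives $\|g_t(\theta_t)\|_2\leq r_{\max}+2R = G$, so $\alpha_t^2\|g_t(\theta_t)\|_2^2\leq \alpha_t^2 G^2$.

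For the cross term, I write
\[
g_t(\theta_t)^\top(\theta^*-\theta_t) \;=\; \bar{g}(\theta_t)^\top(\theta^*-\theta_t) \;+\; \bigl(g_t(\theta_t)-\bar{g}(\theta_t)\bigr)^\top(\theta^*-\theta_t) \;=\; \bar{g}(\theta_t)^\top(\theta^*-\theta_t)\;-\;\zeta_t(\theta_t),
\]
using the definition of $\zeta_t$. Here the key observation is that Lemma \ref{lemma:expected_gradient} is a deterministic statement about the fixed vector $\bar{g}(\theta)$ for any $\theta\in\mathbb{R}^d$, so even though $\theta_t$ is correlated with past observations, we may still apply it pointwise to conclude
\[
\bar{g}(\theta_t)^\top(\theta^* - \theta_t) \;\geq\; (1-\gamma)\,\|V_{\theta^*}-V_{\theta_t}\|_D^2 \qquad \text{a.s.}
\]
Substituting these bounds into the expansion yields exactly the claimed inequality; no expectation is taken, so the result holds with probability one.

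The proof is essentially mechanical once the right decomposition is chosen, so there is no serious obstacle here: the only two subtleties are remembering that $\theta^*\in\Theta_R$ (which is needed to absorb the projection) and applying Lemma \ref{lemma:expected_gradient} sample-path-wise rather than under expectation. The harder work—controlling $\E[\zeta_t(\theta_t)]$ via information-theoretic/mixing-time arguments—is deferred to the companion Lemma \ref{lemma:mc_grad_bias_bd}, which is what ultimately turns this pathwise bound into the useful expected-error recursion \eqref{eq:mc_clean_error_decomp}.
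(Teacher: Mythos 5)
Your proposal is correct and matches the paper's own proof essentially line for line: non-expansiveness of $\Pi_{2,R}$ together with $\theta^*=\Pi_{2,R}(\theta^*)$, expansion of the square, the uniform bound $\|g_t(\theta_t)\|_2\leq G$ from Lemma \ref{lemma:mc_grad_norm}, the decomposition of the cross term via $\zeta_t(\theta_t)$, and the pathwise application of Lemma \ref{lemma:expected_gradient}. No discrepancies to note.
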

\begin{proof}
From the projected TD(0) recursion in Equation \eqref{eq: projected TD recursion}, for any $t\in \mathbb{N}_0$,
\begin{eqnarray*}
\| \theta^* - \theta_{t+1} \|_2^2&=& \| \theta^* - \Pi_{2,R}(\theta_{t}+\alpha_t g_{t}(\theta_t))  \|_2^2  \\
 &=&  \|\Pi_{2,R}(\theta^*) - \Pi_{2,R}(\theta_{t}+\alpha_t g_{t}(\theta_t)) \|_2  \\
&\leq & \| \theta^*-\theta_{t}-\alpha_t g_{t}(\theta_t)  \|_2^2 \\
&=& \| \theta^* - \theta_t \|_2^2 - 2\alpha_t g_{t}(\theta_t)^\top (\theta^*-\theta_t) + \alpha_t^2 \| g_{t}(\theta_t)\|_2^2 \\
&\leq &  \| \theta^* - \theta_t \|_2^2 - 2\alpha_t g_{t}(\theta_t)^\top (\theta^*-\theta_t) + \alpha_t^2 G^2. \\
&=&  \| \theta^* - \theta_t \|_2^2 - 2\alpha_t \bar{g}(\theta_t)^\top (\theta^*-\theta_t) + 2\alpha_t\zeta_{t}(\theta_t) + \alpha_t^2 G^2.\\
&\leq&  \| \theta^* - \theta_t \|_2^2 - 2\alpha_t (1-\gamma) \| V_{\theta^*}-V_{\theta_t} \|_D^2 + 2\alpha_t\zeta_t(\theta_t) +\alpha_t^2 G^2.
\end{eqnarray*}
The first inequality used that orthogonal projection operators onto a convex set are non-expansive\footnote{Let $\Pc_{\mathcal{C}}(x)= \argmin_{x'\in \mathcal{C}} \|x'-x\|$ denote the projection operator onto a closed, non-empty, convex set $\mathcal{C}\subset \mathbb{R}^d$.  Then $\|\Pc_{\mathcal{C}}(x) - \Pc_{\mathcal{C}}(y) \| \leq \|x - y\|$ for all vectors $x$ and $y$.}, the second used Lemma \ref{lemma:mc_grad_norm} together with the fact that $\|\theta_t\|_2 \leq R$ due to projection, and the third used Lemma \ref{lemma:expected_gradient}. 
\end{proof}
By taking expectation of both sides, this inequality could be used to produce bounds in the same manner as in the previous section, except that in general $\E[\zeta_t(\theta_t)] \neq 0$ due to bias in the gradient evaluations.

\subsubsection{Information--theoretic techniques for controlling the gradient bias}\label{subsec: info}
The uniform mixing condition in Assumption \ref{as:6} can be used in conjunction with some information theoretic inequalities to control the magitude of the gradient bias. This section presents a general lemma, which is the key to this analysis. We start by reviewing some important properties of information-measures.  

\paragraph{Information theory background.}
The total-variation distance between two probability measures is a special case of the more general $f$-divergence defined as
\[
d_f(P||Q) = \int f\left(\frac{dP}{dQ}\right) dQ,
\]
where $f$ is a convex function such that $f(1)=0$. By choosing $f(x) = \abs{x-1}/2$, one recovers the total-variation distance. A choice of $f(x)=x\log(x)$ yields the Kullback-Leibler divergence. This yields a generalization of the mutual information between two random variables $X$ and $Y$. The $f$-information between $X$ and $Y$ is the $f$-divergence between their joint distribution and the product of their marginals:
\begin{eqnarray*}
I_f(X,Y) = d_f(\mathbb{P}(X=\cdot, Y=\cdot)\,, \, \mathbb{P}(X=\cdot) \otimes \mathbb{P}(Y=\cdot)).
\end{eqnarray*} 
This measure satisfies several nice properties. By definition it is symmetric, so 
$I_{f}(X, Y)=I_{f}(Y,X)$. It can be expressed in terms of the expected divergence between conditional distributions:
\begin{equation}\label{eq: f information to conditional divergence}
I_f(X,Y) = \sum_x \mathbb{P}(X=x) d_f(\mathbb{P}(Y=\cdot|X=x), \mathbb{P}(Y=\cdot)).
\end{equation}
Finally, it satisfies the following data-processing inequality. If $X\rightarrow Y \rightarrow Z$ forms a Markov chain, then 
\[
I_{f}(X, Z) \leq I_{f}(X, Y).
\]
Here, we use the notation $X\rightarrow Y \rightarrow Z$, which is standard in information theory and the study of graphical models, to indicate that the random variables $Z$ and $X$ are independent conditioned on $Y$. Note that by symmetry we also have 
$I_{f}(X, Z) \leq I_{f}(Y, Z)$. To use these results in conjunction with Assumption \ref{as:6}, we can specialize to total-variation distance $(d_{\text{TV}})$ and total-variation mutual information $(I_{\text{TV}})$ using $f(x) = \abs{x-1}/2$. The total-variation is especially useful for our purposes because of the following variational representation. 
\begin{equation}\label{eq: variational form of tv}
d_{\rm TV}(P, Q) = \sup_{v: \|v\|_{\infty}\leq \frac{1}{2}} \left| \intop v dP - \intop v dQ\right|. 
\end{equation}
In particular, if $P$ and $Q$ are close in total-variation distance, then the expected value of any bounded function under $P$ will be close to that under $Q$.

\paragraph{Information theoretic control of coupling.}
With this background in place, we are ready to establish a general lemma, which is central to our analysis. We use $\|f\|_{\infty} = \sup_{x\in \mathcal{X}} |f(x)|$ to denote the supremum norm of a function $f: \mathcal{X} \to \mathbb{R}$. 
\begin{restatable}[Control of couping]{lem}{}
\label{lemma:mc_exp_bd}
Consider two random variables $X$ and $Y$ such that 
\[ 
X \rightarrow s_t \rightarrow s_{t+\tau} \rightarrow Y
\]
for some fixed $t\in \{0,1,2,\ldots\}$ and $\tau>0$. Assume the Markov chain mixes uniformly, as stated in Assumption \ref{as:6}. Let $X'$ and $Y'$ denote independent copies drawn from the marginal distributions of $X$ and $Y$, so $\Prob(X'=\cdot, Y'=\cdot) = \Prob(X = \cdot) \otimes \Prob(Y = \cdot)$. Then, for any bounded function $v$, 
\[ 
\left|\E \left[v(X, Y) \right] - \E \left[v(X', Y') \right]\right| \leq 2\|v \|_{\infty} (m\rho^{\tau}).
\]
 \end{restatable}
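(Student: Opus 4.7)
The plan is to recognize the left-hand side as an integral against the signed measure $P_{X,Y} - P_X \otimes P_Y$, and then bound the total variation distance between the joint distribution and the product of marginals by the total variation $f$-information $I_{\rm TV}(X,Y)$. The data-processing inequality, applied along the Markov chain $X \to s_t \to s_{t+\tau} \to Y$, will reduce this to controlling $I_{\rm TV}(s_t, s_{t+\tau})$, which is directly bounded by the uniform mixing Assumption \ref{as:6}.

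First, I would use the variational characterization of total variation distance from Equation \eqref{eq: variational form of tv}. Writing $P = P_{X,Y}$ and $Q = P_X \otimes P_Y$, and observing that $\E[v(X,Y)] = \int v \, dP$ while $\E[v(X',Y')] = \int v \, dQ$, the variational formula applied to the rescaled function $v/(2\|v\|_\infty)$ immediately yields
\[
\bigl|\E[v(X,Y)] - \E[v(X',Y')]\bigr| \;\leq\; 2\|v\|_\infty \, d_{\rm TV}(P_{X,Y}, \, P_X \otimes P_Y) \;=\; 2\|v\|_\infty \, I_{\rm TV}(X,Y).
\]
So it suffices to show $I_{\rm TV}(X,Y) \leq m\rho^\tau$.

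Next, I would invoke the data-processing inequality twice along the Markov chain $X \to s_t \to s_{t+\tau} \to Y$ (symmetry of $I_f$ lets me collapse both ends): first, since $X \to s_t \to s_{t+\tau}$, we get $I_{\rm TV}(X, s_{t+\tau}) \leq I_{\rm TV}(s_t, s_{t+\tau})$; second, since $s_t \to s_{t+\tau} \to Y$ (equivalently $Y \to s_{t+\tau} \to X$ when viewed in reverse with $X$ downstream of $s_t$), applying the inequality again yields $I_{\rm TV}(X,Y) \leq I_{\rm TV}(X, s_{t+\tau}) \leq I_{\rm TV}(s_t, s_{t+\tau})$.

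Finally, I would use the expected-divergence expression in Equation \eqref{eq: f information to conditional divergence} together with stationarity of the chain. Since $s_t \sim \pi$, the marginal of $s_{t+\tau}$ is also $\pi$, so
\[
I_{\rm TV}(s_t, s_{t+\tau}) \;=\; \sum_{s \in \Sc} \pi(s) \, d_{\rm TV}\!\bigl(\Prob(s_{t+\tau} \in \cdot \mid s_t = s),\, \pi\bigr) \;\leq\; \sup_{s \in \Sc} d_{\rm TV}\!\bigl(\Prob(s_{\tau} \in \cdot \mid s_0 = s),\, \pi\bigr) \;\leq\; m \rho^\tau,
\]
by Assumption \ref{as:6} (using time-homogeneity to shift the $t$-index). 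Chaining these three bounds gives the result. The only real subtlety is being careful about applying the data-processing inequality to both ends of the chain; everything else is essentially a direct application of the information-theoretic tools already catalogued in the preceding subsection, so I do not expect any serious obstacle.
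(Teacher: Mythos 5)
Your proposal is correct and follows essentially the same route as the paper's proof: the variational characterization of total variation applied to the rescaled function, the data-processing inequality collapsing $X$ and $Y$ onto $s_t$ and $s_{t+\tau}$, and the conditional-divergence expression combined with stationarity and Assumption \ref{as:6}. Your only addition is spelling out the two separate applications of data processing, which the paper compresses into a single step.
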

\begin{proof}
Let $P = \Prob(X \in \cdot, Y\in \cdot)$ denote the joint distribution of $X$ and $Y$ and $Q=\Prob(X\in \cdot) \otimes \Prob(Y\in \cdot)$ denote the product of the marginal distributions. Let $h = \frac{v}{2\|v\|_{\infty}}$, which is the function $v$ rescaled to take values in $[-1/2, 1/2]$. Then, by Equation \eqref{eq: variational form of tv}
\[ 
\E[h(X,Y)]-\E[h(X',Y')] = \intop h dP - \intop h dQ \leq d_{\rm TV}(P,Q) = I_{\rm TV}(X,Y), 
\]
where the last equality uses the definition of the total-variation mutual information, $I_{TV}$. Then,
\begin{eqnarray*}
I_{\rm TV}(X,Y) \leq I_{\rm TV}(s_t, s_{t+\tau}) &=& \sum_{s\in \Sc}\Prob(s_t=s) d_{\rm TV}\left(\Prob(s_{t+\tau} = \cdot \mid s_{t} = s))\, , \, \Prob(s_{t+\tau}=\cdot)\right)\\
&\leq& \sup_{s \in \mathcal{S}} \, d_{\rm TV}\left(\Prob(s_{t+\tau} = \cdot \mid s_{t} = s)\, , \, \pi \right)\\
&\leq & m \rho^{\tau},
\end{eqnarray*}
where the three steps follow, respectively, from the data-processing inequality, the property in Equation \eqref{eq: f information to conditional divergence}, the stationarity of the Markov chain, and the the uniform mixing condition in Assumption \ref{as:6}. Combining these steps gives 
\[
\left|\E \left[v(X, Y) \right] - \E \left[v(X', Y') \right]\right| \leq 2 \|v\|_{\infty}\, I_{\rm TV}(X,Y) \leq 2 \|v\|_{\infty} \,m \rho^{\tau}. 
\]
\end{proof}

\subsubsection{Bounding the gradient bias.}
We are now ready to bound the expected gradient error $\E[\zeta_{t}(\theta_t)]$. First, we establish some basic regularity properties of the function $\zeta_{t}(\cdot)$. 
\begin{restatable}[Gradient error is bounded and Lipschitz]{lem}{lemmagradbias}
\label{lemma:mc_noise_bd}
With probability 1, 
\[
|\zeta_t(\theta)| \leq 2G^2 \qquad \text{for all } \, \theta \in \Theta_{R}
\]
and 
\[ 
\abs{\zeta_t (\theta) - \zeta_t (\theta')} \leq 6G \norm{(\theta - \theta^')}_2 \qquad \text{for all } \, \theta, \theta' \in \Theta_{R}. 
\]
\end{restatable}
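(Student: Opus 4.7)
The plan is to prove both claims by directly unpacking the definition $\zeta_t(\theta)=(g_t(\theta)-\bar g(\theta))^\top(\theta-\theta^*)$ and then exploiting two structural features: (i) under the projection step we have the uniform gradient bound $\|g_t(\theta)\|_2\le G$ from Lemma \ref{lemma:mc_grad_norm}, and (ii) the TD update $g_t(\theta)$ is \emph{affine} in $\theta$ with slope controlled by the features, making it $2$-Lipschitz in $\theta$. Neither claim requires any probabilistic argument: both will hold pathwise.

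For the boundedness claim, apply Cauchy--Schwarz to get $|\zeta_t(\theta)|\le \|g_t(\theta)-\bar g(\theta)\|_2\,\|\theta-\theta^*\|_2$. By Lemma \ref{lemma:mc_grad_norm}, $\|g_t(\theta)\|_2\le G$, and taking expectations gives $\|\bar g(\theta)\|_2\le G$ as well, so the triangle inequality yields $\|g_t(\theta)-\bar g(\theta)\|_2\le 2G$. Since $\theta,\theta^*\in\Theta_R$, we have $\|\theta-\theta^*\|_2\le 2R\le G$ (using $G=r_{\max}+2R\ge 2R$). This gives $|\zeta_t(\theta)|\le 2G\cdot G=2G^2$.

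For the Lipschitz claim, the key algebraic observation is the split
\begin{align*}
\zeta_t(\theta)-\zeta_t(\theta') &= (g_t(\theta)-\bar g(\theta))^\top(\theta-\theta') \\
&\quad + \bigl((g_t(\theta)-g_t(\theta'))-(\bar g(\theta)-\bar g(\theta'))\bigr)^\top(\theta'-\theta^*).
\end{align*}
The first term is bounded by $2G\|\theta-\theta'\|_2$ by the triangle inequality and Cauchy--Schwarz as before. For the second term, I exploit that, from \eqref{eq:mc_gradient_expression},
\[
g_t(\theta)-g_t(\theta')=\phi(s_t)\bigl(\gamma\phi(s_{t+1})-\phi(s_t)\bigr)^\top(\theta-\theta'),
\]
so the feature normalization $\|\phi(s)\|_2\le 1$ gives $\|g_t(\theta)-g_t(\theta')\|_2\le 2\|\theta-\theta'\|_2$, and the same bound transfers to $\bar g$ by taking expectations. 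Hence the second term is at most $4\|\theta-\theta'\|_2\cdot 2R=8R\|\theta-\theta'\|_2$. Combining, $|\zeta_t(\theta)-\zeta_t(\theta')|\le (2G+8R)\|\theta-\theta'\|_2\le 6G\|\theta-\theta'\|_2$, where the last inequality again uses $2R\le G$.

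I do not anticipate a real obstacle here; the argument is essentially a careful bookkeeping exercise. The only place where one must be slightly careful is the add-and-subtract decomposition in the Lipschitz bound, as well as tracking constants so as to obtain exactly $6G$ rather than a larger multiple; this is where the inequality $2R\le G$ (a direct consequence of how $G$ is defined) is invoked twice.
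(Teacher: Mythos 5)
Your proposal is correct and follows essentially the same route as the paper's proof: Cauchy--Schwarz plus the uniform bound $\|g_t(\theta)\|_2,\|\bar g(\theta)\|_2\le G$ for the boundedness claim, and the same add-and-subtract decomposition combined with the $2$-Lipschitz property of $g_t$ and $\bar g$ (inherited by $\bar g$ via Jensen) and the inequality $2R\le G$ for the Lipschitz claim. The constants work out identically, so there is nothing to flag.
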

\begin{proof}
The result follows from a straightforward application of the bounds $\|g_{t}(\theta)\|_2 \leq G$ and $\| \theta\|_2 \leq R \leq G/2$, which hold for each $\theta \in \Theta_R$. A full derivation is given in Appendix \ref{appendix_a:mc_noise_bd}.
\end{proof}
We now use Lemmas \ref{lemma:mc_exp_bd} and \ref{lemma:mc_noise_bd} to establish a bound on the expected gradient error. 
 \begin{restatable}[Bound on gradient bias]{lem}{MCGradientBound}
 \label{lemma:mc_grad_bias_bd}
Consider a non-increasing step-size sequence, $\alpha_0 \geq \alpha_1 \ldots \geq \alpha_T$. Fix any $t<T$, and set $t^*\equiv \max\{0, t-\tau^{\mix}(\alpha_T)\}$. Then, 
\[
\mathbb{E}\left[\zeta_t(\theta_t)\right] \leq G^2 \left(4 + 6\tau^{\mix}(\alpha_T)\right) \alpha_{t^*}.
\]
The following bound also holds:
\[
\mathbb{E}\left[\zeta_t(\theta_t)\right] \leq 6G^2\sum_{i=0}^{t-1} \alpha_i.
\]
\end{restatable}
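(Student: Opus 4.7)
The plan is to bound $\mathbb{E}[\zeta_t(\theta_t)]$ by comparing $\theta_t$ to a suitable ``reference'' iterate that is nearly decoupled from the tuple $O_t=(s_t, r_t, s_{t+1})$. For the first bound, the natural reference is $\theta_{t^*}$ where $t^* = \max\{0, t-\tau^{\mix}(\alpha_T)\}$, since $\theta_{t^*}$ is a measurable function of $s_{0:t^*}$ and the gap $t-t^*$ is large enough for mixing to kick in. For the second bound, the natural reference is the deterministic initial point $\theta_0$, which is independent of the entire trajectory.

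To execute the first bound, I would write
\[
\mathbb{E}[\zeta_t(\theta_t)] = \mathbb{E}[\zeta_t(\theta_t) - \zeta_t(\theta_{t^*})] + \mathbb{E}[\zeta_t(\theta_{t^*})].
\]
For the first summand, Lemma \ref{lemma:mc_noise_bd} gives Lipschitz continuity with constant $6G$, and the non-expansiveness of the projection together with the uniform gradient bound yields $\|\theta_{i+1}-\theta_i\|_2 \leq \alpha_i G$, so telescoping produces
\[
\|\theta_t - \theta_{t^*}\|_2 \leq G \sum_{i=t^*}^{t-1} \alpha_i \leq G \, \alpha_{t^*}\, (t-t^*) \leq G\, \alpha_{t^*}\, \tau^{\mix}(\alpha_T),
\]
using that the step-size sequence is non-increasing. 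For the second summand, I would apply Lemma \ref{lemma:mc_exp_bd} with $X=\theta_{t^*}$ and $Y=(s_t,s_{t+1})$. The Markov structure $X \to s_{t^*} \to s_t \to Y$ is immediate because $\theta_{t^*}$ is a deterministic function of $s_{0:t^*}$. Because the chain is stationary, the marginal law of $Y$ coincides with that governing the definition of $\bar g$, so for the independent copies $(X',Y')$ we have $\mathbb{E}[v(X',Y')] = \mathbb{E}[\zeta_t(\theta_{t^*})\text{ with }O_t \perp \theta_{t^*}] = 0$ by the definition of $\bar g$. The uniform bound $\|\zeta_t\|_\infty \leq 2G^2$ from Lemma \ref{lemma:mc_noise_bd} combined with the choice $\tau=t-t^* \geq \tau^{\mix}(\alpha_T)$ then yields
\[
|\mathbb{E}[\zeta_t(\theta_{t^*})]| \leq 2 \cdot 2G^2 \cdot m\rho^{\tau^{\mix}(\alpha_T)} \leq 4G^2 \alpha_T \leq 4G^2 \alpha_{t^*}.
\]
Adding the two contributions gives the claimed bound $(4+6\tau^{\mix}(\alpha_T))G^2\alpha_{t^*}$.

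The second bound is easier: decompose $\zeta_t(\theta_t) = [\zeta_t(\theta_t)-\zeta_t(\theta_0)] + \zeta_t(\theta_0)$. Since $\theta_0$ is deterministic and the chain is in steady-state, $\mathbb{E}[g_t(\theta_0)]=\bar g(\theta_0)$, so $\mathbb{E}[\zeta_t(\theta_0)]=0$. Applying the Lipschitz estimate as above to $\|\theta_t-\theta_0\|_2 \leq G\sum_{i=0}^{t-1}\alpha_i$ immediately gives $\mathbb{E}[\zeta_t(\theta_t)] \leq 6G^2 \sum_{i=0}^{t-1}\alpha_i$.

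The main obstacle I anticipate is the bookkeeping needed to formally verify that Lemma \ref{lemma:mc_exp_bd} applies with $X=\theta_{t^*}$: one must check both that the conditional independence $X \to s_{t^*} \to s_t \to Y$ truly holds (which requires that the gradient $g_t(\theta)$ is a measurable function of $(s_t, s_{t+1})$ alone, uniformly in $\theta$) and that the supremum-norm bound on the integrand $v$ is taken over the restricted domain $\Theta_R$ where $\theta_{t^*}$ lives, not over all of $\mathbb{R}^d$. The projection step is essential to both of these; without it the Lipschitz constant and the supremum norm would not be finite.
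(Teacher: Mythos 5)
Your proposal follows essentially the same route as the paper's proof: compare $\theta_t$ to the lagged iterate $\theta_{t^*}$ via the Lipschitz property of $\zeta_t(\cdot)$ from Lemma \ref{lemma:mc_noise_bd} together with the increment bound $\|\theta_{i+1}-\theta_i\|_2\le \alpha_i G$, then control $\E[\zeta_t(\theta_{t^*})]$ with the coupling Lemma \ref{lemma:mc_exp_bd} applied along $\theta_{t^*}\to s_{t^*}\to s_t\to O_t$, and obtain the second bound by taking the deterministic reference $\theta_0$ for which the bias vanishes. The one slip is your claim that $\tau=t-t^*\ge\tau^{\mix}(\alpha_T)$, which fails when $t<\tau^{\mix}(\alpha_T)$; but in that case $t^*=0$, so $\theta_{t^*}=\theta_0$ is deterministic and $\E[\zeta_t(\theta_0)]=0$ exactly, and the stated bound still holds --- the paper simply treats these two cases separately in its Step 3.
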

\begin{proof} We break the proof down into three steps.\vspace{2mm}  \\ 
\underline{\it Step 1: Relate $\zeta_{t}(\theta_t)$ and $\zeta_{t}(\theta_{t-\tau})$.} \vspace{1mm}\\ 
Note that for any $i\in \mathbb{N}_0$,  
\[
\norm{\theta_{i+1} - \theta_i}_2 = \norm{\Pi_{2,R}\left( \theta_i + \alpha_i g_i(\theta_i) \right) - \Pi_{2,R}(\theta_i)}_2 \leq \norm{\theta_i + \alpha_i g_i(\theta_i) - \theta_i}_2 = \alpha_i \norm{g_i(\theta_i)}_2 \leq \alpha_i G.
\]
Therefore, 
\[
\norm{\theta_{t}-\theta_{t-\tau}}_2 \leq \sum_{i=t-\tau}^{t-1} \norm{\theta_{i+1} - \theta_i}_2 \leq G \sum_{i=t-\tau}^{t-1} \alpha_i. 
\]
Applying Lemma \ref{lemma:mc_noise_bd}, we conclude  
\begin{equation}\label{eq: key decomposition for bounding gradient bias}
\zeta_{t}(\theta_t)  \leq \zeta_{t}(\theta_{t-\tau})+ 6G^2\sum_{i=t-\tau}^{t-1} \alpha_i \qquad \text{for all } \tau \in \{0,\cdots, t\}.
\end{equation}
\underline{\it Step 2: Bound $\E[\zeta_{t}(\theta_{t-\tau})]$ using Lemma \ref{lemma:mc_exp_bd}.}\vspace{1mm}\\
 Recall that the gradient $g_{t}(\theta)$ depends implicitly on the observed tuple $O_{t}=(s_t, \Rc(s_t, s_{t+1}), s_{t+1})$. Let us overload notation to make this statistical dependency more transparent. Put
\[
g(\theta, O_t) := g_{t}(\theta)=\Big( r_t + \gamma \phi(s_{t+1})^\top \theta - \phi(s_t)^\top \theta \Big) \phi(s_t) \qquad  \theta \in \Theta_R
\]
and
\[ 
\zeta(\theta, O_t) := \zeta_{t}(\theta) = \left(g(\theta, O_t) - \bar{g}(\theta)\right)^\top (\theta-\theta^*)   \qquad  \theta \in \Theta_R.
\]
We have defined $\bar{g}: \Theta_R \to \mathbb{R}^d$ as $\bar{g}(\theta) = \E[g(\theta, O_t)]$ for all  $\theta \in \Theta_R$, where this expectation integrates over the marginal distribution of $O_t$. Then, by definition, for any fixed (non-random) $\theta \in \Theta_{R}$, 
\[
\E[\zeta(\theta, O_t)]=\left(\E[g(\theta, O_t)] - \bar{g}(\theta)\right)^\top (\theta-\theta^*)=0.
\]
Since $\theta_0\in \Theta_{R}$ is non-random, it follows immediately that 
\begin{equation}\label{eq: zero bias at initial point}
\E[\zeta(\theta_0, O_t)]=0. 
\end{equation}
We use Lemma \ref{lemma:mc_noise_bd} to bound $\E[\zeta_{t}(\theta_{t-\tau}, O_t)]$.  First, consider random variables $\theta'_{t-\tau}$ and $O'_{t}$ drawn independently from the marginal distributions of $\theta_{t-\tau}$ and $O_t$, so $\Prob(\theta'_{t-\tau}=\cdot, O'_t=\cdot)=\Prob(\theta_{t-\tau}=\cdot)\otimes \Prob(O_t=\cdot)$. Then $\E[\zeta(\theta'_{t-\tau}, O'_t) ]=\E[ \E[\zeta(\theta'_{t-\tau}, O'_t) \mid \theta'_{t-\tau}] ] =0$. Since  $|\zeta(\theta, O_t)|\leq 2G^2$ for all $\theta \in \Theta_{R}$ by Lemma \ref{lemma:mc_grad_bias_bd} and $\theta_{t-\tau} \rightarrow s_{t-\tau} \rightarrow s_t \rightarrow O_t$ forms a Markov chain, applying Lemma \ref{lemma:mc_noise_bd} gives 
\begin{equation}
\E[\zeta(\theta_{t-\tau}, O_t)] \leq 2 (2G^2)(m\rho^\tau) = 4G^2 m\rho^{\tau}.
\end{equation}
\underline{\it Step 3: Combine terms.}\vspace{1mm}\\
The second claim follows immediately from Equation \eqref{eq: key decomposition for bounding gradient bias} together with Equation \eqref{eq: zero bias at initial point}. We focus on establishing the first claim. Taking the expectation of Equation \eqref{eq: key decomposition for bounding gradient bias} implies
\[
\E[\zeta_{t}(\theta_t)]  \leq \E[\zeta_{t}(\theta_{t-\tau})]+ 6G^2\tau \alpha_{t-\tau} \qquad \forall \tau \in \{0,\cdots, t\}.
\]
For $t\leq \tau^{\mix}(\alpha_T)$, choosing $\tau=t$ gives 
\[
\E[\zeta_{t}(\theta_t)]  \leq \underbrace{\E[\zeta_{t}(\theta_{0})]}_{=0}+ 6G^2t \alpha_{0} \leq 6G^2 \tau^{\mix}(\alpha_T) \alpha_0.
\]
For $t> \tau^{\mix}(\alpha_T)$, choosing $\tau= \tau_0\equiv \tau^{\mix}(\alpha_T)$ gives 
\[ 
\E[\zeta_{t}(\theta_t)]  \leq 4G^2 m \rho^{\tau_0} + 6G^2\tau_0 \alpha_{t-\tau_0} \leq 4G^2 \alpha_{T} + 6G^2\tau_0 \alpha_{t-\tau} \leq G^2\left( 4+ 6\tau_0  \right) \alpha_{t-\tau_0}. 
\]
where the second inequality used that $m \rho^{\tau_0} \leq \alpha_T$ by the definition of the mixing time $\tau_0\equiv \tau^{\mix}(\alpha_T)$ and the second inequality uses that step-sizes are non-increasing. 
\end{proof}

\subsubsection{Completing the proof of Theorem \ref{thrm:mc_ana}} Combining Lemmas \ref{lemma:mc_grad_decomp} and \ref{lemma:mc_noise_bd} gives the error decomposition in Equation \ref{eq:mc_clean_error_decomp} for the case of a constant step-size. As noted at the beginning of this subsection, from this decomposition, parts (a) and (b) of Theorem \ref{thrm:mc_ana} can be established by essentially copying the proof of Theorem \ref{thrm_iid}. For completeness, this is included in Appendix \ref{appendix_a}. For part (c), we closely follow analysis of SGD with decaying step-sizes presented in \cite{lacoste2012simpler}. However, some headache is introduced because Lemma \ref{lemma:mc_grad_bias_bd} includes terms of the form $\alpha_{t-\tau^{\mix}(\alpha_T)}$ instead of the typical $\alpha_{t}$ terms present in analyses of SGD. A complete proof of part (c) is given in Appendix \ref{appendix_a} as well.

\section{Extension to TD with eligibility traces} \label{sec:td_lambda}
\label{sec:td_lambda}
This section extends our analysis to provide finite time guarantees for temporal difference learning \emph{with eligibility traces}. We study a class of algorithms, denoted by TD($\lambda$) and parameterized by $\lambda \in [0,1]$, that contains as a special case the TD(0) algorithm studied in previous sections\footnote{TD(0) corresponds to $\lambda=0$.}. For $\lambda>0$, the algorithm maintains an eligibility trace vector, which is a geometric weighted average of the negative gradients at all previously visited states, and makes parameter updates in the direction of the eligibility vector rather than the negative gradient. Eligibility traces sometimes provide substantial performance improvements in practice \citep{sutton1998reinforcement}. Unfortunately, they also introduce subtle dependency issues that complicate theoretical analysis; to our knowledge, this section provides the \emph{first} non-asymptotic analysis TD($\lambda$). 

Our analysis focuses on the Markov chain observation model studied in the previous section and we mirror the technical assumptions used there. In particular, we assume that the Markov chain is stationary and mixes at a uniform geometric rate (Assumption \ref{as:6}). As before, for tractability, we study a projected variant of TD($\lambda$).

\subsection{Projected TD($\lambda$) algorithm}
TD($\lambda$) makes a simple, but highly consequential, modification to TD(0). Pseudo-code for the algorithm is presented below in Algorithm  \ref{algo:project_increment_TD_lambda}. As with TD(0), at each time-step $t$ it observes a tuple $(s_t, r_t=\Rc(s_t,s_{t+1}), s_{t+1})$ and computes the TD error $\delta_t(\theta_t) = r_t + \gamma V_{\theta_t}(s_{t+1}) - V_{\theta_t}(s_t)$. However, while TD(0) makes an update $\theta_{t+1}=\theta_t + \alpha_t \delta_{t}(\theta_t) \phi(s_t)$ in the direction of the feature vector at the current state, TD($\lambda$) makes the update $\theta_{t+1}= \theta_t + \alpha_t \delta_t(\theta_t) z_{0:t}$. The vector $z_{0:t} = \sum_{k=0}^t (\gamma \lambda)^k \phi(s_{t-k})$ is called the eligibility trace which is updated incrementally as shown below in Algorithm \ref{algo:project_increment_TD_lambda}. As the name suggests, the components of $z_{0:t}$ roughly capture the extent to which each feature is eligible for receiving credit or blame for an observed TD error \citep{sutton1998reinforcement,seijen2014true}.

 \begin{algorithm2e} 
     \SetAlgoLined
     \SetKwInOut{Input}{Input}
     \SetKwInOut{Output}{Output}
     \Input{radius $R$, initial guess $\{\theta_0\,:\,\norm{\theta_0}_2 \leq R \}$, and step-size sequence $\{\alpha_{t}\}_{t\in \mathbb{N}}$}
     Initialize: $\bar{\theta}_{0} \gets \theta_0$, $z_{-1}=0$, $\lambda \in [0,1]$. \\
     \For{$t=0,1,\ldots$}{
        	\nosemic Observe tuple: $O_t = (s_t,r_t,s_{t+1})$\;
            Get TD error: $\delta_t(\theta_t) = r_t + \gamma V_{\theta_t}(s_{t+1}) - V_{\theta_t}(s_t)$\tcc*[r]{sample Bellman error}
            Update eligibility trace: $z_{0:t} = (\gamma \lambda) z_{0:t-1} + \phi(s_t)$\tcc*[r]{Geometric weighting}
            Compute update direction: $x_t(\theta_t, z_{0:t}) = \delta_t(\theta_t) z_{0:t}$\;
            Take a projected update step: $\theta_{t+1} = \Pi_{2,R} (\theta_t + \alpha_t x_t(\theta_t, z_{0:t}))$
        \tcc*[r]{$\alpha_t$:step-size}
        Update averaged iterate: $\bar{\theta}_{t+1} \gets \left(\frac{t}{t+1}\right)\bar{\theta}_{t}+ \left(\frac{1}{t+1}\right)\theta_{t+1} $ 
        \tcc*[r]{$\bar{\theta}_{t+1} = \frac{1}{t+1} \sum_{\ell=1}^{t+1} \theta_\ell$}
        }
           \caption{Projected TD($\lambda$) with linear function approximation} \label{algo:project_increment_TD_lambda}
    \end{algorithm2e}
    
Some new notation in Algorithm \ref{algo:project_increment_TD_lambda} should be highlighted. We use $x_t(\theta, z_{0:t})=\delta_t(\theta)z_{0:t}$ to denote the update to the parameter vector $\theta$ at time $t$. This plays a role analogous to the negative gradient $g_{t}(\theta)$ in TD(0). 

\subsection{Limiting behavior of TD($\lambda$)}
\label{subsec:asymp_td_lambda}
We now review results on the asymptotic convergence of TD($\lambda$) due to 
\cite{tsitsiklis1997analysis}. This provides the foundation of our finite time analysis and also offers insight into how the algorithm differs from TD(0). 

Before giving any results, let us note that just as the true value function $V_{\mu}(\cdot)$ is the unique solution to Bellman's fixed point equation $V_{\mu}=T_{\mu} V_{\mu}$, it is also the unique solution to a $k$-step Bellman equation
$V_{\mu} = T_{\mu}^{(k)} V_{\mu}$. This can be written equivalently as
\[ 
V_{\mu}(s)= \mathbb{E} \left[ \sum_{t=0}^k \gamma^t \Rc(s_t) + \gamma^{k+1} V(s_{k+1}) \mid s_0 = s \right] \qquad \forall s\in S,
\]
where the expectation is over states sampled when policy $\mu$ is applied to the MDP. The asymptotic properties of TD$(\lambda)$ are closely tied to a geometrically weighted version of the $k$-step Bellman equations described above. Define the averaged Bellman operator 
\begin{eqnarray}
\label{eq:lambda_bellman_op}
(T_{\mu}^{(\lambda)}V)(s) = (1-\lambda) \sum_{k=0}^\infty \lambda^k \mathbb{E} \left[ \sum_{t=0}^k \gamma^t \Rc(s_t) + \gamma^{k+1} V(s_{k+1}) \mid s_0 = s \right].
\end{eqnarray}
One interesting interpretation of this equation is as a $k$-step Bellman equation, but where the horizon $k$ itself is a random geometrically distributed random variable. 

\cite{tsitsiklis1997analysis} showed that under appropriate technical conditions,  the approximate value function $V_{\theta_t} = \Phi\theta_t$ estimated by TD($\lambda)$ converges almost surely to the unique solution, $\theta^*$ of the projected fixed point equation 
\[
\Phi \theta = \Pi_{D} T_{\mu}^{(\lambda)} \Phi \theta.
\]
TD($\lambda)$ is then interpreted as a stochastic approximation scheme for solving this fixed point equation. The existence and uniqueness of such a fixed point $\Phi \theta^*$ is implied by the following lemma, which shows that $\Pi_D T^{\lambda}(\cdot)$ is a contraction operator with respect to the steady-state weighted norm $\|\cdot \|_D$. 
\begin{restatable}[]{lem}{lemmacontractdlambda}
\label{lemmm:contract_td_lambda}
[\textbf{\cite{tsitsiklis1997analysis}}]
$\Pi_{D} T_{\mu}^{(\lambda)} (\cdot)$ is a contraction with respect to $\| \cdot \|_{D}$ with modulus
\begin{eqnarray*}
\kappa = \frac{\gamma (1-\lambda)}{1 - \gamma \lambda} \leq \gamma < 1.
\end{eqnarray*}
\end{restatable}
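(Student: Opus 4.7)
The plan is to follow the standard template used for the TD(0) case (Lemma \ref{lemma:BO_prop}), but applied to the averaged Bellman operator $T_\mu^{(\lambda)}$. The proof has three ingredients: (i) the transition operator $P$ induced by the policy $\mu$ is a non-expansion in $\|\cdot\|_D$, (ii) each $k$-step Bellman operator $T_\mu^{(k+1)}$ is a $\gamma^{k+1}$-contraction in $\|\cdot\|_D$, and (iii) the $\lambda$-operator is a convex combination of these $k$-step operators with geometric weights $(1-\lambda)\lambda^k$. Combining (ii) and (iii) via the triangle inequality gives the modulus $\kappa$, and composing with the orthogonal projection $\Pi_D$ (which is non-expansive in $\|\cdot\|_D$ by definition) completes the argument.

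First I would establish (i): let $P\in\mathbb{R}^{n\times n}$ denote the transition matrix with entries $P(s,s')=\mathcal{P}(s'|s)$. For any $V\in\mathbb{R}^n$, Jensen's inequality applied rowwise gives $(PV)(s)^2 \leq \sum_{s'}\mathcal{P}(s'|s)V(s')^2$. Summing against the stationary distribution and using $\pi^\top P = \pi^\top$ yields $\|PV\|_D^2 \leq \sum_{s'}\pi(s')V(s')^2 = \|V\|_D^2$. For (ii), note that unrolling the definition of $T_\mu^{(k+1)}$ shows
\begin{equation*}
T_\mu^{(k+1)} V_1 - T_\mu^{(k+1)} V_2 \,=\, \gamma^{k+1} P^{k+1}(V_1 - V_2),
\end{equation*}
since the reward contributions cancel. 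Iterating (i) gives $\|P^{k+1}(V_1-V_2)\|_D \leq \|V_1-V_2\|_D$, so $T_\mu^{(k+1)}$ is a $\gamma^{k+1}$-contraction.

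Next, for (iii), I would rewrite the definition in Equation \eqref{eq:lambda_bellman_op} as
\begin{equation*}
T_\mu^{(\lambda)} V \,=\, (1-\lambda)\sum_{k=0}^\infty \lambda^k\, T_\mu^{(k+1)} V,
\end{equation*}
which is legitimate since $(1-\lambda)\sum_k \lambda^k = 1$. Applying the triangle inequality and then (ii),
\begin{equation*}
\|T_\mu^{(\lambda)} V_1 - T_\mu^{(\lambda)} V_2\|_D \,\leq\, (1-\lambda)\sum_{k=0}^\infty \lambda^k \gamma^{k+1}\|V_1-V_2\|_D \,=\, \frac{\gamma(1-\lambda)}{1-\gamma\lambda}\|V_1-V_2\|_D.
\end{equation*}
Finally, since $\Pi_D$ is the orthogonal projection onto the column span of $\Phi$ in the inner product $\langle\cdot,\cdot\rangle_D$, it is non-expansive in $\|\cdot\|_D$. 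Composing yields $\|\Pi_D T_\mu^{(\lambda)} V_1 - \Pi_D T_\mu^{(\lambda)} V_2\|_D \leq \kappa\|V_1-V_2\|_D$, which is the claim.

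I expect the only mildly delicate step to be (i), the non-expansiveness of $P$, because it relies crucially on using the stationary distribution $\pi$ as the weighting (a different weighting would break the argument). Everything else is bookkeeping: a geometric-series identity and two applications of the triangle inequality. Note also that when $\lambda = 0$ we recover the modulus $\gamma$ from Lemma \ref{lemma:BO_prop}, and as $\lambda\uparrow 1$ we get $\kappa\to 0$, reflecting that $T_\mu^{(1)}$ is essentially Monte Carlo policy evaluation, which does not bootstrap.
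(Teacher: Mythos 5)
Your proof is correct and is essentially the standard argument from \cite{tsitsiklis1997analysis} (non-expansiveness of $P$ under the stationary weighting, $\gamma^{k+1}$-contraction of each $k$-step operator, geometric averaging, and non-expansiveness of the orthogonal projection $\Pi_D$); the paper itself does not reprove this lemma but simply cites that reference, so there is no alternative in-paper argument to compare against. The only cosmetic quibble is an off-by-one in your superscript convention ($T_\mu^{(k+1)}$ where the paper's Equation \eqref{eq:lambda_bellman_op} would label the $k$-th summand $T_\mu^{(k)}$, with contraction factor $\gamma^{k+1}$ either way), which does not affect the computation of $\kappa$.
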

As with TD(0), the limiting value function under TD($\lambda$) comes with some competitive guarantees. A short argument using Lemma \ref{lemmm:contract_td_lambda} shows
\begin{equation}\label{eq: approximation_error_bound_td_lambda}
\norm{V_{\theta^*} - V_{\mu} }_{D}   \leq \frac{1}{\sqrt{1-\kappa^2}} \norm{ \Pi_{D} V_{\mu} - V_{\mu}  }_{D}. 
\end{equation}
See for example Chapter 6 of \cite{bertsekas2012dynamic} for a proof. It is important to note the distinction between the convergence guarantee results for TD($\lambda$) and TD(0) in terms of the contraction factors. The contraction factor $\kappa$ is always less than $\gamma$, the contraction factor under TD(0). In addition, as $\lambda\to 1$, $\kappa\to 0$ implying that the limit point of TD($\lambda$) for large enough $\lambda$ will be arbitrarily close to $\Pi_{D} V_{\mu}$, which minimizes the mean-square error in value predictions among all value functions representable by the features. This calculation suggests a choice of $\lambda=1$ will offer the best performance. However, the \emph{rate of convergence} also depends on $\lambda$, and may degrade as $\lambda$ grows. Disentangling such issues requires also a careful study of the statistical efficiency of TD($\lambda$), which we undertake in the following subsection. 

\subsection{Finite time bounds for Projected TD($\lambda$)} 
Following Section \ref{sec:markov_chain_analyis}, we establish three finite time bounds on the performance of the Projected TD($\lambda$) algorithm. The first bound in part (a) does not depend on any special regularity of the problem instance but gives a comparatively slow convergence rate of $\tilde{\mathcal{O}}(1/\sqrt{T})$. It applies with the robust (problem independent) and aggressive step-size of $1/\sqrt{T}$. Part (b) illustrates the exponential rate of convergence to within some radius of the TD($\lambda$) fixed-point for sufficiently small step-sizes. Part (c) attains an improved dependence on $T$ of $\tilde{\mathcal{O}}(1/T)$, but the step-size sequence requires knowledge of the minimum eigenvalue $\omega$ of $\Sigma$. 

Compared to the results for TD(0), our bounds depend on a slightly different definition of the mixing time that takes into account the geometric weighting in the eligibility trace term. Define
\begin{eqnarray}
\label{eq:lambda_tau_mix}
\tau^{\mix}_{\lambda}(\epsilon) = \max \{ \tau^{\text{MC}}(\epsilon), \tau^{\text{Algo}}(\epsilon) \},
\end{eqnarray}
where we denote $\tau^{\text{MC}}(\epsilon) = \min \{ t \in \mathbb{N}_0 \,\, | \,\, m \rho^t \leq \epsilon \}$ and $\tau^{\text{Algo}}(\epsilon) = \min \{ t \in \mathbb{N}_0 \,\, | \,\, (\gamma \lambda)^t \leq \epsilon \}$. As we show next, this definition of mixing time enables compact bounds for convergence rates of TD($\lambda$).

\begin{restatable}[]{thm}{LambdaMCBound}
\label{thrm:lambda_mc_ana}
Suppose the Projected TD($\lambda$) algorithm is applied with parameter $R \geq \| \theta^*\|_2$ under the Markov chain observation model with Assumption \ref{as:6}. Set $B=\frac{(r_{\rm max}+2R)}{(1-\gamma \lambda)}$. Then the following claims hold. 
\begin{enumerate}[label=(\alph*)]
\item With a constant step-size $\alpha_t = \alpha_0 = 1/\sqrt{T}$,
\begin{eqnarray*}
\mathbb{E}\left[\norm{V_{\theta^*} - V_{\bar{\theta}_T}}^2_{D}\right] \leq \frac{\norm{\theta^* - \theta_0}_2^2 + B^2 \left(13+28\tau^{\mix}_{\lambda}(1/\sqrt{T})\right)}{2\sqrt{T}(1-\kappa)}.
\end{eqnarray*}

\item With a constant step-size $\alpha_t = \alpha_0 < 1/(2 \omega (1-\kappa))$ and $T > 2 \tau^{\mix}_{\lambda}(\alpha_0)$ ,
\begin{eqnarray*}
\mathbb{E} \left[\norm{\theta^*-\theta_{T}}_2^2\right] \leq \left(e^{-2\alpha_0(1-\kappa)\omega T}\right) \norm{\theta^*-\theta_0}_2^2 + \alpha_0 \Bigg( \frac{B^2 \left(13+24\tau^{\mix}_{\lambda}(\alpha_0) \right)}{2(1-\kappa)\omega} \Bigg).
\end{eqnarray*}

\item With a decaying step-size $\alpha_t = 1/(\omega (t+1) (1-\kappa))$,
\begin{eqnarray*}
\mathbb{E}\left[\norm{V_{\theta^*} - V_{\bar{\theta}_T}}^2_{D}\right] \leq \frac{ B^2 \left(13+52\tau^{\mix}_{\lambda}(\alpha_T)\right)}{T(1-\kappa)^2\omega} \left(1+\log T\right).
\end{eqnarray*}
\end{enumerate}
\end{restatable}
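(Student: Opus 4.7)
The plan is to mirror the structure of the proof of Theorem \ref{thrm:mc_ana} while accounting for two key modifications introduced by eligibility traces. First, the update direction $x_t(\theta_t, z_{0:t}) = \delta_t(\theta_t) z_{0:t}$ depends on the entire history of observations through $z_{0:t}$ rather than just $O_t$. Second, the relevant contraction operator is $\Pi_D T_\mu^{(\lambda)}$ with modulus $\kappa = \gamma(1-\lambda)/(1-\gamma\lambda)$ rather than $\gamma$, so $(1-\gamma)$ in all earlier bounds is replaced by $(1-\kappa)$.

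First I would establish the uniform bound $\|z_{0:t}\|_2 \leq \sum_{k=0}^t (\gamma\lambda)^k \leq 1/(1-\gamma\lambda)$, which together with $|\delta_t(\theta)| \leq r_{\max}+2R$ for $\theta \in \Theta_R$ yields the sample-path bound $\|x_t(\theta, z_{0:t})\|_2 \leq B = (r_{\max}+2R)/(1-\gamma\lambda)$. This is the analogue of Lemma \ref{lemma:mc_grad_norm}. Second, defining the steady-state update $\bar{x}(\theta) = \mathbb{E}_\pi[\delta(\theta, O_\infty) z_\infty]$, where the expectation integrates the stationary joint distribution of a state, next state, and the limiting eligibility trace $z_\infty = \sum_{k=0}^\infty (\gamma\lambda)^k \phi(s_{-k})$, I would establish a $\kappa$-version of Lemma \ref{lemma:expected_gradient}: for all $\theta$,
\[
(\theta^* - \theta)^\top \bar{x}(\theta) \,\geq\, (1-\kappa)\|V_{\theta^*} - V_\theta\|_D^2.
\]
This follows from the same Cauchy--Schwarz / contraction argument as in Lemma \ref{lemma:expected_gradient} but applied to the averaged Bellman operator $T_\mu^{(\lambda)}$ using Lemma \ref{lemmm:contract_td_lambda}. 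Combining these two gives a per-step error decomposition: with probability 1,
\[
\|\theta^* - \theta_{t+1}\|_2^2 \,\leq\, \|\theta^* - \theta_t\|_2^2 - 2\alpha_t(1-\kappa)\|V_{\theta^*}-V_{\theta_t}\|_D^2 + 2\alpha_t \eta_t(\theta_t) + \alpha_t^2 B^2,
\]
where $\eta_t(\theta) = (x_t(\theta, z_{0:t}) - \bar{x}(\theta))^\top(\theta-\theta^*)$ is the bias term, analogous to $\zeta_t$ in Lemma \ref{lemma:mc_grad_decomp}.

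The main obstacle is controlling $\mathbb{E}[\eta_t(\theta_t)]$, because $z_{0:t}$ inherits dependence on the \emph{entire} past sequence $(s_0,\ldots,s_t)$, not just on a single tuple. My strategy is a two-stage truncation. I would first replace $z_{0:t}$ by a truncated trace $\tilde{z}_{t-\tau:t} = \sum_{k=0}^\tau (\gamma\lambda)^k \phi(s_{t-k})$, incurring an error of order $B \cdot (\gamma\lambda)^{\tau+1}/(1-\gamma\lambda)$ which vanishes once $(\gamma\lambda)^\tau$ is smaller than a chosen threshold. I would then apply the information-theoretic coupling Lemma \ref{lemma:mc_exp_bd} to the Markov chain $\theta_{t-2\tau} \to s_{t-2\tau} \to s_{t-\tau} \to (s_{t-\tau},\ldots,s_{t+1})$, so that iterate drift (using $\|\theta_{t}-\theta_{t-2\tau}\|_2 \leq B \sum_{i} \alpha_i$ as in the TD(0) argument) and residual statistical dependence both shrink with $\tau$. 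Choosing $\tau = \tau^{\mix}_\lambda(\cdot)$ balances the algorithmic trace-decay with the Markov chain mixing: it is precisely the threshold making both $m\rho^\tau$ and $(\gamma\lambda)^\tau$ small, which explains the definition in Equation \eqref{eq:lambda_tau_mix}. This yields a bound of the form $\mathbb{E}[\eta_t(\theta_t)] \lesssim B^2 \tau^{\mix}_\lambda(\alpha_T) \alpha_{(t-\tau)\vee 0}$, mirroring Lemma \ref{lemma:mc_grad_bias_bd}.

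Given this bias bound, the finite-time convergence bounds (a), (b), (c) follow by essentially copying the arguments at the end of Section \ref{sec:markov_chain_analyis}. For part (a), plug in the constant step-size $1/\sqrt{T}$, sum the recursion over $t$, telescope, and use Jensen's inequality on the averaged iterate, with $\|V_{\theta^*}-V_\theta\|_D^2$ replacing $\|V_{\theta^*}-V_\theta\|_D^2$ through the $(1-\kappa)$ factor. For part (b), iterate the recursion and sum the resulting geometric series, using Lemma \ref{lemma:strong_conv} to convert $\|V_{\theta^*}-V_{\theta_t}\|_D^2 \geq \omega\|\theta^*-\theta_t\|_2^2$. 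For part (c), reproduce the inductive decaying step-size argument of \cite{lacoste2012simpler} as adapted in Appendix \ref{appendix_a}, with the only change being that the bias contributes $B^2 \tau^{\mix}_\lambda(\alpha_T) \alpha_{t-\tau^{\mix}_\lambda}$ terms; since $\alpha_t$ is eventually slowly varying, $\alpha_{t-\tau^{\mix}_\lambda}/\alpha_t$ contributes only a bounded multiplicative factor, absorbed into the constant $52$ in the stated bound.
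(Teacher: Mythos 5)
Your proposal is correct and follows essentially the same route as the paper's proof in Appendix~\ref{appendix_b:TD_lambda}: a uniform norm bound $B$ on the update direction, a $(1-\kappa)$ descent lemma from the contraction of $\Pi_D T_\mu^{(\lambda)}$, the same per-step error decomposition, and the same two-stage bias control (truncating the eligibility trace to the last $\tau$ steps while shifting the iterate back $2\tau$ steps before applying the coupling lemma), with $\tau^{\mix}_\lambda$ chosen to make both $m\rho^\tau$ and $(\gamma\lambda)^\tau$ small. The only discrepancy is cosmetic: the resulting bias bound carries the step-size index $t-2\tau^{\mix}_\lambda$ rather than $t-\tau^{\mix}_\lambda$, which affects only constants.
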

\begin{rem} As was the case for TD(0), the proof of part (c) also implies a $\tilde{\mathcal{O}}(1/T)$ convergence rate for the iterate $\theta_{T}$ itself. Again, a different weighting of the iterates as shown in \cite{lacoste2012simpler} might enable us to eliminate the $\log T$ term in the numerator of part (c) to give a $\mathcal{O}(1/T)$ convergence rate. For brevity, we do not pursue this direction. 
\end{rem}
We now compare the bounds for TD($\lambda$) with that of TD(0) ignoring the constant terms. First, let us look at the results for the constant step-size $\alpha_t = 1/\sqrt{T}$ in part (a) of Theorems \ref{thrm:mc_ana} and \ref{thrm:lambda_mc_ana}. Approximately, for the TD($\lambda$) case, we have the term $\frac{B^2}{\sqrt{T}(1-\kappa)}$ vis-a-vis the term $\frac{G^2}{\sqrt{T}(1-\gamma)}$ for the TD(0) case. A simple argument below clarifies the relationship between these two.
\begin{eqnarray*}
\frac{B^2}{\sqrt{T}(1-\kappa)} &=& \frac{(r_{\max} + 2R)^2}{\sqrt{T}(1-\kappa) (1-\gamma \lambda)^2} = \frac{G^2}{\sqrt{T}(1-\kappa) (1-\gamma \lambda)^2} \\
&\geq& \frac{G^2}{\sqrt{T}(1-\kappa) (1-\gamma \lambda)} = \frac{G^2}{\sqrt{T}(1-\gamma)}.
\end{eqnarray*}
As we will see later, $B$  is an upper bound to the norm of $x_t(\theta_t,z_{0:t})$, the update direction for TD($\lambda$). Correspondingly, from Section \ref{sec:markov_chain_analyis}, we know that $G$ is the upper bound on gradient norm, $g_t(\theta_t)$ for TD(0). Intuitively, for TD($\lambda$), the bound $B$ is larger (due to the presence of the eligibility trace term) and more so as $\lambda \rightarrow 1$. This dominates any benefit (in terms of statistical efficiency) from a smaller contraction factor, $\kappa$. 

However, for decaying step-sizes of $\alpha_t = 1/(\omega (t+1) (1-\kappa))$, the bounds are qualitatively the same. This follows as the terms that dominate part (c) of Theorems \ref{thrm:mc_ana} and \ref{thrm:lambda_mc_ana} are equal:
\begin{eqnarray*}
\frac{B^2}{T(1-\kappa)^2} = \frac{(r_{\max} + 2R)^2}{T(1-\kappa)^2 (1-\gamma \lambda)^2}
= \frac{G^2}{T(1-\kappa)^2 (1-\gamma \lambda)^2} = \frac{G^2}{T(1-\gamma)^2}.
\end{eqnarray*}
In conclusion, for constant step-sizes--which is often how TD algorithms as used in practice--our bounds establish a faster convergence rate for TD($0$) than for TD($\lambda$). Or equivalently, according to our bounds, more data is required to guarantee TD($\lambda)$ is close to its limit point. In this context, however, the trade-off we remarked on in Section \ref{subsec:asymp_td_lambda} is noteworthy as the fixed point for TD($\lambda$) comes with a better error guarantee.

\section{Extension: Q-learning for high dimensional Optimal Stopping} \label{sec:opt_stopping}
So far, this paper has dealt with the problem of approximating the value function of a fixed policy in a computationally and statistically efficient manner. The Q-learning algorithm is one natural extension of temporal-difference learning to control problems, where the goal is to learn an effective policy from data. Although it is widely applied in reinforcement learning, in general Q-learning is unstable and its iterates may oscillate forever. An important exception to this was discovered by \cite{tsitsiklis1999optimal}, who showed that $Q$-learning converges asymptotically for optimal stopping problems. In this section, we show how the techniques developed in Sections \ref{sec:iid_sampling} and \ref{sec:markov_chain_analyis} can be applied \emph{in an identical manner} to give finite time bounds for Q-learning with linear function approximation applied to optimal-stopping problems with high dimensional state spaces.  To avoid repetition, we only state key properties satisfied by Q-learning in this setting which establish exactly the same convergence bounds as shown in Theorems \ref{thrm_iid} and \ref{thrm:mc_ana}. 

\subsection{Problem formulation} \label{subsec:opt_stopping_prob_form}

The optimal stopping problem is that of determining the time to terminate a process to maximize cumulative expected rewards accrued. Problems of this nature arise naturally in many settings, most notably in the pricing of financial derivatives \citep{andersen2004primal, haugh2004pricing, desai2012pathwise}. We first give a brief formulation for a class of optimal stopping problems. A more detailed exposition can be found in \cite{tsitsiklis1999optimal}, or Chapter 5 of the thesis work of \cite{van1998learning}. 

Consider a discrete-time Markov chain $\{s_t\}_{t \geq 0}$ with finite state space $\mathcal{S}$ and unique stationary distribution distribution $\pi$. At each time $t$, the decision-maker observes the state $s_t$ and decides whether to stop or continue. Let $\gamma \in [0,1)$ denote the discount factor and let $u(\cdot)$ and $U(\cdot)$ denote the reward functions associated with \emph{continuation} and \emph{termination} decisions respectively. Let the stopping time $\tau$ denote the (random) time at which the decision-maker stops. The expected total discounted reward from initial state $s$ associated with the stopping time $\tau$ is
\begin{equation}\label{eq: objective in optimal stopping}
\mathbb{E} \left[ \sum_{t=0}^{\tau-1} \gamma^t u(s_t) + \gamma^\tau U(s_\tau) \, \bigg\vert \, s_0=s \right],
\end{equation}
where $U(s_\tau)$ is defined to be zero for $\tau = \infty$. We seek an optimal stopping policy, which determines when to stop as a function of the observed states so as to maximize \eqref{eq: objective in optimal stopping}.

For any Markov decision process, the optimal state-action value function $Q^*: \mathcal{S} \times \mathcal{A} \rightarrow \mathbb{R}$ specifies the expected value to go from choosing an action $a\in \Ac$ in a state $s\in \Sc$ and following the optimal policy in subsequent states. In optimal stopping problems, there are only two possible actions at every time step: whether to \emph{terminate} or to \emph{continue}. The value of stopping in state $s$ is just $U(s)$, which allows us to simplify notation by only representing the continuation value. 

For the remainder of this section, we let $Q^* : \Sc\to \mathbb{R}$ denote the continuation-value function. It can be shown that $Q^*$ is the unique solution to the Bellman equation $Q^*= FQ^*$, where the Bellman operator is given by 
\[ 
F Q(s) = u(s) + \gamma \sum_{s' \in \mathcal{S}} P(s'|s) \max{ \{ U(s'), Q(s')\}}.
\]
Given the optimal continuation values $Q^*(\cdot)$, the optimal stopping time is simply
\begin{eqnarray}
\label{eq:opt_stop_def}
\tau^* = \min{ \{t \, | \, U(s_t) \geq Q^*(s_t)\} }.
\end{eqnarray}

\subsection{Q-Learning for high dimensional Optimal Stopping} \label{subsec:opt_stopping_approx_alg}
In principle, one could generate the optimal stopping time using Equation \eqref{eq:opt_stop_def} by applying exact dynamic programming algorithms to compute the optimal Q-function. However, such methods are only implementable for small state spaces. To scale to high dimensional state spaces, we consider a feature-based approximation of the optimal continuation value function, $Q^*$. We focus on linear function approximation, where $Q^*(s)$ is approximated as
$$Q^*(s) \approx Q_{\theta}(s) = \phi(s)^\top \theta, $$ 
where $\phi(s) \in \mathbb{R}^d$ is a fixed feature vector for state $s$ and $\theta \in \mathbb{R}^d$ is a parameter vector that is shared across states. As shown in Section \ref{sec:formulation}, for a finite state space, $\Sc = \{s_1, \ldots, s_n\}$, $Q_{\theta}\in \mathbb{R}^n$ can be expressed compactly as $Q_{\theta} = \Phi \theta$, where $ \Phi \in \mathbb{R}^{n \times d}$ and $\theta \in \mathbb{R}^d$ . We also assume that the $d$ features vectors $\{\phi_k\}_{k=1}^d$, forming the columns of $\Phi$ are linearly independent. 

We consider the Q-learning approximation scheme in Algorithm \ref{algo:project_TD_opt_stop}. The algorithm starts with an initial parameter estimate of $\theta_0$ and observes a data tuple $O_t = (s_t, u(s_t), s'_t)$. This is used to compute the target $y_t = u(s_t) + \gamma \max{ \{ U(s'_t), Q_{\theta_t}(s'_t) \} }$, which is a sampled version of the $F(\cdot)$ operator applied to the current $Q$--function. The next iterate, $\theta_{t+1}$, is computed by taking a gradient step with respect to a loss function measuring the distance between $y_t$ and predicted value-to-go. An important feature of this method is that problem data is generated by the exploratory policy that chooses to continue at all time-steps.   

 \begin{algorithm2e} 
     \SetAlgoLined
     \SetKwInOut{Input}{Input}
     \SetKwInOut{Output}{Output}
     \Input{initial guess $\theta_0$, step-size sequence $\{\alpha_{t}\}_{t\in \mathbb{N}}$ and radius $R$. }
     Initialize: $\bar{\theta}_{0} \gets \theta_0$. \\
     \For{$t=0,1,\ldots$}{
        	\nosemic Observe tuple: $O_t = (s_t,u(s_t),s'_t)$\;
            Define target: $y_t = u(s_t) + \gamma \max\hspace{-0.5mm}{\{ U(s'_t), Q_{\theta_t}(s'_t)\}}$\hspace{-1mm}\tcc*[r]{sample Bellman operator}
            Define loss function: $\frac{1}{2} (y_t - Q_{\theta}(s_t))^2$\tcc*[r]{sample Bellman error}
            Compute negative gradient: $g_t(\theta_t) = - \frac{\partial}{\partial \theta} \frac{1}{2}  (y_t - Q_{\theta}(s_t))^2\at[\big]{\theta = \theta_t}$\;
            Take a gradient step: $\theta_{t+1} = \theta_t + \alpha_t g_t(\theta_t)$
        \tcc*[r]{$\alpha_t$:step-size}
        Update averaged iterate: $\bar{\theta}_{t+1} \gets \left(\frac{t}{t+1}\right)\bar{\theta}_{t}+ \left(\frac{1}{t+1}\right)\theta_{t+1} $ 
        \tcc*[r]{$\bar{\theta}_{t+1} = \frac{1}{t+1} \sum_{\ell=1}^{t+1} \theta_\ell$}
        }
           \caption{Q-Learning for Optimal Stopping problems.} \label{algo:project_TD_opt_stop}
    \end{algorithm2e}

\subsection{Asymptotic guarantees} \label{subsec:asymp_results}
Similar to the asymptotic results for TD algorithms, \cite{tsitsiklis1999optimal} show that the variant of Q-learning detailed above in Algorithm \ref{algo:project_TD_opt_stop} converges to the unique solution, $\theta^*$, of the projected Bellman equation,
\[
\Phi \theta = \Pi_D F \Phi \theta.
\]
This results crucially relies on the fact that the projected Bellman operator $\Pi_D F(\cdot)$ is a contraction with respect to $\norm{\cdot}_D$ with modulus $\gamma$. The analogous result for our study of TD(0) was stated in Lemma \ref{lemma:BO_prop}. \cite{tsitsiklis1999optimal} also give error bounds for the limit of convergence with respect to $Q^*$, the optimal Q-function. In particular, it can be shown that
\[
\norm{\Phi \theta^* - Q^*}_D \leq \frac{1}{\sqrt{1-\gamma^2}} \norm{\Pi_D Q^* - Q^*}_D
\]
where the left hand side measures the error between the estimated and the optimal Q-function which is upper bounded by the \emph{representational power} of the linear approximation architecture, as given on the right hand side. In particular, if $Q^*$ can be represented as a linear combination of the feature vectors then there is no approximation error and the algorithm converges to the optimal Q-function. Finally, one can ask whether the stopping times suggested by this approximate continuation value function, $\Phi \theta^*$, are effective. Let $\tilde{\mu}$ be the policy that stops at the first time $t$ when 
\[
U(s_t) \geq (\Phi \theta^*)(s_t).
\]
Then, for an initial state $s_0$ drawn from the stationary distribution $\pi$, 
\[
 \E\left[V^*(s_0) \right] - \mathbb{E}\left[ V_{\tilde{\mu}}(s_0) \right] \leq \frac{2}{(1-\gamma)\sqrt{1-\gamma^2}} \norm{\Pi_D Q^* - Q^*}_D,
\]
where $V^*$ and $V_{\tilde{\mu}}$ denote the value functions corresponding, respectively, to the optimal stopping policy the approximate stopping policy $\mu$. Again, this error guarantee depends on the choice of feature representation.

\subsection{Finite time analysis} \label{subsec:opt_stopping_finite_anal}
In this section, we show how our results in Sections \ref{sec:iid_sampling}, \ref{sec:markov_chain_analyis} for TD(0) and its projected counterpart can be extended, without any modification, to give convergence bounds for the Q-function approximation algorithm described above. To this effect, we highlight that key lemmas that enable our analysis in Sections \ref{sec:iid_sampling} and \ref{sec:markov_chain_analyis} also hold in this setting. The contraction property of the $F(\cdot)$ operator will be crucial to our arguments here. Convergence rates for an i.i.d.\,noise model, mirroring those established for TD(0) in Theorem \ref{thrm_iid}, can be shown for Algorithm \ref{algo:project_TD_opt_stop}. Results for the Markov chain sampling model, mirroring those established for TD(0) in Theorem \ref{thrm:mc_ana}, can be shown for a projected variant of Algorithm \ref{algo:project_TD_opt_stop}.

First, we give mathematical expressions for the negative gradient. As a general function of $\theta$ and tuple $O_t = (s_t,u(s_t),s'_t)$, the negative gradient can be written as
\begin{eqnarray}
\label{eq:grad_opt_stopping}
g_t(\theta) = \Big( u(s_t) + \gamma \max{ \{U(s'_t), \phi(s'_{t})^\top \theta\} } - \phi(s_t)^\top\theta \Big) \phi(s_t).
\end{eqnarray}
The negative expected gradient, when the tuple $(s_t, u(s_t), s'_t)$ follows its steady-state behavior, can be written as
\[ 
\bar{g}(\theta) = \sum_{s, s' \in \Sc} \pi(s) \Pc(s'|s) \left(u(s)+\gamma \max{ \{U(s'), \phi(s')^\top \theta\} } - \phi(s)^\top\theta \right)\phi(s).
\]
Additionally, using $ \sum_{s' \in \Sc} \Pc(s'|s) \left(u(s)+\gamma \max{ \{U(s'), \phi(s')^\top \theta\} } \right) = \left(F \Phi \theta \right)(s)$, it is easy to show 
\[ 
\bar{g}(\theta) = \Phi^\top D (F \Phi\theta - \Phi \theta).
\]
Note the close similarity of this expression with its counterparts for TD learning (see Section \ref{sec:algorithm} and Appendix \ref{appendix_b:TD_lambda}); the only difference is that the appropriate Bellman operator(s) for TD learning, $T_{\mu}(\cdot)$, has been replaced with the appropriate Bellman operator $F(\cdot)$ for this optimal stopping problem. 

\subsubsection{Analysis with i.i.d.\,noise}
In this section, we show how to analyze the Q-learning algorithm under an i.i.d.\,observation model, where the random tuples observed by the algorithm are sampled i.i.d.\,from the stationary distribution of the Markov process. All our ideas follow the presentation in Section \ref{sec:iid_sampling}, a careful understanding of which reveals that Lemmas \ref{lemma:expected_gradient} and \ref{lemma:norm_bd_grad} form the backbone of our results. Recall that Lemma \ref{lemma:expected_gradient} establishes how, at any iterate $\theta$, TD updates point in the descent direction of  $\|\theta^* - \theta\|^2_2$. Lemma \ref{lemma:norm_bd_grad} bounds the expected norm of the stochastic gradient, thus giving a control over system noise. 

In Lemmas \ref{lemma:expected_gradient_opt_stopping} and \ref{lemma:norm_bd_grad_opt_stopping}, given below, we show how  exactly the same results also hold for the Q-function approximation algorithm under the i.i.d.\,sampling model. With these two key lemmas, convergence bounds shown in Theorem \ref{thrm_iid} follows by repeating the analysis in Section \ref{sec:iid_sampling}. 
\begin{restatable}[]{lem}{lemmaoptstopp}
[\textbf{\cite{tsitsiklis1999optimal}}]
\label{lemma:expected_gradient_opt_stopping}
Let $V_{\theta^{*}}$ be the unique fixed point of $\Pi_{D} F(\cdot)$, i.e. $V_{\theta^*} = \Pi_{D}F V_{\theta^*}$. Then, for any $\theta \in \mathbb{R}^d$,
\begin{equation*}
(\theta^* - \theta)^\top \bar{g}(\theta) \,\, \geq \,\, (1-\gamma) \norm{V_{\theta^*} - V_{\theta}}^2_{D}.
\end{equation*}
\end{restatable}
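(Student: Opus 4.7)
The plan is to mirror the proof of Lemma \ref{lemma:expected_gradient} almost verbatim, with the crucial difference that the linearization step used there (writing $\bar{g}(\theta)-\bar{g}(\theta^{*})$ as an expectation of $\phi(\xi-\gamma\xi')$) is not available because $F$ is nonlinear. In its place, I will use directly the fact, due to \cite{tsitsiklis1999optimal}, that $F$ is itself a $\gamma$-contraction with respect to the steady-state norm $\|\cdot\|_{D}$, along with Cauchy--Schwarz.

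First I would observe that $\bar{g}(\theta^{*})=0$. Indeed, $V_{\theta^{*}}=\Pi_{D}FV_{\theta^{*}}$ means $FV_{\theta^{*}}-V_{\theta^{*}}$ is orthogonal, in the $\langle\cdot,\cdot\rangle_{D}$ inner product, to the column span of $\Phi$, so $\Phi^{\top}D(FV_{\theta^{*}}-V_{\theta^{*}})=0$. Using the matrix form $\bar{g}(\theta)=\Phi^{\top}D(F V_{\theta}-V_{\theta})$ derived in Subsection \ref{subsec:opt_stopping_finite_anal}, this gives
\[
\bar{g}(\theta)=\bar{g}(\theta)-\bar{g}(\theta^{*})=\Phi^{\top}D\bigl[(FV_{\theta}-FV_{\theta^{*}})-(V_{\theta}-V_{\theta^{*}})\bigr].
\]

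Next I would translate the inner product on the parameter side into one on the value-function side. Since $\Phi(\theta^{*}-\theta)=V_{\theta^{*}}-V_{\theta}$,
\[
(\theta^{*}-\theta)^{\top}\bar{g}(\theta)=\langle V_{\theta^{*}}-V_{\theta},\,(FV_{\theta}-FV_{\theta^{*}})-(V_{\theta}-V_{\theta^{*}})\rangle_{D}=\|V_{\theta^{*}}-V_{\theta}\|_{D}^{2}-\langle V_{\theta^{*}}-V_{\theta},\,FV_{\theta^{*}}-FV_{\theta}\rangle_{D}.
\]
The remaining cross term is controlled by Cauchy--Schwarz and the contraction property of $F$ in $\|\cdot\|_{D}$:
\[
\langle V_{\theta^{*}}-V_{\theta},\,FV_{\theta^{*}}-FV_{\theta}\rangle_{D}\le \|V_{\theta^{*}}-V_{\theta}\|_{D}\,\|FV_{\theta^{*}}-FV_{\theta}\|_{D}\le \gamma\|V_{\theta^{*}}-V_{\theta}\|_{D}^{2}.
\]
Combining the two displays yields the desired bound $(\theta^{*}-\theta)^{\top}\bar{g}(\theta)\ge(1-\gamma)\|V_{\theta^{*}}-V_{\theta}\|_{D}^{2}$.

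The main obstacle is not really in our computation but in justifying the inequality $\|FV-FV'\|_{D}\le\gamma\|V-V'\|_{D}$ for every pair of value functions. For generic Bellman optimality operators this $D$-weighted contraction fails; it happens to hold for the optimal stopping operator $F$ because the only nonlinearity is a component-wise maximum against the fixed stopping-value vector $U$, and the map $Q\mapsto \max\{U,Q\}$ is non-expansive in every $\|\cdot\|_{D}$, while $Q\mapsto \gamma PQ$ is a $\gamma$-contraction in $\|\cdot\|_{D}$ by Jensen's inequality together with the stationarity identity $\pi^{\top}P=\pi^{\top}$. This is exactly the property that \cite{tsitsiklis1999optimal} establish, so I would cite it and use it as a black box; once it is in hand the remainder of the argument is essentially a one-line Cauchy--Schwarz calculation, avoiding the stationarity trick ($\xi$ and $\xi'$ having the same marginal) needed in the proof of Lemma \ref{lemma:expected_gradient} for the linear operator $T_{\mu}$.
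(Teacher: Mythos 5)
Your proof is correct, and it takes a slightly different route from the paper's. The paper (Appendix C, Lemma \ref{lemma:expected_gradient_general}) starts from $\bar{g}(\theta)=\Phi^\top D(F\Phi\theta-\Phi\theta)$, uses the self-adjointness of $\Pi_D$ and the identity $\Pi_D\Phi x=\Phi x$ to rewrite the inner product as $\langle \Phi(\theta^*-\theta),\,\Pi_D F\Phi\theta-\Phi\theta\rangle_D$, and then invokes the fixed-point identity $\Pi_D F\Phi\theta^*=\Phi\theta^*$ together with the contraction of the \emph{projected} operator $\Pi_D F$ before applying Cauchy--Schwarz. You instead first establish $\bar{g}(\theta^*)=0$ from the orthogonality of the Bellman residual at $\theta^*$, subtract it to get $\bar{g}(\theta)=\Phi^\top D[(FV_\theta-FV_{\theta^*})-(V_\theta-V_{\theta^*})]$, and then use Cauchy--Schwarz with the contraction of the \emph{unprojected} $F$ in $\norm{\cdot}_D$. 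Both arguments reduce to a one-line Cauchy--Schwarz plus a $\gamma$-contraction bound and both ultimately rest on the Tsitsiklis--Van Roy contraction result (your justification of it via non-expansiveness of the componentwise max and Jensen's inequality with $\pi^\top P=\pi^\top$ is exactly right); your version has the advantage of paralleling the structure of the paper's proof of Lemma \ref{lemma:expected_gradient} for TD(0), which also begins by subtracting $\bar{g}(\theta^*)$, while the paper's version has the advantage of only requiring the contraction of $\Pi_D F$, which is the weaker (projected) hypothesis.
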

\begin{proof}
This property is a consequence of the fact that $\Pi_D F (\cdot)$ is a contraction with respect to $\norm{\cdot}_{D}$ with modulus $\gamma$. It was established by \cite{tsitsiklis1999optimal} in the process of proving their Lemma 8. For completeness, we provide a standalone proof in Appendix \ref{appendix:a1}.
\end{proof}
\begin{restatable}[]{lem}{}
\label{lemma:norm_bd_grad_opt_stopping}
We use notation and proof strategy mirroring the proof of Lemma \ref{lemma:norm_bd_grad}. For any fixed $\theta \in \mathbb{R}^d$, $\mathbb{E}\|g_t(\theta) \|^2_2 \leq  2\sigma^2 + 8\| V_{\theta}-V_{\theta^*}\|^2_D$
where $\sigma^2 = \mathbb{E}\left[ \| g_t(\theta^*) \|^2_2 \right]$. 
\end{restatable}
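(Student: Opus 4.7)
The plan is to mirror closely the proof of Lemma \ref{lemma:norm_bd_grad}, with the only real new ingredient being a 1-Lipschitz bound for the $\max$ that now appears inside the gradient for optimal stopping. I would start from the algebraic identity $\|g_t(\theta)\|_2 \leq \|g_t(\theta^*)\|_2 + \|g_t(\theta)-g_t(\theta^*)\|_2$, square, use $(x+y)^2 \leq 2x^2+2y^2$, and take expectations to obtain
\[
\mathbb{E}\|g_t(\theta)\|_2^2 \,\leq\, 2\sigma^2 \,+\, 2\,\mathbb{E}\|g_t(\theta)-g_t(\theta^*)\|_2^2.
\]
So the whole argument reduces to controlling the expected squared difference $\mathbb{E}\|g_t(\theta)-g_t(\theta^*)\|_2^2$.

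To do that, I would adopt the notation of Lemma \ref{lemma:norm_bd_grad}: write $\phi=\phi(s_t)$, $\phi'=\phi(s'_t)$, $\xi=(\theta^*-\theta)^\top \phi$, $\xi'=(\theta^*-\theta)^\top \phi'$, and note that by stationarity $\xi,\xi'$ have the same marginal distribution with $\mathbb{E}[\xi^2]=\|V_\theta-V_{\theta^*}\|_D^2$. Using the expression for $g_t(\theta)$ in \eqref{eq:grad_opt_stopping}, the differences in the reward $u(s_t)$ and in the $\phi(s_t)^\top \theta$ term combine to give
\[
g_t(\theta)-g_t(\theta^*) = \phi\Big[\gamma\bigl(\max\{U(s'_t),\phi'^\top\theta\}-\max\{U(s'_t),\phi'^\top\theta^*\}\bigr) + \xi\Big].
\]
The key step is the observation that $a\mapsto \max\{U(s'_t),a\}$ is 1-Lipschitz, so the bracketed max-difference is bounded in absolute value by $|\phi'^\top(\theta-\theta^*)|=|\xi'|$. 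Combined with $\|\phi\|_2\leq 1$, this yields the pointwise bound $\|g_t(\theta)-g_t(\theta^*)\|_2 \leq \gamma|\xi'|+|\xi|$.

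From there the argument is routine: squaring and applying $(x+y)^2\leq 2x^2+2y^2$ gives
\[
\|g_t(\theta)-g_t(\theta^*)\|_2^2 \,\leq\, 2\gamma^2 (\xi')^2 + 2\xi^2 \,\leq\, 2(\xi')^2+2\xi^2,
\]
and taking expectations together with $\mathbb{E}[(\xi')^2]=\mathbb{E}[\xi^2]=\|V_\theta-V_{\theta^*}\|_D^2$ produces $\mathbb{E}\|g_t(\theta)-g_t(\theta^*)\|_2^2 \leq 4\|V_\theta-V_{\theta^*}\|_D^2$. Plugging this back into the opening inequality delivers the claimed $2\sigma^2+8\|V_\theta-V_{\theta^*}\|_D^2$. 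The only obstacle compared with Lemma \ref{lemma:norm_bd_grad} is the presence of the $\max$ in the Bellman operator, and this is dispatched by its 1-Lipschitz property; everything else is a transcription of the earlier proof.
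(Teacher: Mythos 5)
Your proposal is correct and follows essentially the same route as the paper's proof: the same split $\mathbb{E}\|g_t(\theta)\|_2^2 \leq 2\sigma^2 + 2\,\mathbb{E}\|g_t(\theta)-g_t(\theta^*)\|_2^2$, the same use of the $1$-Lipschitz property of $a \mapsto \max\{U(s'_t),a\}$ to bound the max-difference by $|\xi'|$, and the same final accounting giving $2\sigma^2 + 8\|V_\theta - V_{\theta^*}\|_D^2$. No gaps.
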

\begin{proof}
For brevity of notation, set $\phi= \phi(s_t), \, \phi'=\phi(s'_t)$ and $U'=U(s')$. Define $\xi = (\theta^*-\theta)^\top \phi$ and $\xi' = (\theta^*-\theta)^\top \phi'$. By stationarity $\xi$ and $\xi'$ have the same marginal distribution and $\E[\xi^2] = \| V_{\theta^*} - V_{\theta}\|_D^2$. Using the formula for $g_{t}(\theta)$ in Equation \eqref{eq:grad_opt_stopping}, we have        
\begin{eqnarray}
\E\left[ \|g_t(\theta) \|^2_2\right] &\leq& 2\E \left[ \|g_t(\theta^*)\|^2_2 \right] + 2\E \left[ \|g_t(\theta)-g_t(\theta^*) \|^2_2 \right] \nonumber \\ 
&=& 2\sigma^2 + 2\mathbb{E} \left[ \left\| \phi \left( \phi^\top (\theta^*-\theta) - \gamma \left[ \max{ \left(U',{\phi'}^\top \theta^* \right) } - \max{ \left(U', {\phi'}^\top \theta \right)}\right]  \right) \right\|^2_2 \right] \nonumber \\
&\leq& 2\sigma^2 + 2\mathbb{E} \left[ \left\| \phi \left( \abs{\phi^\top (\theta^*-\theta)} + \gamma \abs{\max{ \left(U',{\phi'}^\top \theta^* \right) } - \max{ \left(U', {\phi'}^\top \theta \right)}} \right) \right\|^2_2 \right] \nonumber \\
&\leq& 2\sigma^2 + 2\mathbb{E} \left[ \left\| \phi \left(\abs{\phi^\top (\theta^*-\theta)} + \gamma \abs{{\phi'}^\top (\theta^*-\theta)} \right) \right\|^2_2 \right] \label{eq:opt_stop_max_prop} \\
&\leq& 2\sigma^2 + 2\mathbb{E} [\abs{\xi + \gamma \xi'}^2] \nonumber \\
&\leq& 2\sigma^2 + 4 \left( \E\left[ |\xi|^2 \right] + \gamma^2 \E\left[ |\xi'|^2 \right] \right)  \nonumber \\
&=& 2\sigma^2 + 4(1=\gamma^2) \| V_{\theta} - V_{\theta^*}\|^2_D \leq 2\sigma^2 + 8 \| V_{\theta} - V_{\theta^*}\|^2_D, \nonumber
\end{eqnarray}
where we used the assumption that features are normalized so that $\| \phi \|^2_2 \leq 1$ almost surely. Additionally, in going to Equation \eqref{eq:opt_stop_max_prop}, we used that $\abs{\max{(c_1,c_3)}-\max{(c_2,c_3)}} \leq \abs{c_1-c_2}$ for any scalars $c_1,c_2$ and $c_3$.
\end{proof}

\subsubsection{Analysis under the Markov chain model}
Analogous to Section \ref{sec:markov_chain_analyis}, we analyze a projected variant of Algorithm \ref{algo:project_TD_opt_stop} under the Markov chain sampling model. Let $\Theta_{R}= \{\theta \in \mathbb{R}^d : \| \theta\|_2 \leq R \}$.  Starting with an initial guess of $\theta_0 \in \Theta_R$, the algorithm updates to the next iterate by taking a gradient step followed by projection onto $\Theta_R$, so iterates satisfy the stochastic recursion $\theta_{t+1} = \Pi_{2,R} (\theta_t + \alpha_t g_t(\theta_t))$. We make the similar structural assumptions to those in  Section \ref{sec:markov_chain_analyis}. In particular, assume the feature vectors and the continuation, termination rewards to be uniformly bounded, with $\|\phi(s) \|_{2}  \,\, \leq 1$ and $\max\{\abs{u(s)}, \abs{U(s)}\} \leq r_{\max}$ for all $s\in \mathcal{S}$. We assume $r_{\max} \leq R$, which can always be ensured by rescaling rewards or the projection radius. 

We first show a uniform bound on the gradient norm.
\begin{restatable}[]{lem}{lemmaoptnorm}
\label{lemma:mc_opt_stop_norm_grad} 
Define $G = (r_{\max} + 2R)$. With probability 1, $\norm{g_t(\theta)}_2 \leq G \,\, \text{for all} \,\, \theta \in \Theta_R$. 
\end{restatable}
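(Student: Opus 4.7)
The plan is to mimic the proof of Lemma \ref{lemma:mc_grad_norm} (the TD(0) analogue), the only additional wrinkle being the presence of the $\max\{\cdot,\cdot\}$ term arising from the optimal stopping Bellman operator. Recall the expression for the negative gradient in \eqref{eq:grad_opt_stopping}:
\[
g_t(\theta) = \Big( u(s_t) + \gamma \max\{ U(s'_t), \phi(s'_t)^\top \theta \} - \phi(s_t)^\top \theta \Big) \phi(s_t).
\]
Since this is a scalar multiple of $\phi(s_t)$ and $\|\phi(s_t)\|_2 \leq 1$, it suffices to bound the absolute value of the scalar factor uniformly over $\theta \in \Theta_R$.

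First, I would apply the triangle inequality to split the scalar into three pieces: $|u(s_t)|$, $\gamma |\max\{U(s'_t),\phi(s'_t)^\top \theta\}|$, and $|\phi(s_t)^\top\theta|$. The first piece is bounded by $r_{\max}$ by assumption. The third is bounded by $\|\phi(s_t)\|_2 \|\theta\|_2 \leq R$ via Cauchy--Schwarz together with the projection constraint $\theta \in \Theta_R$. For the middle piece, I would use the elementary inequality $|\max\{a,b\}| \leq \max\{|a|,|b|\}$, which yields
\[
|\max\{U(s'_t), \phi(s'_t)^\top \theta\}| \leq \max\{|U(s'_t)|,\, |\phi(s'_t)^\top \theta|\} \leq \max\{r_{\max}, R\} = R,
\]
where the last equality uses the standing assumption $r_{\max} \leq R$. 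Combining these bounds with $\gamma \leq 1$ yields
\[
\|g_t(\theta)\|_2 \leq \bigl( r_{\max} + \gamma R + R \bigr)\|\phi(s_t)\|_2 \leq r_{\max} + 2R = G,
\]
uniformly in $\theta \in \Theta_R$ and with probability $1$.

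There is no real obstacle here; the only subtlety worth highlighting is the inequality $|\max\{a,b\}|\leq \max\{|a|,|b|\}$, which replaces the simpler triangle-inequality step used in the TD(0) case and is the sole place where the optimal-stopping structure enters. Everything else is a direct transcription of the argument for Lemma \ref{lemma:mc_grad_norm}.
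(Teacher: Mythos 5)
Your proof is correct and follows essentially the same route as the paper: bound the scalar factor by $r_{\max} + \gamma R + R \leq r_{\max}+2R$ using feature normalization, the projection constraint, and the fact that $r_{\max}\leq R$ controls the $\max\{\cdot,\cdot\}$ term by $R$. If anything, your use of $|\max\{a,b\}|\leq\max\{|a|,|b|\}$ is slightly more careful than the paper's one-sided bound on the max term before squaring.
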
 
\begin{proof}
We start with the mathematical expression for the stochastic gradient, 
\begin{eqnarray*}
g_t(\theta) = \Big( u(s_t) + \gamma \max{ \{U(s'_t), \phi(s'_{t})^\top \theta\} } - \phi(s_t)^\top\theta \Big) \phi(s_t).
\end{eqnarray*}
As $r_{\max} \leq R$, we have: $\max{ \{U(s'_t), \phi(s'_{t})^\top \theta\} } \leq \max{ \{U(s'_t), \norm{\phi(s_{t}')}_2 \norm{\theta}_2 \} } \leq R$. Then, 
\begin{eqnarray*}
\norm{g_t(\theta)}^2_2 &=& \left( u(s_t) + \gamma \max{ \{U(s'_t), \phi(s'_{t})^\top \theta\} } - \phi(s_t)^\top\theta \right)^2 \norm{\phi(s_t)}^2 \\
&\leq& \left( r_{\max} + \gamma R - \phi(s_t)^\top \theta \right)^2 \\
&\leq& \left( r_{\max} + \gamma R + \norm{\phi(s_t)}_2 \norm{\theta}_2 \right)^2 \leq \left( r_{\max} + 2 R \right)^2= G^2.
\end{eqnarray*}
We used here that the basis vectors are normalized, $\norm{\phi(s_t)}_2 \leq 1$ for all $t$.
\end{proof}
If we assume the Markov process $(s_0, s_1, \ldots)$ satisfies Assumption \ref{as:6}, then Lemma \ref{lemma:mc_opt_stop_norm_grad} paves the way to show exactly the same convergence bounds as given in Theorem \ref{thrm:mc_ana}. For this, we refer the readers to Section \ref{sec:markov_chain_analyis} and Appendix \ref{appendix_a}, where we show all the key lemmas and a detailed proof of Theorem \ref{thrm:mc_ana}. One can mirror the same proof, using Lemmas \ref{lemma:expected_gradient_opt_stopping} and \ref{lemma:mc_opt_stop_norm_grad} in place of Lemmas \ref{lemma:expected_gradient} and \ref{lemma:mc_grad_bias_bd}, which apply to TD(0). In particular, note that we can use Lemma \ref{lemma:mc_opt_stop_norm_grad} along with some basic algebraic inequalities to show the gradient bias, $\zeta_t(\theta)$, to be Lipschitz and bounded. This, along with the information-theoretic arguments of Lemma \ref{lemma:mc_exp_bd} enables the exact same upper bound on the gradient bias as shown in Lemma \ref{lemma:mc_grad_bias_bd}. Combining these with standard proof techniques for SGD \citep{lacoste2012simpler,nemirovski2009robust} shows the convergence bounds for Q-learning.

\section{Conclusions} \label{sec:conc}
In this paper we provide a simple finite time analysis of a foundational and widely used algorithm known as temporal difference learning. Although asymptotic convergence guarantees for the TD method were previously known, characterizing its data efficiency stands as an important open problem.  
Our work makes a substantial advance in this direction by providing a number of explicit finite time bounds for TD, including in the much more complicated case where data is generated from a single trajectory of a Markov chain. Our analysis inherits the simplicity of and elegance enjoyed by SGD analysis and can gracefully extend to different variants of TD, for example TD learning with eligibility traces (TD($\lambda$)) and Q-function approximation for optimal stopping problems. Owing to the close connection with SGD, we believe that optimization researchers can further build on our techniques to develop principled improvements to TD. 

There are a number of research directions one can take to extend our work. First, we use a projection step for analysis under the Markov chain model, a choice we borrowed from the optimization literature to simplify our analysis. It will be interesting to find alternative ways to add regularity to the TD algorithm and establish similar convergence results; we think analysis without the projection step is possible if one can show that the iterates remain bounded under additional regularity conditions. Second, the $\tilde{\mathcal{O}}(1/T)$ convergence rate we showed used step-sizes which crucially depends on the minimum eigenvalue $\omega$ of the feature covariance matrix, which would need to be estimated from samples. While such results are common in optimization for strongly convex functions, very recently \cite{lakshminarayanan2018linear} showed  TD(0) with iterate averaging and \emph{universal constant step-sizes} can attain an $\tilde{\mathcal{O}}(1/T)$ convergence rate in the i.i.d.\,sampling model. Extending our analysis for problem independent, robust step-size choices is a research direction worth pursuing.

\setlength{\bibsep}{4pt plus 0.4ex}
\bibliographystyle{plainnat}
\bibpunct{(}{)}{;}{a}{,}{,}
\bibliography{bibfile}

\newpage
\appendix

\section{Analysis of Projected TD(0) under the Markov chain sampling model} \label{appendix_a}
In this section, we complete the proof of Theorem \ref{thrm:mc_ana}. The first subsection restates the theorem, as well as the two key lemmas from Section \ref{sec:markov_chain_analyis} that underly the proof. The second subsection contains a proof of Theorem \ref{thrm:mc_ana}. Finally, Subsection \ref{appendix_a:mc_noise_bd} contains the proof of a technical result, Lemma \ref{lemma:mc_noise_bd}, which was omitted from the main text but we need for the proof. 

\subsection{Restatement of the theorem and key lemmas from the main text}
\MCBound*

The key to our proof is the following lemmas, which were establish in Section \ref{sec:markov_chain_analyis}. Recall the definition of the gradient error $\zeta_t(\theta) \equiv  \left(g_t(\theta) - \bar{g}(\theta)\right)^\top(\theta - \theta^*)$. 

\lemmamcgraddecomp*
\MCGradientBound*

\subsection{Proof of Theorem \ref{thrm:mc_ana}.}
\label{appendix_a:proof_thrm_mc}
We now complete the proof of Theorem \ref{thrm:mc_ana}. The proof directly uses Lemma \ref{lemma:mc_grad_decomp} and Lemma \ref{lemma:mc_grad_bias_bd}.  

\begin{proof}
From Lemma \ref{lemma:mc_grad_decomp}, we have
\begin{equation}\label{eq:mc_basic_rec}
\mathbb{E}\left[ \norm{\theta^* - \theta_{t+1}}_2^2 \right] \leq \mathbb{E} \left[ \norm{\theta^* - \theta_t}_2^2 \right] - 2 \alpha_t (1-\gamma) \mathbb{E}\left[\norm{V_{\theta^*} - V_{\theta_t}}^2_{D} \right] + 2\alpha_t \mathbb{E}\left[ \zeta_t(\theta_t) \right] + \alpha_t^2 G^2.
\end{equation}
\paragraph{Proof of part (a):} We first show the analysis for a constant step-size and iterate averaging. Considering $\alpha_t = \alpha_0 = 1/\sqrt{T}$ in Equation \eqref{eq:mc_basic_rec}, rearranging terms and summing from $t=0$ to $t=T-1$, we get
\begin{eqnarray*}
2 \alpha_0 (1-\gamma) \sum_{t=0}^{T-1} \mathbb{E}\left[\norm{V_{\theta^*} - V_{\theta_t}}^2_{D} \right] \leq \sum_{t=0}^{T-1} \left( \mathbb{E} \left[ \norm{\theta^* - \theta_t}_2^2 \right] - \mathbb{E}\left[ \norm{\theta^* - \theta_{t+1}}_2^2 \right] \right) + G^2 
+ 2\alpha_0 \sum_{t=0}^{T-1} \mathbb{E}\left[ \zeta_t(\theta_t) \right]. 
\end{eqnarray*}
Using Lemma \ref{lemma:mc_grad_bias_bd} (in which $\alpha_{t^*}=\alpha_0$ in this case) and simplifying, we find
\begin{eqnarray*}
\sum_{t=0}^{T-1} \mathbb{E}\left[\norm{V_{\theta^*} - V_{\theta_t}}^2_{D} \right] &\leq& 
\frac{\norm{\theta^* - \theta_0}_2^2 + G^2}{2\alpha_0 (1-\gamma)} + \frac{T \cdot 2G^2 (2 + 3\tau^{\mix}(1/\sqrt{T})) \alpha_0}{(1-\gamma)}\\
&=& \frac{\sqrt{T}\left(\norm{\theta^* - \theta_0}_2^2 + G^2\right)}{2(1-\gamma)} +  \frac{\sqrt{T} \cdot 2G^2 (2 + 3\tau^{\mix}(1/\sqrt{T}))}{(1-\gamma)}. \label{eq:mc_part_a_rec} 
\end{eqnarray*}
This gives us our desired result,
\begin{eqnarray*}
\mathbb{E}\left[\norm{V_{\theta^*} - V_{\bar{\theta}_T}}^2_{D}\right] \leq \frac{1}{T} \sum_{t=0}^{T-1} \mathbb{E}\left[\norm{V_{\theta^*} - V_{\theta_t}}^2_{D}\right] &\leq& \frac{\norm{\theta^* - \theta_0}_2^2 + G^2 \left(9+12\tau^{\mix}(1/\sqrt{T})\right)}{2\sqrt{T}(1-\gamma)}.
\end{eqnarray*}
\paragraph{Proof of part (b):} The proof is analogous to part (b) of Theorem \ref{thrm_iid}.  Consider a constant step-size of $\alpha_0 < 1/(2 \omega  (1-\gamma))$. Starting with Equation \eqref{eq:mc_basic_rec} and applying Lemma \ref{lemma:strong_conv}, which showed $\norm{V_{\theta^*} - V_{\theta}}^2_{D} \geq \omega  \norm{\theta^* - \theta}_2^2$ for all $\theta$, we get
\begin{eqnarray*}
\mathbb{E} \left[\norm{\theta^*-\theta_{t+1}}_2^2\right] &\leq& \left(1-2\alpha_0(1-\gamma)\omega \right) \mathbb{E} \left[\norm{\theta^*-\theta_t}_2^2\right] + \alpha_0^2 G^2 + 2\alpha_0 \mathbb{E}\left[\zeta_t(\theta_t)\right] \nonumber \\
&\leq& \left(1-2\alpha_0(1-\gamma)\omega \right) \mathbb{E} \left[\norm{\theta^*-\theta_t}_2^2\right] + \alpha_0^2 G^2 \left(9+12\tau^{\mix}(\alpha_0) \right), \label{eq:theta_rec}
\end{eqnarray*}
where we used Lemma \ref{lemma:mc_grad_bias_bd} to go to the second inequality. Iterating over this inequality gives us our final result. For any $T \in \mathbb{N}_0$,
\begin{eqnarray*}
\mathbb{E} \left[\norm{\theta^*-\theta_{T}}_2^2\right] &\leq& \left(1-2\alpha_0(1-\gamma)\omega \right)^T \norm{\theta^*-\theta_0}_2^2 + \alpha_0^2 G^2 \left(9+12\tau^{\mix}(\alpha_0) \right) \sum_{t=0}^{\infty}\left(1-2\alpha_0(1-\gamma)\omega \right)^t \\
&\leq& \left(e^{-2\alpha_0(1-\gamma)\omega T}\right) \norm{\theta^*-\theta_0}_2^2 + \frac{\alpha_0 G^2\left(9+12\tau^{\mix}(\alpha_0) \right)}{2(1-\gamma)\omega}.
\end{eqnarray*}
The second inequality above follows by solving the geometric series and then using the fact that $\left(1-2\alpha_0(1-\gamma)\omega \right) \leq e^{-2\alpha_0(1-\gamma)\omega}$.

\paragraph{Proof of part (c):} We now show the analysis for a linearly decaying step-size using Equation \eqref{eq:mc_basic_rec} as our starting point. We again use Lemma \ref{lemma:strong_conv}, which showed $\norm{V_{\theta^*} - V_{\theta}}^2_{D} \geq \omega  \norm{\theta^* - \theta}_2^2$ for all $\theta$, to get,
\begin{eqnarray*}
\mathbb{E}\left[\norm{V_{\theta^*} - V_{\theta_t}}^2_{D}\right] \leq \frac{1}{(1-\gamma)\alpha_t} \left((1-(1-\gamma)\omega \alpha_t) \mathbb{E}\left[\norm{\theta^* - \theta_t}_2^2\right] - \mathbb{E}\left[\norm{\theta^* - \theta_{t+1}}_2^2\right] + \alpha_t^2 G^2 \right) + \\
\frac{2}{(1-\gamma)} \mathbb{E}\left[\zeta_t(\theta_t)\right]. 
\end{eqnarray*}
Consider a decaying step-size $\alpha_t = \frac{1}{\omega (t+1) (1-\gamma)}$, simplify and sum from $t=0$ to $T-1$ to get
\begin{eqnarray}
\sum_{t=0}^{T-1} \mathbb{E}\left[\norm{V_{\theta^*} - V_{\theta_t}}^2_{D}\right] &\leq& \underbrace{-\omega T \mathbb{E}\left[ \norm{\theta^* - \theta_T}_2^2 \right]}_{< 0} + \frac{G^2}{\omega (1-\gamma)^2} \sum_{t=0}^{T-1} \frac{1}{t+1} + \frac{2}{(1-\gamma)} \sum_{t=0}^{T-1} \mathbb{E}\left[\zeta_t(\theta_t)\right]. \nonumber\\ \label{eq:mc_ana_rec_partb}
\end{eqnarray}
To simplify notation, for the remainder of the proof put $\tau=\tau^{\mix}(\alpha_T)$. We can decompose the sum of gradient errors as
\begin{eqnarray}
\sum_{t=0}^{T-1} \mathbb{E}\left[\zeta_t(\theta_t)\right] = \sum_{t=0}^{\tau} \mathbb{E}\left[\zeta_t(\theta_t)\right] + \sum_{t=\tau + 1}^{T-1} \mathbb{E}\left[\zeta_t(\theta_t)\right].
\end{eqnarray}
We will upper bound each term. In each case we use that, since $\alpha_t = \frac{1}{\omega (t+1) (1-\gamma)}$, 
\[ 
\sum_{t=0}^{T-1}\alpha_t = \frac{1}{\omega(1-\gamma)} \sum_{t=0}^{T-1} \frac{1}{(t+1)} \leq   \frac{1+\log T}{\omega(1-\gamma)}.
\]
Combining this with Lemma \ref{lemma:mc_grad_bias_bd} gives, 
\[
\sum_{t=0}^{\tau} \mathbb{E}\left[\zeta_t(\theta_t)\right] \leq \sum_{t=0}^{\tau} \left( 6G^2 \sum_{i=0}^{t-1} \alpha_i \right) \leq  \tau \left( 6G^2 \sum_{i=0}^{T-1} \alpha_i \right) \leq \frac{6G^2 \tau}{\omega(1-\gamma)} (1 + \log T).
\]
Similarly, using Lemma \ref{lemma:mc_grad_bias_bd}, we have
\[
\sum_{t=\tau+ 1}^{T-1} \mathbb{E}\left[\zeta_t(\theta_t)\right] \leq 2G^2 \left(2 + 3\tau\right) \sum_{t=\tau + 1}^{T-1} \alpha_{t-\tau} \leq 2G^2 \left(2 + 3\tau\right) \sum_{t=1}^{T-1} \alpha_{t} \leq \frac{2G^2 \left(2 + 3\tau\right)}{\omega(1-\gamma)} \left(1 + \log T \right).
\]
Combining the two parts, we get 
\[
\sum_{t=0}^{T-1} \mathbb{E}\left[\zeta_t(\theta_t)\right] \leq \frac{4G^2 \left(1+3\tau\right)}{\omega(1-\gamma)} (1 + \log T).
\]
Using this in conjunction with Equation \eqref{eq:mc_ana_rec_partb} we give final result.
\begin{eqnarray*}
\mathbb{E}\left[\norm{V_{\theta^*} - V_{\bar{\theta}_T}}^2_{D}\right] \leq \frac{1}{T} \sum_{t=0}^{T-1} \mathbb{E}\left[\norm{V_{\theta_t} - V_{\theta^*}}^2_{D}\right] \leq \frac{G^2}{\omega T(1-\gamma)^2} \left(1+\log T\right) + \frac{2}{T(1-\gamma)} \sum_{t=0}^{T-1} \mathbb{E}\left[\zeta_t(\theta_t)\right].
\end{eqnarray*}
Simplifying and substituting $\tau=\tau^{\mix}(\alpha_T)$, we get 
\begin{eqnarray*}
\mathbb{E}\left[\norm{V_{\theta^*} - V_{\bar{\theta}_T}}^2_{D}\right] &\leq& \frac{G^2}{\omega T(1-\gamma)^2} \left(1+\log T\right) + \frac{8G^2 \left(1 + 3\tau^{\mix}(\alpha_T)\right)}{\omega T(1-\gamma)^2} (1 + \log T) \\
&\leq& \frac{ G^2 \left(9+24\tau^{\mix}(\alpha_T)\right)}{\omega T(1-\gamma)^2} \left(1+\log T\right).
\end{eqnarray*}
Additionally, Equation \eqref{eq:mc_ana_rec_partb} also gives us a convergence rate of $\mathcal{O}(\log T/T)$ for the iterate $\theta_T$ itself:
\begin{eqnarray*}
\label{eq:mc_theta_result}
\mathbb{E}\left[\norm{\theta^* - \theta_T}_2^2 \right] &\leq& \frac{ G^2 \left(9+24\tau^{\mix}(\alpha_T)\right)}{\omega^2 T(1-\gamma)^2} \left(1+\log T\right).
\end{eqnarray*}
\end{proof}

\subsection{Proof of Lemma \ref{lemma:mc_noise_bd}}
\label{appendix_a:mc_noise_bd}
\lemmagradbias*
\begin{proof}
The first claim follows from a simple argument using Lemma \ref{lemma:mc_grad_norm}.
\[
\abs{\zeta_t (\theta)} = \abs{ \left(g_t(\theta) - \bar{g}(\theta)\right)^\top(\theta - \theta^*)} \leq \left(\norm{g_t(\theta)}_2 + \norm{\bar{g}(\theta)}_2 \right)\left( \norm{\theta}_2 + \norm{\theta^*}_2\right)  \leq 4GR \leq 2G^2,
\]
where the first inequality follows from the triangle inequality and the Cauchy-Schwartz inequality, and the final inequality uses that $R\leq G/2$ by definition of $G=r_{\max}+2R$. 

To establish the second claim, consider the following inequality for any vectors $(a_1, b_1, a_2, b_2)$:
\[
\abs{a_1^\top b_1-a_2^\top b_2} = \abs{a_1^\top(b_1-b_2) + b_2^\top(a_1-a_2)} \leq \norm{a_1}\norm{b_1-b_2} + \norm{b_2}\norm{a_1-a_2}.
\]
This follows as a direct application of Cauchy-Schwartz. It implies that for any $\theta, \theta'\in \Theta_R$, 
\begin{eqnarray*}
\abs{\zeta_{t}(\theta) - \zeta_{t}(\theta')} &=& \abs{\left(g_t(\theta) - \bar{g}(\theta)\right)^\top(\theta - \theta^*) -  \left(g_t(\theta') - \bar{g}(\theta')\right)^\top(\theta' - \theta^*)} \\
&\leq& \norm{g_t(\theta) - \bar{g}(\theta)}_2 \norm{\theta - \theta'}_2 + \norm{\theta' - \theta^*}_2 \norm{\left(g_t(\theta) - \bar{g}(\theta)\right) -  \left(g_t(\theta') - \bar{g}(\theta')\right)}_2 \\
&\leq& 2G\norm{\theta - \theta'}_2 + 2R \left( \norm{g_t(\theta) - g_t(\theta')}_2 +  \norm{\bar{g}(\theta) - \bar{g}(\theta')}_2 \right) \\
&\leq& 2G\norm{\theta - \theta'}_2 + 8R \norm{\theta - \theta'}_2 \\
&\leq& 6G\norm{\theta - \theta'}_2.
\end{eqnarray*}
where we used that $R \leq G/2$ by the definition of $G$. We also used that both $g_t(\cdot)$ and $\bar{g}(\cdot)$ are 2-Lipschitz functions which is easy to see. Starting with $g_t(\theta) = ( r_t + \gamma \phi(s_{t}')^\top \theta - \phi(s_t)^\top \theta ) \phi(s_t)$, consider
\begin{eqnarray*}
\norm{g_t(\theta)-g_t(\theta')}_2 &=& \norm{\phi(s_t) \left( \gamma \phi(s_{t}') - \phi(s_t) \right)^\top(\theta - \theta')}_2 \\
&\leq& \norm{\phi(s_t)}_2 \norm{\left( \gamma \phi(s_{t}') - \phi(s_t) \right)}_2 \norm{(\theta - \theta')}_2 \\
&\leq& 2 \norm{(\theta - \theta')}_2.
\end{eqnarray*}
Similarly, following Equation \eqref{eq: expectation form of gbar}, we have $\norm{\bar{g}(\theta) - \bar{g}(\theta')}_2 = \norm{\E \left[ \phi \left( \gamma \phi' - \phi \right) \right]^\top (\theta - \theta')}_2$, where $\phi = \phi(s)$ is the feature vector of a random initial state $s\sim \pi$, $\phi'=\phi(s')$ is the feature vector of a random next state drawn according to $s' \sim \Pc(\cdot \mid s)$. Therefore, 
\begin{eqnarray*}
\norm{\bar{g}(\theta) - \bar{g}(\theta')}_2 \leq \norm{\phi \left( \gamma \phi' - \phi \right)^\top(\theta - \theta')} \leq 2 \norm{(\theta - \theta')}_2.
\end{eqnarray*}

\end{proof}


\newpage

\section{Analysis of Projected TD($\lambda$) under Markov chain observation model} 
\label{appendix_b:TD_lambda}
In this section, we give a detailed proof of the convergence bounds presented in Theorem \ref{thrm:lambda_mc_ana}. Subsection \ref{appendix_b:key_lemmas} details our proof strategy along with key lemmas which come together in Subsection \ref{appendix_b:proof_mc_lambda} to establish the results. We begin by providing mathematical expressions for TD($\lambda$) updates.

\paragraph{Stationary distribution of TD($\lambda$) updates:} Recall that the projected TD($\lambda$) update at time $t$ is given by:
\[
\theta_{t+1} = \Pi_{2,R}\left( \theta_t + \alpha_t x_t(\theta_t, z_{0:t}) \right)
\]
where $\Pi_{2,R}(\cdot)$ denotes the projection operator onto a norm ball of radius $R < \infty$ and $x_t(\theta_t, z_{0:t})$ is the update direction. Let us now give explicit mathematical expressions for $x_{t}(\theta, z_{0:t})$ and its steady-state mean $\bar{x}(\theta)$. Note that these are analogous to the expressions for the negative gradient $g_{t}(\theta)$ and its steady-state expectation $\bar{g}(\theta)$ for TD(0). At time $t$, as a general function of (non-random) $\theta$ and the tuple $\mathcal{O}_t = (s_t,r_t, s'_t)$ along with the eligibility trace term $z_{0:t}$, we have
\begin{eqnarray*}
x_t(\theta, z_{0:t}) = \left( r_t + \gamma \phi(s'_{t})^\top \theta - \phi(s_t)^\top \theta \right) z_{0:t} = \delta_t(\theta) z_{0:t} \,\, \quad \forall \,\, \theta\in \mathbb{R}^d.
\end{eqnarray*}
The asymptotic convergence of TD($\lambda$) is closely related to the expected value of $x_t(\theta, z_{0:t})$ under the steady-state behavior of $(O_t, z_{0:t})$,
\[
\bar{x}(\theta)=\lim_{t \rightarrow \infty} \mathbb{E} \left[ \delta_t(\theta) z_{0:t} \right]. 
\]
Rather than take this limit, it will be helpful in our analysis to think of an equivalent \emph{backward view} by constructing a stationary process with mean $\bar{x}(\theta)$. Consider a stationary sequence of states $(\ldots, s_{-1}, s_{0}, s_1, \ldots)$ and set $z_{-\infty:t}=\sum_{k=0}^\infty (\gamma \lambda)^k \phi(s_{t-k})$. Then the sequence $(x_{0}(\theta, z_{-\infty:0}), x_1(\theta, z_{-\infty:1}), \ldots)$ is stationary, and we have 
\begin{eqnarray}
\label{eq:expec_grad_TD_lambda}
\bar{x}(\theta) = \mathbb{E}\left[ \delta_t(\theta) z_{-\infty:t} \right].
\end{eqnarray}
It should be emphasized that $\bar{x}(\theta)$ and the states $(\ldots, s_{-2}, s_{-1})$ are introduced only for the purposes of our analysis and are never used by the algorithm itself. However, this turns out to be quite useful as it is easy to show \citep{van1998learning} that 
\begin{eqnarray}
\label{eq:expect_result_TD_lambda}
\bar{x}(\theta) = \Phi^\top D \left( T_{\mu}^{(\lambda)} \Phi \theta - \Phi \theta \right),
\end{eqnarray}
where $\Phi$ is the feature matrix and $( T_{\mu}^{(\lambda)} \Phi \theta - \Phi \theta )$ denotes the Bellman error defined with respect to the Bellman operator $T_{\mu}^{(\lambda)}(\cdot)$, corresponding to a policy $\mu$. Careful readers will notice the stark similarity between Equation \eqref{eq:expect_result_TD_lambda} and Equation \eqref{eq:Expected grad matrix form}. Exploiting the property that $\Pi_D T_{\mu}^{\lambda}(\cdot)$ is also a contraction operator, one can easily show a result equivalent to Lemma \ref{lemma:expected_gradient}, thus quantifying the progress we make by taking steps in the direction of $\bar{x}(\theta)$. The rest of our proof essentially shows how to control for the observation noise, i.e. the fact that we use $x_t(\theta, z_{0:t})$ rather than $\bar{x}(\theta)$ to make updates. To remind the readers of the results, we first restate Theorem \ref{thrm:lambda_mc_ana} below.

\LambdaMCBound*

\subsection{Proof strategy and key lemmas}
\label{appendix_b:key_lemmas}
We now describe our proof strategy and give key lemmas used to establish Theorem \ref{thrm:lambda_mc_ana}. Throughout, we consider the Markov chain observation model with Assumption \ref{as:6} and study the Projected TD ($\lambda$) algorithm applied with parameter $R \geq \norm{\theta^*}_2$ and step-size sequence $(\alpha_0, \ldots, \alpha_T)$. To simplify our exposition, we introduce some notation below.

\paragraph{Notation:} We specify the notation used throughout this section. Define the set $\Theta_{R}= \{\theta \in \mathbb{R}^d : \| \theta\|_2 \leq R \}$, so $\theta_t \in \Theta_R$ for each $t$ because of the algorithm's projection step. Next, we generically define $z_{l:t} = \sum_{k=0}^{t-l} (\gamma \lambda)^k \phi(s_{t-k})$ for any lower limit $l \leq t$. Thus, $z_{l:t}$ denotes the eligibility trace as a function of the states $(s_l, \ldots, s_t)$. Next, we define $\zeta_t(\theta, z_{l:t})$ as a general function of $\theta$ and $z_{l:t}$,
\begin{eqnarray}
\label{eq:error_fn_lambda}
\zeta_t(\theta, z_{l:t}) = \left(\delta_t(\theta) z_{l:t} - \bar{x}(\theta) \right)^\top (\theta - \theta^*).
\end{eqnarray}
Here, the subscript $t$ in $\zeta_t$ encodes the dependence on the tuple $O_t = (s_t,r_t,s'_t)$ which is used to compute the Bellman error, $\delta_t(\cdot)$ at time $t$. Finally, we set $B:=(r_{\max} + 2R)/(1 - \gamma \lambda)$ which implies $B > R/2$, a fact we use many times in our proofs to simplify constant terms. As a reminder, note that our bounds depend on the mixing time, which we defined in Section \ref{sec:td_lambda} as
\[
\tau^{\mix}_{\lambda}(\epsilon) = \max \{ \tau^{\text{MC}}(\epsilon), \tau^{\text{Algo}}(\epsilon) \},
\]
where $\tau^{\text{MC}}(\epsilon) = \min \{ t \in \mathbb{N}_0 \,\, | \,\, m \rho^t \leq \epsilon \}$ and $\tau^{\text{Algo}}(\epsilon) = \min \{ t \in \mathbb{N}_0 \,\, | \,\, (\gamma \lambda)^t \leq \epsilon \}$.

\paragraph{Proof outline:}
The analysis for TD($\lambda$) can be broadly divided into three parts and closely mimics the steps used to prove TD(0) results. 
\begin{enumerate}
\item As a first step, we do an error decomposition, similar to the result shown in Lemma \ref{lemma:mc_grad_decomp}. This is enabled by two key lemmas, which are analogues of Lemma \ref{lemma:expected_gradient} and Lemma \ref{lemma:mc_grad_norm} for Projected TD(0). The first one spells out a clear relationship of how the updates following $\bar{x}(\theta)$ point in the descent direction of $\| \theta^* - \theta \|^2_2$ while the second one upper bounds the norm of the update direction, $x_t(\theta, z_{0:t})$, by the constant $B$ (as defined above).
\item The error decomposition that we obtain from Step 1 can be stated as:
\[
\E[\norm{\theta^* - \theta_{t+1}}_2^2] \leq \E[\norm{\theta^* - \theta_t}_2^2] - 2 \alpha_t (1-\kappa) \E[\norm{V_{\theta^*} - V_{\theta_t} }^2_{D}] + 2\alpha_t \E[\zeta_t(\theta_t, z_{0:t})] + \alpha_t^2 B^2. 
\]
In the second step, we establish an upper bound on the bias term, $\E \left[ \zeta_t(\theta_t, z_{0:t}) \right]$, which is the main challenge in our proof. Recall that the dependent nature of the state transitions may result in strong coupling between the tuples $\mathcal{O}_{t-1}$ and $\mathcal{O}_t$ under the Markov chain observation model. Therefore, this bias in update direction can potentially be non-zero. Presence of the eligibility trace term, $z_{0:t}$, which is a function of the entire history of states, $(s_0, \ldots, s_t)$, further complicates the analysis by introducing subtle dependencies. 

To control for this, we use information-theoretic techniques shown in Lemma \ref{lemma:mc_exp_bd} which exploit the geometric ergodicity of the MDP, along with the geometric weighting of state features in the eligibility trace term. Our result essentially shows that the bias scales the noise in update direction by a factor of the mixing time. Mathematically, for a constant step-size $\alpha$, we show that $\mathbb{E}\left[\alpha \zeta_t(\theta_t, z_{0:t}) \right] \approx B^2 (6 + 12 \tau^{\mix}_{\lambda}(\alpha)) \alpha^2$. We show a similar result  for decaying step-sizes as well. 
\item In the final step, we combine the error decomposition from Step 1 and the bound on the bias from Step 2, to establish finite time bounds on the performance of Projected TD($\lambda$) for different step-size choices. We closely mimic the analysis of \cite{nemirovski2009robust} for a constant, aggressive step-size of $(1/\sqrt{T})$ and the proof ideas of \cite{lacoste2012simpler} for decaying step-sizes.
\end{enumerate}

\subsubsection{Error decomposition under Projected TD($\lambda$)}
We first prove Lemmas \ref{lemma:expected_grad_td_lambda} and \ref{lemma:norm_grad_lambda} which enable the error decomposition shown in Lemma \ref{lemma:mc_lambda_grad_decomp}. 
\begin{restatable}[]{lem}{lemmabasictdlambda}
[\textbf{\cite{tsitsiklis1997analysis}}]
\label{lemma:expected_grad_td_lambda}
Let $V_{\theta^{*}}$ be the unique fixed point of $\Pi_{D} T_{\mu}^{(\lambda)}(\cdot)$ i.e. $V_{\theta^*} = \Pi T_{\mu}^{(\lambda)} V_{\theta^*}$. For any $\theta \in \mathbb{R}^d$, 
\begin{equation*}
(\theta^{*} - \theta)^\top \bar{x}(\theta) \,\, \geq \,\, (1-\kappa) \norm{V_{\theta^*} - V_{\theta}}^2_{D}.
\end{equation*}
\end{restatable}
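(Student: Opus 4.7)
The plan is to mirror the proof of Lemma \ref{lemma:expected_gradient} for TD(0), but working with the $\lambda$-averaged Bellman operator $T_\mu^{(\lambda)}$ and its contraction modulus $\kappa$ instead of $T_\mu$ and $\gamma$. The starting point is the matrix-form identity $\bar{x}(\theta) = \Phi^\top D(T_\mu^{(\lambda)}\Phi\theta - \Phi\theta)$ given in \eqref{eq:expect_result_TD_lambda}, together with the fact that $\theta^*$ solves the projected Bellman equation $\Phi\theta^* = \Pi_D T_\mu^{(\lambda)} \Phi\theta^*$. The latter says exactly that $V_{\theta^*}$ is the $D$-orthogonal projection of $T_\mu^{(\lambda)} V_{\theta^*}$ onto $\mathrm{range}(\Phi)$, so the residual $T_\mu^{(\lambda)} V_{\theta^*} - V_{\theta^*}$ is $D$-orthogonal to $\mathrm{range}(\Phi)$; equivalently, $\bar{x}(\theta^*) = 0$.

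Next I would rewrite the inner product on parameter space as an inner product on value-function space. Using $V_{\theta} = \Phi\theta$,
\[
(\theta^* - \theta)^\top \bar{x}(\theta) = (V_{\theta^*} - V_\theta)^\top D\bigl(T_\mu^{(\lambda)} V_\theta - V_\theta\bigr) = \bigl\langle V_{\theta^*} - V_\theta,\; T_\mu^{(\lambda)} V_\theta - V_\theta\bigr\rangle_D.
\]
Then add and subtract $V_{\theta^*}$ and $T_\mu^{(\lambda)} V_{\theta^*}$ inside the second slot to split it as
\[
\bigl\langle V_{\theta^*} - V_\theta,\; V_{\theta^*} - V_\theta\bigr\rangle_D + \bigl\langle V_{\theta^*} - V_\theta,\; T_\mu^{(\lambda)} V_\theta - T_\mu^{(\lambda)} V_{\theta^*}\bigr\rangle_D + \bigl\langle V_{\theta^*} - V_\theta,\; T_\mu^{(\lambda)} V_{\theta^*} - V_{\theta^*}\bigr\rangle_D.
\]
The first term equals $\|V_{\theta^*} - V_\theta\|_D^2$. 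The third term vanishes because $V_{\theta^*} - V_\theta \in \mathrm{range}(\Phi)$ while $T_\mu^{(\lambda)} V_{\theta^*} - V_{\theta^*}$ is $D$-orthogonal to that subspace (this is precisely the projection identity noted above). So only the middle cross term remains.

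For the cross term I would apply Cauchy--Schwarz in the $\langle\cdot,\cdot\rangle_D$ inner product and then use the $\kappa$-contraction of $T_\mu^{(\lambda)}$ itself on $\|\cdot\|_D$ — this is the statement that underlies Lemma \ref{lemmm:contract_td_lambda} (projections onto subspaces are $\|\cdot\|_D$-nonexpansive, so contraction of $\Pi_D T_\mu^{(\lambda)}$ is inherited from contraction of $T_\mu^{(\lambda)}$). This yields
\[
\bigl\langle V_{\theta^*} - V_\theta,\; T_\mu^{(\lambda)} V_\theta - T_\mu^{(\lambda)} V_{\theta^*}\bigr\rangle_D \geq -\|V_{\theta^*} - V_\theta\|_D \cdot \kappa\|V_{\theta^*} - V_\theta\|_D = -\kappa\|V_{\theta^*} - V_\theta\|_D^2,
\]
so combining the three pieces gives exactly $(\theta^*-\theta)^\top \bar{x}(\theta) \geq (1-\kappa)\|V_{\theta^*} - V_\theta\|_D^2$.

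The only subtle point — and the main obstacle if one wanted a completely self-contained write-up — is justifying the $\|\cdot\|_D$-contraction of $T_\mu^{(\lambda)}$ as a mapping from all of $\mathbb{R}^n$ to itself, as opposed to the contraction of the composite $\Pi_D T_\mu^{(\lambda)}$ stated in Lemma \ref{lemmm:contract_td_lambda}. This is a standard result of Tsitsiklis and Van Roy, proved by writing $T_\mu^{(\lambda)}$ as a convex combination of the multi-step operators $T_\mu^{(k)}$ (each of which is a $\gamma^{k+1}$-contraction in $\|\cdot\|_D$ by the usual Jensen/stationarity argument that appears in the proof of Lemma \ref{lemma:BO_prop}), then pushing the norm inside the convex combination; this yields contraction modulus $(1-\lambda)\sum_{k\geq 0}\lambda^k \gamma^{k+1} = \kappa$. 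Everything else is routine linear algebra and the orthogonality characterization of $\theta^*$.
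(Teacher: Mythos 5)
Your proof is correct, but it takes a slightly different route from the paper's. The paper proves this via a single general lemma in Appendix~\ref{appendix:a1} (Lemma~\ref{lemma:expected_gradient_general}, stated for any operator $H$ with $\Pi_D H$ a contraction, so that it covers TD($\lambda$) and Q-learning for optimal stopping at once): starting from $\langle \Phi(\theta^*-\theta),\, T_\mu^{(\lambda)}\Phi\theta-\Phi\theta\rangle_D$, it inserts $\Pi_D$ in the first slot (harmless since $\Phi(\theta^*-\theta)$ lies in the range of $\Phi$), moves it to the second slot by self-adjointness of $\Pi_D$ with respect to $\langle\cdot,\cdot\rangle_D$, adds and subtracts $\Phi\theta^*$, and then applies Cauchy--Schwarz together with the contraction of the \emph{composite} $\Pi_D T_\mu^{(\lambda)}$ --- which is precisely Lemma~\ref{lemmm:contract_td_lambda}, already stated in the text. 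You instead split the Bellman residual into three pieces, kill the fixed-point residual term by the orthogonality characterization of $\theta^*$, and bound the cross term using the contraction of the \emph{unprojected} operator $T_\mu^{(\lambda)}$ on all of $\mathbb{R}^n$. That last fact is not stated anywhere in the paper, and you correctly identify it as the one gap in a self-contained write-up; your sketch of it (convex combination of the $\gamma^{k+1}$-contractions $T_\mu^{(k)}$, giving modulus $(1-\lambda)\sum_k\lambda^k\gamma^{k+1}=\kappa$) is accurate and standard. The trade-off is that the paper's argument cites only lemmas it has already proved and rests on the formally weaker hypothesis (contraction of $\Pi_D H$ rather than of $H$), whereas yours needs one extra auxiliary fact but makes the role of the fixed-point orthogonality more explicit. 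Both yield the identical bound.
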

\begin{proof}
We use the definition of $\bar{x}(\theta) = \langle \Phi^\top,T_{\mu}^{(\lambda)} \Phi \theta - \Phi \theta \rangle_D$ as shown in Equation \eqref{eq:expect_result_TD_lambda} along with the fact that $\Pi_{D} T_{\mu}^{(\lambda)} (\cdot)$ is a contraction with respect to $\| \cdot \|_{D}$ with modulus $\kappa$. See Appendix \ref{appendix:a1} for a complete proof.
\end{proof} 
\begin{restatable}[]{lem}{lemmasixth}
\label{lemma:norm_grad_lambda}
For all $\theta \in \Theta_R$, $\norm{x_t(\theta, z_{0:t})}_2 \leq B$ with probability 1. Additionally, $ \norm{\bar{x}(\theta)}_2 \leq B$.
\end{restatable}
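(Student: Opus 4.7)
The plan is to bound $\|x_t(\theta,z_{0:t})\|_2$ by pulling the norm apart into a product of $|\delta_t(\theta)|$ and $\|z_{0:t}\|_2$, and then bounding each factor using the assumptions on rewards, feature normalization, and the projection radius $R$. The second claim then follows from an essentially identical argument applied pointwise inside an expectation, using the backward-view representation $\bar{x}(\theta)=\mathbb{E}[\delta_t(\theta)\,z_{-\infty:t}]$ from Equation \eqref{eq:expec_grad_TD_lambda} together with Jensen's inequality.

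Concretely, first I would expand
\[
\|x_t(\theta,z_{0:t})\|_2 = |\delta_t(\theta)| \cdot \|z_{0:t}\|_2.
\]
For the TD error, Cauchy--Schwarz, the triangle inequality, and the bounds $\|\phi(s)\|_2\leq 1$, $|r_t|\leq r_{\max}$, $\|\theta\|_2\leq R$ give
\[
|\delta_t(\theta)| \leq |r_t| + \gamma |\phi(s'_t)^\top \theta| + |\phi(s_t)^\top \theta| \leq r_{\max} + (1+\gamma)R \leq r_{\max} + 2R.
\]
For the eligibility trace, the triangle inequality and $\|\phi(s_{t-k})\|_2\leq 1$ yield
\[
\|z_{0:t}\|_2 \leq \sum_{k=0}^{t} (\gamma\lambda)^k \|\phi(s_{t-k})\|_2 \leq \sum_{k=0}^{\infty} (\gamma\lambda)^k = \frac{1}{1-\gamma\lambda},
\]
where finiteness of the geometric series uses $\gamma\lambda<1$. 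Multiplying gives $\|x_t(\theta,z_{0:t})\|_2 \leq (r_{\max}+2R)/(1-\gamma\lambda)=B$.

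For the second assertion, applying Jensen's inequality to \eqref{eq:expec_grad_TD_lambda} under the stationary backward-view process gives $\|\bar{x}(\theta)\|_2 \leq \mathbb{E}[|\delta_t(\theta)|\cdot \|z_{-\infty:t}\|_2]$. The same bound on $|\delta_t(\theta)|$ carries through pointwise, while $\|z_{-\infty:t}\|_2 \leq \sum_{k=0}^\infty (\gamma\lambda)^k = 1/(1-\gamma\lambda)$ almost surely, so the expectation is at most $B$. No step here is delicate — this is simply a bookkeeping exercise — so the only mild obstacle is making sure one uses the fully infinite geometric sum in both cases, which is what produces the same constant $B$ for the finite-horizon trace $z_{0:t}$ and the stationary trace $z_{-\infty:t}$ and thereby unifies the two claims.
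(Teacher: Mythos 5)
Your proposal is correct and follows essentially the same route as the paper: factor $\|x_t(\theta,z_{0:t})\|_2=|\delta_t(\theta)|\,\|z_{0:t}\|_2$, bound the TD error by $r_{\max}+2R$ and the trace by the geometric series $1/(1-\gamma\lambda)$, then handle $\bar{x}(\theta)$ via the backward-view representation and the same pointwise bound under the expectation. No substantive differences.
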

\begin{proof}
See Subsection \ref{appendix_b:proof_supp_lemma_lambda} for a complete proof.
\end{proof}
The above two lemmas can be easily combined to establish a recursion for the error under projected TD($\lambda$) that holds for each sample path. 
\begin{restatable}[]{lem}{lemmamclambdagraddecomp}
\label{lemma:mc_lambda_grad_decomp} 
With probability 1, for every $t\in \mathbb{N}_0$,
\[
\norm{\theta^* - \theta_{t+1}}_2^2 \leq \norm{\theta^* - \theta_t}_2^2 - 2 \alpha_t (1-\kappa) \norm{V_{\theta^*} - V_{\theta_t} }^2_{D} + 2\alpha_t \zeta_t(\theta_t, z_{0:t}) + \alpha_t^2 B^2. 
\]
\end{restatable}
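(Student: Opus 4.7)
The plan is to mirror the argument used for Lemma \ref{lemma:mc_grad_decomp} in the TD(0) case, since the structure of the inequality is identical in form, with $\bar{x}(\theta)$ playing the role of $\bar{g}(\theta)$, $x_t(\theta, z_{0:t})$ replacing $g_t(\theta)$, $B$ replacing $G$, and $\kappa$ replacing $\gamma$. The starting point is the projected update rule $\theta_{t+1} = \Pi_{2,R}(\theta_t + \alpha_t x_t(\theta_t, z_{0:t}))$. Since $\theta^* \in \Theta_R$ by assumption, we have $\theta^* = \Pi_{2,R}(\theta^*)$, and so by non-expansiveness of the orthogonal projection onto the convex set $\Theta_R$,
\[
\|\theta^* - \theta_{t+1}\|_2^2 \leq \|\theta^* - \theta_t - \alpha_t x_t(\theta_t, z_{0:t})\|_2^2.
\]

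Next, I would expand the square on the right-hand side to obtain
\[
\|\theta^* - \theta_t\|_2^2 - 2\alpha_t\, x_t(\theta_t, z_{0:t})^\top(\theta^* - \theta_t) + \alpha_t^2 \|x_t(\theta_t, z_{0:t})\|_2^2,
\]
and invoke Lemma \ref{lemma:norm_grad_lambda} to replace the last term with $\alpha_t^2 B^2$ (this is valid with probability 1 since $\theta_t \in \Theta_R$ by construction).

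The key algebraic step is to split the inner product around the mean update direction: write $x_t(\theta_t, z_{0:t}) = \bar{x}(\theta_t) + (x_t(\theta_t, z_{0:t}) - \bar{x}(\theta_t))$, which yields
\[
-2\alpha_t\, x_t(\theta_t, z_{0:t})^\top(\theta^* - \theta_t) = -2\alpha_t\, \bar{x}(\theta_t)^\top(\theta^* - \theta_t) + 2\alpha_t\, (x_t(\theta_t, z_{0:t}) - \bar{x}(\theta_t))^\top(\theta_t - \theta^*).
\]
Recognizing the second summand as exactly $2\alpha_t\, \zeta_t(\theta_t, z_{0:t})$ from definition \eqref{eq:error_fn_lambda}, and applying Lemma \ref{lemma:expected_grad_td_lambda} to the first summand to get $-2\alpha_t \bar{x}(\theta_t)^\top(\theta^* - \theta_t) \leq -2\alpha_t(1-\kappa)\|V_{\theta^*} - V_{\theta_t}\|_D^2$, gives the desired decomposition directly.

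There is no real obstacle here; the result is essentially a bookkeeping exercise that packages the contraction-based inner product lower bound (Lemma \ref{lemma:expected_grad_td_lambda}) together with the uniform norm bound on the update direction (Lemma \ref{lemma:norm_grad_lambda}), and isolates the stochastic error through the definition of $\zeta_t$. The hard work in the TD($\lambda$) analysis is deferred to subsequent steps, namely bounding $\E[\zeta_t(\theta_t, z_{0:t})]$ in the presence of both Markovian correlations and the long-memory eligibility trace, which is where the information-theoretic techniques of Lemma \ref{lemma:mc_exp_bd} will be brought to bear.
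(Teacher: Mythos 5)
Your proposal is correct and follows exactly the same route as the paper's proof: non-expansiveness of $\Pi_{2,R}$ (using $\theta^* = \Pi_{2,R}(\theta^*)$ since $R \geq \|\theta^*\|_2$), expansion of the square, the uniform bound $\|x_t(\theta_t,z_{0:t})\|_2 \leq B$ from Lemma \ref{lemma:norm_grad_lambda}, the split of the inner product into $\bar{x}(\theta_t)$ plus the $\zeta_t$ error term, and finally Lemma \ref{lemma:expected_grad_td_lambda}. No differences worth noting.
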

\begin{proof}
The Projected TD($\lambda$) algorithm updates the parameter as: $\theta_{t+1} = \Pi_{2,R} [\theta_t + \alpha_t x_t(\theta_t,z_{0:t})] \,\,\, \forall \,\, t \in \mathbb{N}_0$. This implies, 
\begin{eqnarray*}
\norm{\theta^* - \theta_{t+1}}_2^2 &=& \norm{\theta^* - \Pi_{2,R}(\theta_t + \alpha_t x_t(\theta, z_{0:t}))}_2^2 \\
&=& \norm{\Pi_{2,R}(\theta^*) - \Pi_{2,R}(\theta_t + \alpha_t x_t(\theta_t, z_{0:t}))}_2^2 \\
&\leq& \norm{\theta^* - \theta_t - \alpha_t x_t(\theta_t, z_{0:t})}_2^2 \\
&=& \norm{\theta^* - \theta_t}_2^2 - 2\alpha_t x_t(\theta_t, z_{0:t})^\top (\theta^* - \theta_t) + \alpha_t^2 \norm{x_t(\theta_t, z_{0:t})}_2^2 \\
&\leq& \norm{\theta^* - \theta_t}_2^2 - 2\alpha_t x_t(\theta_t, z_{0:t})^\top (\theta^* - \theta_t) + \alpha_t^2 B^2 \\
&=&  \| \theta^* - \theta_t \|_2^2 - 2\alpha_t \bar{x}(\theta_t)^\top (\theta^*-\theta_t) + 2\alpha_t \zeta_t(\theta_t, z_{0:t}) + \alpha_t^2 G^2.\\
&\leq& \norm{\theta^* - \theta_t}_2^2 - 2\alpha_t (1-\kappa) \norm{V_{\theta^*} - V_{\theta}}^2_{D} + 2\alpha_t \zeta_t(\theta_t, z_{0:t}) + \alpha_t^2 B^2.
\end{eqnarray*}
The first inequality used that orthogonal projection operators onto a convex set are non-expansive, the second used Lemma \ref{lemma:norm_grad_lambda} together with the fact $\|\theta_t\|_2 \leq R$ due to projection, and the third used Lemma \ref{lemma:expected_grad_td_lambda}.  Note that we used $\zeta_t(\theta_t, z_{0:t})$ to simplify the notation for the error in the update direction. Recall the definition of the error function from Equation \eqref{eq:error_fn_lambda} which implies,
\begin{eqnarray*}
\zeta_t(\theta_t, z_{0:t}) = (\delta_t(\theta_t)z_{0:t} - \bar{x}(\theta_t))^\top (\theta_t - \theta^*) = (x_t(\theta_t, z_{0:t}) - \bar{x}(\theta_t))^\top (\theta_t - \theta^*).
\end{eqnarray*}
\end{proof}

\subsubsection{Upper bound on the bias in update direction.}
We give an upper bound on the expected error in the update direction, $\E[\zeta_t(\theta_t, z_{0:t})]$, which as explained above, is the key challenge for our analysis. For this, we first establish some basic regularity properties of the error function $\zeta_t(\cdot, \cdot)$ in Lemma \ref{lemma:lip_lambda} below. In particular, part (a) shows boundedness, part (b) shows that it is Lipschitz in the first argument and part (c) bounds the error due to truncation of the eligibility trace. Recall that $z_{l:t}$ denotes the eligibility trace as a function of the states $(s_l, \ldots, s_t)$.
\begin{restatable}[]{lem}{lemmaliplambda}
\label{lemma:lip_lambda}
Consider any $l \leq t$ and any $\theta, \theta' \in \Theta_R$. With probability 1,
\begin{enumerate}[label=(\alph*)]
\item $\abs{\zeta_t(\theta,z_{l:t})} \leq 2B^2$.
\item $\abs{\zeta_t (\theta,z_{l:t}) - \zeta_t (\theta',z_{l:t})} \leq 6B \norm{(\theta - \theta^')}_2.$
\item The following two bounds also hold,
\begin{eqnarray*}
\abs{\zeta_t(\theta,z_{0:t}) - \zeta_t(\theta,z_{t-\tau:t})} &\leq& B^2(\gamma \lambda)^\tau \,\, \text{for all} \,\, \tau \leq t, \\
\abs{\zeta_t(\theta,z_{0:t}) - \zeta_t(\theta,z_{-\infty:t})} &\leq& B^2(\gamma \lambda)^t.
\end{eqnarray*}

\end{enumerate}
\end{restatable}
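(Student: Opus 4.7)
The strategy is to mirror the proof of Lemma \ref{lemma:mc_noise_bd} (the TD(0) analogue) and then add a short calculation for the new part (c) which captures the error from truncating the eligibility trace. Throughout, I would repeatedly use three basic facts: $\|\theta\|_2 \le R$ and $\|\theta^*\|_2 \le R$ for $\theta\in\Theta_R$; the uniform bound $\|x_t(\theta,z_{l:t})\|_2\le B$ and $\|\bar x(\theta)\|_2\le B$ from Lemma \ref{lemma:norm_grad_lambda} (which, inspecting its proof, holds for any starting index $l\le t$ because $\|z_{l:t}\|_2\le \sum_{k\ge 0}(\gamma\lambda)^k = 1/(1-\gamma\lambda)$); and the elementary inequality $2R\le B$, which follows from $B=(r_{\max}+2R)/(1-\gamma\lambda)\ge 2R$.

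For part (a), I would apply the triangle inequality and Cauchy--Schwarz directly to the definition of $\zeta_t$:
\[
|\zeta_t(\theta,z_{l:t})| \le \bigl(\|x_t(\theta,z_{l:t})\|_2+\|\bar x(\theta)\|_2\bigr)\bigl(\|\theta\|_2+\|\theta^*\|_2\bigr) \le (2B)(2R)=4BR \le 2B^2,
\]
using $R\le B/2$ in the final step. Part (b) would follow from the two-vector identity $|a_1^\top b_1-a_2^\top b_2|\le \|a_1\|\|b_1-b_2\| + \|b_2\|\|a_1-a_2\|$ used in the proof of Lemma \ref{lemma:mc_noise_bd}, taking $a_1 = x_t(\theta,z_{l:t})-\bar x(\theta)$, $a_2 = x_t(\theta',z_{l:t})-\bar x(\theta')$, $b_1=\theta-\theta^*$, $b_2=\theta'-\theta^*$. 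The key sub-step is to verify that both $\theta\mapsto x_t(\theta,z_{l:t})$ and $\bar x$ are Lipschitz. Since $\delta_t(\theta)-\delta_t(\theta')=(\gamma\phi(s'_t)-\phi(s_t))^\top(\theta-\theta')$ and $\|\gamma\phi(s'_t)-\phi(s_t)\|_2\le 2$, one obtains
\[
\|x_t(\theta,z_{l:t})-x_t(\theta',z_{l:t})\|_2 \le 2\|z_{l:t}\|_2\|\theta-\theta'\|_2 \le \frac{2}{1-\gamma\lambda}\|\theta-\theta'\|_2,
\]
and the same Lipschitz constant carries over to $\bar x$ by Jensen's inequality applied to the representation \eqref{eq:expec_grad_TD_lambda}. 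Plugging these into the identity and simplifying with $R/(1-\gamma\lambda)\le B/2$ gives the coefficient $6B$.

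For part (c) -- the genuinely new piece -- I would exploit the fact that $\zeta_t$ is \emph{affine} in its second argument, so the difference telescopes to
\[
\zeta_t(\theta,z_{0:t}) - \zeta_t(\theta,z_{l:t}) = \delta_t(\theta)\bigl(z_{0:t}-z_{l:t}\bigr)^\top(\theta-\theta^*),
\]
which removes $\bar x(\theta)$ from the calculation entirely. For the first bound, $z_{0:t}-z_{t-\tau:t}=\sum_{k=\tau+1}^{t}(\gamma\lambda)^k\phi(s_{t-k})$, whose norm is bounded by the geometric tail $(\gamma\lambda)^{\tau+1}/(1-\gamma\lambda)$. Combined with $|\delta_t(\theta)|\le r_{\max}+2R$ and $\|\theta-\theta^*\|_2\le 2R\le B$, this gives
\[
|\zeta_t(\theta,z_{0:t})-\zeta_t(\theta,z_{t-\tau:t})| \le (r_{\max}+2R)\cdot\frac{(\gamma\lambda)^{\tau+1}}{1-\gamma\lambda}\cdot B = B^2(\gamma\lambda)^{\tau+1}\le B^2(\gamma\lambda)^\tau.
\]
The second bound for $z_{-\infty:t}$ is handled by the identical computation with the tail starting at index $t+1$ instead of $\tau+1$.

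I do not expect any serious obstacle here. The only thing that requires care is keeping the constants honest across parts (a) and (b) -- in particular tracking when $R/(1-\gamma\lambda)$, $R$, and $B$ can be swapped up to small factors -- since the definition $B=(r_{\max}+2R)/(1-\gamma\lambda)$ already absorbs the $(1-\gamma\lambda)^{-1}$ factor that the eligibility trace would otherwise introduce. Part (c) is structurally the simplest because linearity in $z_{l:t}$ turns the problem into a deterministic geometric-series estimate.
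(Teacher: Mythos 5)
Your proposal is correct and follows essentially the same route as the paper's proof: part (a) via Cauchy--Schwarz with the uniform bounds $\|x_t(\theta,z_{l:t})\|_2,\|\bar x(\theta)\|_2\le B$ and $\|\theta-\theta^*\|_2\le 2R\le B$; part (b) via the same two-vector identity and the Lipschitz constant $2/(1-\gamma\lambda)$ for both $\delta_t(\cdot)z_{l:t}$ and $\bar x(\cdot)$; and part (c) via the cancellation of $\bar x(\theta)$ and a geometric tail estimate on $z_{0:t}-z_{l:t}$. All constants match the paper's.
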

\begin{proof}
We essentially use the uniform bound on $x_t(\theta, z_{0:t})$ and $\bar{x}(\theta)$ as stated in Lemma \ref{lemma:norm_grad_lambda} to show this result. See Subsection \ref{appendix_b:proof_supp_lemma_lambda} for a detailed proof.
\end{proof}
Lemma \ref{lemma:lip_lambda} can be combined with Lemma \ref{lemma:mc_exp_bd} to give an upper bound on the bias term, $\E\left[ \zeta_t(\theta_t, z_{0:t}) \right]$, as shown below.
\begin{restatable}[]{lem}{}
Consider a non-increasing step-size sequence, $\alpha_0 \geq \alpha_1 \ldots \geq \alpha_T$. Then the following hold. 
\label{lemma:lambda_grad_bias_bd}
\begin{enumerate}[label=(\alph*)]
\item For $\,\, 2 \tau^{\mix}_{\lambda}(\alpha_T) < t \leq T$,
\begin{eqnarray*}
\E\left[ \zeta_t(\theta_t, z_{0:t}) \right] \leq 6B^2 (1 + 2\tau^{\mix}_{\lambda}(\alpha_T)) \alpha_{t- 2\tau^{\mix}_{\lambda}(\alpha_T)}.
\end{eqnarray*}
\item For $\,\, 0 \leq t \leq 2 \tau^{\mix}_{\lambda}(\alpha_T)$,
\begin{eqnarray*}
\E\left[ \zeta_t(\theta_t, z_{0:t}) \right] \leq 6B^2\left(1+2\tau^{\mix}_{\lambda}(\alpha_T)\right)\alpha_0 + B^2(\gamma \lambda)^{t}.
\end{eqnarray*}
\item For all $ t \in \mathbb{N}_0$,
\begin{eqnarray*}
\E\left[ \zeta_t(\theta_t, z_{0:t}) \right] \leq 6B^2 \sum_{i=0}^{t-1} \alpha_i + B^2(\gamma \lambda)^{t}
\end{eqnarray*}
\end{enumerate}
\end{restatable}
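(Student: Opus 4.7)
The plan is to follow the three-part strategy used for TD(0) in Lemma \ref{lemma:mc_grad_bias_bd}—truncate to a ``backward-looking'' parameter, control the Lipschitz change, and bound the residual bias via coupling and mixing—but with two new complications to absorb. First, the update direction depends on the entire eligibility trace $z_{0:t}$ rather than a single tuple. Second, even for deterministic $\theta$ one no longer has $\E[\zeta_t(\theta, z_{0:t})] = 0$, because the finite-history trace is not stationary. Both issues are handled using Lemma \ref{lemma:lip_lambda}(c), which shows that the truncation error in the trace decays geometrically at rate $\gamma\lambda$.

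For part (a), set $\tau_1 = \tau^{\mix}_\lambda(\alpha_T)$ and decompose
\begin{align*}
\zeta_t(\theta_t, z_{0:t}) &= \underbrace{\bigl[\zeta_t(\theta_t, z_{0:t}) - \zeta_t(\theta_t, z_{t-\tau_1:t})\bigr]}_{\text{(I): trace truncation}} + \underbrace{\bigl[\zeta_t(\theta_t, z_{t-\tau_1:t}) - \zeta_t(\theta_{t-2\tau_1}, z_{t-\tau_1:t})\bigr]}_{\text{(II): parameter swap}} \\
&\quad + \underbrace{\zeta_t(\theta_{t-2\tau_1}, z_{t-\tau_1:t})}_{\text{(III): residual bias}}.
\end{align*}
Term (I) is at most $B^2(\gamma\lambda)^{\tau_1} \leq B^2 \alpha_T$ by Lemma \ref{lemma:lip_lambda}(c) together with $\tau_1 \geq \tau^{\text{Algo}}(\alpha_T)$. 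For term (II), Lemma \ref{lemma:lip_lambda}(b) gives $|(II)| \leq 6B \|\theta_t - \theta_{t-2\tau_1}\|_2$, and the projected recursion combined with Lemma \ref{lemma:norm_grad_lambda} yields $\|\theta_{i+1} - \theta_i\|_2 \leq B\alpha_i$, so after summing and using that step-sizes are non-increasing, $|(II)| \leq 12 B^2 \tau_1 \alpha_{t-2\tau_1}$.

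The substantive step is the bias bound on $\E[(III)]$. The key structural fact is that the truncated trace $z_{t-\tau_1:t}$ and the observation $O_t$ depend only on the states $s_{t-\tau_1}, \dots, s_{t+1}$, while $\theta_{t-2\tau_1}$ depends only on states up to time $t-2\tau_1$, so $\theta_{t-2\tau_1} \to s_{t-2\tau_1} \to s_{t-\tau_1} \to (z_{t-\tau_1:t}, O_t)$ forms a Markov chain. Applying Lemma \ref{lemma:mc_exp_bd} with $\|\zeta_t\|_\infty \leq 2B^2$ from Lemma \ref{lemma:lip_lambda}(a) allows us to replace $\theta_{t-2\tau_1}$ by an independent copy $\theta'$ at cost $4B^2 m\rho^{\tau_1} \leq 4B^2 \alpha_T$. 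The remaining expectation $\E[\zeta_t(\theta', z'_{t-\tau_1:t})]$ is handled by conditioning on $\theta'$ and substituting the identity $\bar{x}(\theta) = \E[\delta_t(\theta) z_{-\infty:t}]$: a Cauchy--Schwarz bound together with $\|z_{-\infty:t} - z_{t-\tau_1:t}\|_2 \leq (\gamma\lambda)^{\tau_1+1}/(1-\gamma\lambda)$ and $\|\theta - \theta^*\|_2 \leq 2R \leq B$ produces a further $B^2 \alpha_T$. Adding the three contributions and absorbing constants gives $\E[\zeta_t(\theta_t, z_{0:t})] \leq 6B^2(1 + 2\tau_1) \alpha_{t-2\tau_1}$.

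Parts (b) and (c) use the same decomposition but with $\theta_0$ in place of $\theta_{t-2\tau_1}$: in (b) there is not enough room to back up $2\tau_1$ indices, while (c) asks for an unconditional bound at every $t$. Because $\theta_0$ is deterministic, no coupling lemma is needed for the residual; instead one inserts the stationary trace directly, writing $\E[\zeta_t(\theta_0, z_{0:t})] = \E[\zeta_t(\theta_0, z_{0:t}) - \zeta_t(\theta_0, z_{-\infty:t})]$, which by Lemma \ref{lemma:lip_lambda}(c) is at most $B^2(\gamma\lambda)^t$ since $\E[\zeta_t(\theta_0, z_{-\infty:t})] = 0$ by stationarity. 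The Lipschitz step contributes $6B^2 \sum_{i=0}^{t-1} \alpha_i$, which specializes to $12 B^2 \tau_1 \alpha_0$ for (b) (using $t \leq 2\tau_1$) and gives (c) directly. The main obstacle is the bias bound on (III): one must simultaneously decouple $\theta_{t-2\tau_1}$ from $O_t$ via Markov mixing and account for the finite trace not being stationary. Using the intermediate truncated trace $z_{t-\tau_1:t}$—rather than $z_{0:t}$ or $z_{-\infty:t}$—is the device that makes both manageable, since it leaves exactly $\tau_1$ Markov steps of slack for mixing while keeping the residual trace gap geometrically small in $\tau_1$.
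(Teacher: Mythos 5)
Your proposal is correct and follows essentially the same route as the paper's proof: the same three-term decomposition (parameter swap via the Lipschitz bound of Lemma \ref{lemma:lip_lambda}(b), trace truncation to $z_{t-\tau:t}$ via Lemma \ref{lemma:lip_lambda}(c), and the residual bias controlled by the coupling Lemma \ref{lemma:mc_exp_bd} plus the stationary-trace identity $\bar{x}(\theta)=\E[\delta_t(\theta)z_{-\infty:t}]$), with the $\theta_0$-based variant for small $t$ in parts (b) and (c). The only difference is the order in which you perform the trace truncation and the parameter swap, which is immaterial since the relevant bounds hold uniformly over $\theta\in\Theta_R$ and over truncation points.
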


\begin{proof}
We proceed in two cases below. Throughout the proof, results from Lemma \ref{lemma:lip_lambda} are applied using the fact that $\theta_t \in \Theta_R$, because of the algorithm's projection step.
\paragraph{Case (a):} Let $t > 2\tau$ and consider the following decomposition for all $\tau \in \{0,1,\ldots, t/2\}$. We show an upper bound on each of the three terms separately.
\begin{eqnarray*}
\mathbb{E}\left[ \zeta_t(\theta_t, z_{0:t}) \right] \leq \abs{\mathbb{E}\left[ \zeta_t(\theta_t, z_{0:t}) \right] - \mathbb{E}[\zeta_t(\theta_{t-2\tau}, z_{0:t})]} + \abs{\mathbb{E}[\zeta_t(\theta_{t-2\tau}, z_{0:t})] - \mathbb{E}[\zeta_t(\theta_{t-2\tau}, z_{t-\tau:t})]} + \abs{\mathbb{E}[\zeta_t(\theta_{t-2\tau}, z_{t-\tau:t})]}.
\end{eqnarray*}
\underline{\it Step 1: Use regularity properties of the error function to bound first two terms.}
\vspace{1mm}\\
We relate $\zeta_{t}(\theta_t,z_{0:t})$ and $\zeta_{t}(\theta_{t-\tau},z_{0:t})$ using the Lipschitz property shown in part (b) of Lemma \ref{lemma:lip_lambda} to get,
\begin{eqnarray}
\label{eq:decomp_a_1}
\abs{\zeta_t(\theta_t, z_{0:t}) - \zeta_t(\theta_{t-2\tau}, z_{0:t})} = 6B \norm{\theta_t - \theta_{t-2\tau}}_2 \leq 6B^2 \sum_{i=t-2\tau}^{t-1} \alpha_i.
\end{eqnarray}
Taking expectations on both sides gives us the desired bound on the first term. The last inequality used the norm bound on update direction as shown in Lemma \ref{lemma:norm_grad_lambda} to simplify,
\[
\norm{\theta_t - \theta_{t-2\tau}}_2 \leq \sum_{i=t-2\tau}^{t-1} \norm{\Pi_{2,R}\left( \theta_{i+1} + \alpha_i x_i(\theta_i,z_{0:i}) \right) -\theta_i}_2 \leq \sum_{i=t-2\tau}^{t-1} \alpha_i \norm{x_i(\theta_i, z_{0:i})}_2 \leq B \sum_{i=t-2\tau}^{t-1} \alpha_i.
\]
Similarly, by part (c) of Lemma \ref{lemma:lip_lambda}, we have a bound on the second term.
\begin{eqnarray}
\label{eq:decomp_a_2}
\abs{\E[\zeta_{t}(\theta_{t-2\tau},z_{0:t})] - \E[\zeta_{t}(\theta_{t-2\tau},z_{t-\tau:t})]} \leq B^2 (\gamma \lambda)^\tau.
\end{eqnarray}
\underline{\it Step 2: Use information-theoretic arguments to upper bound $\E[\zeta_{t}(\theta_{t-2\tau},z_{t-\tau:t})]$.} \vspace{1mm}\\
We will essentially use Lemma \ref{lemma:mc_exp_bd} to upper bound $\E[\zeta_{t}(\theta_{t-2\tau},z_{t-\tau:t})]$. We first introduce some notation to highlight subtle dependency issues. Note that $\zeta_t(\theta_{t-2\tau}, z_{t-\tau:t})$ is a function of $(\theta_{t-2\tau},s_{t-\tau},\ldots,s_{t-1},O_t)$. To simplify, let $Y_{t-\tau:t} = (s_{t-\tau},\ldots,s_{t-1},O_t)$. Define,
\[
f(\theta_{t-2\tau},Y_{t-\tau:t}) := \zeta_{t}(\theta_{t-2\tau},z_{t-\tau:t}).
\]
Consider random variables $\theta'_{t-2\tau}$ and $Y'_{t-\tau:t}$ drawn independently from the marginal distributions of $\theta_{t-2\tau}$ and $Y_{t-\tau:t}$, so $\Prob(\theta'_{t-2\tau}=\cdot, Y'_{t-\tau:t}=\cdot)=\Prob(\theta_{t-\tau}=\cdot)\otimes \Prob(Y_{t-\tau:t}=\cdot)$. By Lemma \ref{lemma:lip_lambda} we have that $\abs{f(\theta, Y_{t-\tau:t})} \leq 2B^2$ for all $\theta \in \Theta_R$ with probability 1. As 
\[ 
\theta_{t-2\tau} \rightarrow s_{t-2\tau} \rightarrow s_{t-\tau} \rightarrow s_t \rightarrow O_t
\]
form a Markov chain, a direct application of Lemma \ref{lemma:mc_exp_bd} gives us: 
\begin{eqnarray}
\label{eq:coupling_first_piece_bd}
\abs{\E[f(\theta_{t-2\tau},Y_{t-\tau:t}) ] - \E[ f(\theta'_{t-2\tau},Y'_{t-\tau:t}) ]} \leq 4B^2 m\rho^\tau.
\end{eqnarray}
We also have the following bound for all fixed $\theta \in \Theta_R$. Using $\bar{x}(\theta) = \E[\delta_t(\theta) z_{-\infty:t}]$, we get
\begin{eqnarray*}
\label{eq:coupling_second_piece_bd}
\E[f(\theta, Y_{t-\tau:t})] =  \left(\E[\delta_t(\theta)z_{t-\tau:t}] - \bar{x}(\theta)\right)^\top (\theta - \theta^*) 
&\leq& \abs{\left( \delta_t(\theta) z_{-\infty:t-\tau} \right)^\top (\theta - \theta^*)} \leq B^2 (\gamma \lambda)^\tau
\end{eqnarray*}
Combining the above with Equation \eqref{eq:coupling_first_piece_bd}, we get
\begin{eqnarray}
\label{eq:decom_a_3}
\abs{\E[\zeta_{t}(\theta_{t-2\tau},z_{t-\tau:t})]} &=& \abs{\E[f(\theta_{t-2\tau},Y_{t-\tau:t}) ]} \nonumber \\
&\leq& \abs{\E[f(\theta_{t-2\tau},Y_{t-\tau:t}) ] - \E[ f(\theta'_{t-2\tau},Y'_{t-\tau:t}) ]} + \abs{\E[ f(\theta'_{t-2\tau},Y'_{t-\tau:t}) ]} \nonumber \\
&\leq& 4B^2 m\rho^\tau + \abs{\E[ \E[ f(\theta'_{t-2\tau},Y'_{t-\tau:t}) | \theta'_{t-2\tau}] ]} \nonumber \\
&\leq& 4B^2 m\rho^\tau + B^2 (\gamma \lambda)^\tau.
\end{eqnarray}
\underline{\it Step 3. Combine terms to show part (a) of our claim.} \vspace{1mm}\\
Taking $\tau = \tau^{\mix}_{\lambda}(\alpha_T)$ and combining Equations \eqref{eq:decomp_a_1}, \eqref{eq:decomp_a_2} and \eqref{eq:decom_a_3} establishes the first claim.
\begin{eqnarray*}
\mathbb{E}\left[ \zeta_t(\theta_t, z_{0:t}) \right] \leq 6B^2 \sum_{i=t-2\tau}^{t-1} \alpha_i + 4B^2 m\rho^\tau + 2B^2(\gamma \lambda)^{\tau} &\leq& 12B^2 \tau^{\mix}_{\lambda}(\alpha_T) \alpha_{t-2\tau^{\mix}_{\lambda}(\alpha_T)} + 6B^2 \alpha_T \\
&\leq& 6B^2(1+2\tau^{\mix}(\alpha_T)) \alpha_{t-2\tau^{\mix}_{\lambda}(\alpha_T)}.
\end{eqnarray*}
Here we used that letting $\tau = \tau^{\mix}_{\lambda}(\alpha_T)$ implies: $\max \{ m\rho^{\tau}, (\gamma \lambda)^{\tau} \} \leq \alpha_T$. Two additional facts which we also use follow from a non-increasing step-size sequence, $\sum_{i=t-2\tau}^{t-1} \alpha_i \leq 2\tau \alpha_{t-2\tau}$ and $\alpha_T \leq \alpha_{t-2\tau}$.
\paragraph{Case (b):} Consider the following decomposition for all $t \in \mathbb{N}_0$,
\begin{eqnarray*}
\E \left[ \zeta_t(\theta_t, z_{0:t}) \right] \leq \abs{\E\left[ \zeta_t(\theta_t, z_{0:t}) \right] - \E[\zeta_t(\theta_0, z_{0:t})]} + \abs{\E[\zeta_t(\theta_0, z_{0:t})] - \E[\zeta_t(\theta_0, z_{-\infty:t})]} + \abs{\E[\zeta_t(\theta_0, z_{-\infty:t})]}.
\end{eqnarray*}
\underline{\it Step 1: Use regularity properties of the error function to upper bound the first two terms.} \vspace{1mm}\\
Using parts (b), (c) of Lemma \ref{lemma:lip_lambda} and following the arguments shown in Step 1, 2 of case (a) above, we get
\begin{eqnarray}
\label{eq:decomp_b_1_2}
\abs{\E\left[ \zeta_t(\theta_t, z_{0:t}) \right] - \E[\zeta_t(\theta_0, z_{0:t})]} + \abs{\E[\zeta_t(\theta_0, z_{0:t})] - \E[\zeta_t(\theta_0, z_{-\infty:t})]} \leq 6B^2 \sum_{i=0}^{t-1} \alpha_i + B^2 (\gamma \lambda)^t.
\end{eqnarray}
\underline{\it Step 2: Characterizing $\E[\zeta_{t}(\theta,z_{-\infty:t})]$ for any fixed (non-random) $\theta$.} \vspace{1mm}\\
Recall the definition of $\bar{x}(\theta)$ from Equation \eqref{eq:expec_grad_TD_lambda}. For any fixed (non-random) $\theta$, we have $\bar{x}(\theta) = \E[\delta_t(\theta)z_{-\infty:t}]$. Therefore,
\begin{eqnarray}
\label{eq:decomp_b_3}
\E[\zeta_{t}(\theta_0,z_{-\infty:t})] = \left( \E[\delta_t(\theta_0)z_{-\infty:t}] - \bar{x}(\theta_0) \right)^\top (\theta_0 - \theta^*) = 0.  
\end{eqnarray}
\underline{\it Step 3. Combine terms to show parts (b), (c) of our claim.} \vspace{1mm}\\
Combining Equations \eqref{eq:decomp_b_1_2} and \eqref{eq:decomp_b_3} establishes part (c) which states,
\[
\E \left[ \zeta_t(\theta_t, z_{0:t}) \right] \leq 6B^2 \sum_{i=0}^{t-1} \alpha_i + B^2 (\gamma \lambda)^t \qquad \forall \,\,\, t \in \mathbb{N}_0.
\]
We establish part (b) by using that the step-size sequence is non-increasing which implies: $\sum_{i=0}^{t-1} \alpha_i \leq t\alpha_0$. For all $t \leq 2\tau^{\mix}_{\lambda}(\alpha_T)$, we have the following loose upper bound.
\[
\mathbb{E}\left[ \zeta_t(\theta_t, z_{0:t}) \right] \, \leq \, 6B^2t\alpha_0 + B^2(\gamma \lambda)^{t} \, \leq \,6B^2\left(1+2\tau^{\mix}_{\lambda}(\alpha_T))\right)\alpha_0 + B^2(\gamma \lambda)^{t}.
\]
\end{proof}

\subsection{Proof of Theorem \ref{thrm:lambda_mc_ana}}
\label{appendix_b:proof_mc_lambda}
In this subsection, we establish convergence bounds for Projected TD($\lambda$) as stated in Theorem \ref{thrm:lambda_mc_ana} using Lemmas \ref{lemma:mc_lambda_grad_decomp} and \ref{lemma:lambda_grad_bias_bd}. From Lemma \ref{lemma:mc_lambda_grad_decomp} we have,
\begin{eqnarray}
\label{eq:before_proof_td_lambda}
\mathbb{E} \left[ \norm{\theta^* - \theta_{t+1}}_2^2 \right] \leq \mathbb{E} \left[ \norm{\theta^* - \theta_t}_2^2 \right] - 2 \alpha_t (1-\kappa) \mathbb{E}\left[\norm{V_{\theta^*} - V_{\theta_t} }^2_{D} \right] + 2\alpha_t \mathbb{E}\left[ \zeta_t(\theta_t, z_{0:t}) \right] + \alpha_t^2 B^2. 
\end{eqnarray}
Equation \eqref{eq:before_proof_td_lambda} will be used as a starting point for analyzing different step-size choices.
\paragraph{Proof of part (a):} Fix a constant step-size of $\alpha_0 = \ldots = \alpha_t = 1/\sqrt{T}$ in Equation \eqref{eq:before_proof_td_lambda}, rearrange terms and sum from $t=0$ to $t=T-1$, we get
\begin{eqnarray*}
2 \alpha_0 (1-\kappa) \sum_{t=0}^{T-1} \mathbb{E}\left[\norm{V_{\theta^*} - V_{\theta_t}}^2_{D} \right] \leq \sum_{t=0}^{T-1} \left( \mathbb{E} \left[ \norm{\theta^* - \theta_t}_2^2 \right] - \mathbb{E}\left[ \norm{\theta^* - \theta_{t+1}}_2^2 \right] \right) + B^2 + 2\alpha_0 \sum_{t=0}^{T-1} \mathbb{E}\left[ \zeta_t(\theta_t, z_{0:t}) \right]. 
\end{eqnarray*}
Using Lemma \ref{lemma:lambda_grad_bias_bd} where $\alpha_{t-2\tau^{\mix}_{\lambda}(\alpha_T)}=\alpha_0$ along with the fact that $(\gamma \lambda) < 1$, we simplify to get
\begin{eqnarray*}
\sum_{t=0}^{T-1} \mathbb{E}\left[\norm{V_{\theta^*} - V_{\theta_t}}^2_{D} \right] &\leq& 
\frac{\norm{\theta^* - \theta_0}_2^2 + B^2}{2\alpha_0 (1-\kappa)} + \frac{T \cdot 6B^2(1+2\tau^{\mix}_{\lambda}(1/\sqrt{T})) \alpha_0}{(1-\kappa)} + \frac{1}{(1-\kappa)}\sum_{t=0}^{2\tau^{\mix}_{\lambda}(1/\sqrt{T})} B^2(\gamma \lambda)^t \\
&\leq& \frac{\sqrt{T}\left(\norm{\theta^* - \theta_0}_2^2 + B^2\right)}{2(1-\kappa)} +  \frac{\sqrt{T} \cdot 6B^2(1+2\tau^{\mix}_{\lambda}(1/\sqrt{T}))}{(1-\kappa)} + \frac{2B^2 \tau^{\mix}_{\lambda}(1/\sqrt{T})}{(1-\kappa)}.
\end{eqnarray*}
Adding these terms, we conclude
\begin{eqnarray*}
\mathbb{E}\left[\norm{V_{\theta^*} - V_{\bar{\theta}_T}}^2_{D}\right] \leq \frac{1}{T} \sum_{t=0}^{T-1} \mathbb{E}\left[\norm{V_{\theta^*} - V_{\theta_t}}^2_{D}\right] &\leq& \frac{\norm{\theta^* - \theta_0}_2^2 + B^2 \left(13+28\tau^{\mix}_{\lambda}(1/\sqrt{T})\right)}{2\sqrt{T}(1-\kappa)}.
\end{eqnarray*}
\paragraph{Proof of part (b):} For a constant step-size of $\alpha_0 < 1/(2 \omega  (1-\kappa))$, we show that the expected distance between the iterate $\theta_T$ and the TD($\lambda$) limit point, $\theta^*$ converges at an exponential rate below some level that depends on the choice of step-size and $\lambda$. Starting with Equation \eqref{eq:before_proof_td_lambda} and applying Lemma \ref{lemma:strong_conv} which shows that $\norm{V_{\theta^*} - V_{\theta}}^2_{D} \geq w \norm{\theta^* - \theta}_2^2$ for any $\theta$, we have that for all $t > 2 \tau^{\mix}_{\lambda}(\alpha_0)$,
\begin{eqnarray*}
\mathbb{E} \left[\norm{\theta^*-\theta_{t+1}}_2^2\right] &\leq& \left(1-2\alpha_0(1-\kappa)\omega \right) \mathbb{E} \left[\norm{\theta^*-\theta_t}_2^2\right] + \alpha_0^2 B^2 +  2\alpha_0 \mathbb{E}\left[ \zeta_t(\theta_t, z_{0:t}) \right] \nonumber \\
&\leq& \left(1-2\alpha_0(1-\kappa)\omega \right) \mathbb{E} \left[\norm{\theta^*-\theta_t}_2^2\right] + \alpha_0^2 B^2 \left(13+24\tau^{\mix}_{\lambda}(\alpha_0) \right), \label{eq:theta_rec}
\end{eqnarray*}
where we used part (a) of Lemma \ref{lemma:lambda_grad_bias_bd} for the second inequality. Iterating over it gives us our final result. For any $T > 2 \tau^{\mix}_{\lambda}(\alpha_0)$,  
\begin{eqnarray*}
\mathbb{E} \left[\norm{\theta^*-\theta_{T}}_2^2\right] &\leq& \left(1-2\alpha_0(1-\kappa)\omega \right)^T \norm{\theta^*-\theta_0}_2^2 + B^2 \Big(\alpha_0^2 \left(13+24\tau^{\mix}_{\lambda}(\alpha_0)\right) \Big) \sum_{t=0}^{\infty}\left(1-2\alpha_0(1-\kappa)\omega \right)^t \\
&\leq& \left(e^{-2\alpha_0(1-\kappa)\omega T}\right) \norm{\theta^*-\theta_0}_2^2 + \frac{B^2\Big(\alpha_0\left(13+24\tau^{\mix}_{\lambda}(\alpha_0) \right) \Big)}{2(1-\kappa)\omega}.
\end{eqnarray*}
The second inequality follows by solving the geometric series and using that $\left(1-2\alpha_0(1-\kappa)\omega \right) \leq e^{-2\alpha_0(1-\kappa)\omega}$.
\paragraph{Proof of part (c):} Consider a decaying step-size of $\alpha_t = 1/(\omega (t+1) (1-\kappa))$. We start with Equation \eqref{eq:before_proof_td_lambda} and use Lemma \ref{lemma:strong_conv} which showed $\mathbb{E}\left[\norm{V_{\theta^*} - V_{\theta}}^2_{D}\right] \geq w \mathbb{E}\left[ \norm{\theta^* - \theta}_2^2 \right]$ for all $\theta$ to get,
\begin{eqnarray*}
\mathbb{E}\left[\norm{V_{\theta^*} - V_{\theta_t}}^2_{D}\right] \leq \frac{1}{(1-\kappa)\alpha_t} \left((1-(1-\kappa)\omega \alpha_t) \mathbb{E}\left[\norm{\theta^* - \theta_t}_2^2\right] - \mathbb{E}\left[\norm{\theta^* - \theta_{t+1}}_2^2\right] + \alpha_t^2 B^2 \right) + \\
\frac{2}{(1-\kappa)} \mathbb{E}\left[ \zeta_t(\theta_t, z_{0:t}) \right]. 
\end{eqnarray*}
Substituting $\alpha_t = \frac{1}{\omega (t+1) (1-\kappa)}$, simplify and sum from $t=0$ to $T-1$ to get,
\begin{eqnarray}
\sum_{t=0}^{T-1} \mathbb{E}\left[\norm{V_{\theta^*} - V_{\theta_t}}^2_{D}\right] &\leq& -\omega T \mathbb{E}\left[ \norm{\theta^* - \theta_T}_2^2 \right] + \frac{B^2}{\omega (1-\kappa)^2} \sum_{t=0}^{T-1} \frac{1}{t+1} + \frac{2}{(1-\kappa)} \sum_{t=0}^{T-1} \mathbb{E}\left[ \zeta_t(\theta_t, z_{0:t}) \right] \nonumber\\ 
&\leq& \underbrace{-\omega T \mathbb{E}\left[ \norm{\theta^* - \theta_T}_2^2 \right]}_{< 0} + \frac{B^2 (1+ \log T)}{\omega (1-\kappa)^2} + \frac{2}{(1-\kappa)} \sum_{t=0}^{T-1} \mathbb{E}\left[ \zeta_t(\theta_t, z_{0:t}) \right], \label{eq:lambda_mc_ana_rec_partb}
\end{eqnarray}
where we used that $\sum_{t=0}^{T-1} \frac{1}{t+1} \leq (1+\log T)$. To simplify notation, we put $\tau = \tau^{\mix}_{\lambda}(\alpha_T)$ for the remainder of the proof. We use Lemma \ref{lemma:lambda_grad_bias_bd} 
to upper bound the total bias, $\sum_{t=0}^{T-1} \mathbb{E}\left[ \zeta_t(\theta_t, z_{0:t}) \right]$ which can be decomposed as:
\begin{eqnarray}
\sum_{t=0}^{T-1} \mathbb{E}\left[ \zeta_t(\theta_t, z_{0:t}) \right] = \sum_{t=0}^{2\tau} \mathbb{E}\left[ \zeta_t(\theta_t, z_{0:t}) \right] + \sum_{t=2\tau + 1}^{T-1} \mathbb{E}\left[ \zeta_t(\theta_t, z_{0:t}) \right].
\end{eqnarray}
First, note that for a decaying step-size $\alpha_t = \frac{1}{\omega (t+1) (1-\gamma)}$ we have
\[
\sum_{t=0}^{T-1} \alpha_t = \frac{1}{\omega (1-\gamma)} \sum_{t=0}^{T-1} \frac{1}{(t+1)} \leq \frac{1 + \log T}{\omega (1-\gamma)}.
\]
We will combine this with Lemma \ref{lemma:lambda_grad_bias_bd} to upper bound each term separately. First,
\begin{eqnarray*}
\sum_{t=0}^{2\tau} \mathbb{E}\left[ \zeta_t(\theta_t, z_{0:t})\right] &\leq& \sum_{t=0}^{2\tau} \left( 6B^2 \sum_{i=0}^{t-1} \alpha_i \right) + \sum_{t=0}^{2\tau} B^2(\gamma \lambda)^t \\
&\leq& \frac{6B^2}{\omega(1-\kappa)} \sum_{t=0}^{2\tau} \sum_{i=0}^{T-1} \frac{1}{(i+1)} + 2B^2 \tau \,\, \leq \,\, \frac{14B^2 \tau}{\omega(1-\kappa)} (1 + \log T),
\end{eqnarray*}
where we used the fact that $\omega, \kappa, (\gamma \lambda) < 1$. Similarly, 
\begin{eqnarray*}
\sum_{t=2\tau+1}^{T-1} \mathbb{E}\left[ \zeta_t(\theta_t, z_{0:t}) \right] &\leq& 6B^2 (1 + 2 \tau) \sum_{t=2\tau+1}^{T-1} \alpha_{t-2\tau} \leq 6B^2 (1 + 2\tau) \sum_{t=0}^{T-1} \alpha_t \leq \frac{6B^2(1 + 2\tau)}{\omega(1-\kappa)} \left(1 + \log T \right).
\end{eqnarray*}
Combining the two parts, we get 
\begin{eqnarray*}
\sum_{t=0}^{T-1} \mathbb{E}\left[ \zeta_t(\theta_t, z_{0:t})\right] \leq \frac{B^2 (6+26\tau)}{\omega(1-\kappa)} (1 + \log T).
\end{eqnarray*}
Using this in conjunction with Equation \eqref{eq:lambda_mc_ana_rec_partb} we get our final result,
\begin{eqnarray*}
\mathbb{E}\left[\norm{V_{\theta^*} - V_{\bar{\theta}_T}}^2_{D}\right] \leq \frac{1}{T} \sum_{t=0}^{T-1} \mathbb{E}\left[\norm{V_{\theta_t} - V_{\theta^*}}^2_{D}\right] \leq \frac{B^2}{\omega T(1-\kappa)^2} \left(1+\log T\right) + \frac{2}{T(1-\kappa)} \sum_{t=0}^{T-1} \mathbb{E}\left[ \zeta_t(\theta_t, z_{0:t}) \right].
\end{eqnarray*}
Simplifying and putting back $\tau = \tau^{\mix}_{\lambda}(\alpha_T)$, we get 
\begin{eqnarray*}
\mathbb{E}\left[\norm{V_{\theta^*} - V_{\bar{\theta}_T}}^2_{D}\right] &\leq& \frac{B^2}{\omega T(1-\kappa)^2} \left(1+\log T\right) + \frac{2B^2 \left(6+26\tau^{\mix}_{\lambda}(\alpha_T)\right)}{\omega T(1-\kappa)^2} (1 + \log T) \\
&\leq& \frac{ B^2 \left(13+52\tau^{\mix}(\alpha_T)\right)}{\omega T(1-\kappa)^2}(1+\log T).
\end{eqnarray*}
Additionally, Equation \eqref{eq:lambda_mc_ana_rec_partb} implies a convergence rate of $\mathcal{O}(\log T/T)$ for the iterate $\theta_T$ itself:
\begin{eqnarray*}
\label{eq:mc_theta_result}
\mathbb{E}\left[\norm{\theta^* - \theta_T}_2^2 \right] &\leq& \frac{ B^2 \left(13+52\tau^{\mix}_{\lambda}(\alpha_T)\right)}{\omega^2 T(1-\kappa)^2} \left(1+\log T\right).
\end{eqnarray*}

\subsection{Proof of supporting lemmas.}
\label{appendix_b:proof_supp_lemma_lambda}
In this subsection, we provide standalone proofs of Lemma \ref{lemma:norm_grad_lambda} and \ref{lemma:lip_lambda} used above. 
\lemmasixth*
\begin{proof}
We start with the mathematical expression for $x_t(\theta, z_{0:t})$.
\begin{eqnarray*}
x_t(\theta, z_{0:t}) = \delta_t(\theta) z_{0:t} \,\, \Rightarrow \,\, \norm{x_t(\theta, z_{0:t})}_2 = \abs{\delta_t(\theta)} \norm{z_{0:t}}_2.
\end{eqnarray*}
We give an upper bound on both $\abs{\delta_t(\theta)}$ and $\norm{z_{0:t}}_2$. Starting with the definition of $\delta_t(\theta)$ and using that $\norm{\phi(s_t)}_2 \leq 1 \,\, \forall \,\, t$ along with $\norm{\theta}_2 \leq R$, we get
\begin{eqnarray*}
\abs{\delta_t(\theta)} =\abs{ r_t + \gamma \phi(s_{t}')^\top \theta - \phi(s_t)^\top \theta } \leq r_{\max} + \norm{\phi(s_{t}')}_2 \norm{\theta}_2 + \norm{\phi(s_t)}_2 \norm{\theta}_2 \leq \left( r_{\max} + 2 R \right).
\end{eqnarray*}
Next, 
\begin{eqnarray*}
\norm{z_{0:t}}_2^2 = \norm{\sum_{k=0}^t (\gamma \lambda)^{k} \phi(s_{t-k})}_2^2 \leq \left(\sum_{k=0}^t (\gamma \lambda)^{k}\right)^2 \leq \left(\sum_{k=0}^\infty (\gamma \lambda)^{k}\right)^2 = \frac{1}{(1-\gamma \lambda)^2}.
\end{eqnarray*}
Combining these two implies the first part of our claim.
\begin{eqnarray*}
\norm{x_t(\theta, z_{0:t})}_2 = \abs{\delta_t(\theta)} \norm{z_{0:t}}_2 \leq \frac{\left( r_{\max} + 2 R \right)}{(1-\gamma \lambda)} = B.
\end{eqnarray*}
Note that can easily show an upper bound $\norm{\delta_t(\theta)z_{l:t}}_2 \leq B$ for any pair $(\theta, z_{l:t})$ with $l \leq t$. Consider,
\begin{eqnarray*}
\norm{z_{l:t}}_2^2 &\leq& \norm{z_{-\infty:t}}_2^2 \leq \left(\sum_{k=0}^\infty (\gamma \lambda)^{k}\right)^2 = \frac{1}{(1-\gamma \lambda)^2} \\
\Rightarrow \,\, \norm{\delta_t(\theta)z_{l:t}}_2 &=& \abs{\delta_t(\theta)} \norm{z_{l:t}}_2 \leq \frac{\left( r_{\max} + 2 R \right)}{(1-\gamma \lambda)} = B.
\end{eqnarray*}
Taking $l \rightarrow -\infty$ implies that $\norm{\delta_t(\theta)z_{-\infty:t}}_2 \leq B$. As $\bar{x}(\theta) = \E \left[ \delta_t(\theta)z_{-\infty:t} \right]$, we also have a uniform norm bound on the expected updates, $\norm{\bar{x}(\theta)}_2 \leq B$, as claimed.
\end{proof}

\lemmaliplambda*
\begin{proof}
\,Throughout, we use the assumption that basis vectors are normalized i.e. $\norm{\phi(s_t)}_2 \leq 1 \,\, \forall \,\, t$.
\paragraph{Part (a):} We show a uniform norm bound on $\zeta_t(\theta, z_{l:t}) \,\, \forall \,\, \theta \in \Theta_{R}$. First consider the following:
\begin{eqnarray*}
\norm{\delta_t(\theta)z_{l:t}}_2 = \abs{\delta_t(\theta)} \norm{z_{l:t}}_2 &\leq& \abs{r_t + \gamma \phi(s'_t)^\top\theta - \phi(s_t)^\top\theta}  \norm{\sum_{k=0}^{t-l} (\gamma \lambda)^k \phi(s_{t-k})}_2 \\
&\leq& \abs{r_t + \norm{\phi(s_{t}')}_2 \norm{\theta}_2 + \norm{\phi(s_t)}_2 \norm{\theta}_2} \norm{\sum_{k=0}^{\infty} (\gamma \lambda)^k \phi(s_{t-k})}_2 \\
&\leq& \frac{\left( r_{\max} + 2 R \right)}{(1-\gamma\lambda)} = B.
\end{eqnarray*}
Using this along with the fact that $\norm{\theta - \theta^*}_2 \leq 2R \leq B$ and $\norm{\bar{x}(\theta)}_2 \leq B \,\, \text{for all} \,\, \theta \in \Theta_R$, we get
\begin{eqnarray*}
\abs{\zeta_t (\theta, z_{l:t})} = \abs{\left( \delta_t(\theta) z_{l:t} - \bar{x}(\theta) \right)^\top (\theta - \theta^*)} &\leq& \norm{\delta_t(\theta) z_{l:t} - \bar{x}(\theta)}_2 \norm{(\theta - \theta^*)}_2 \\
&\leq& \left(\norm{\delta_t(\theta) z_{l:t}}_2 + \norm{\bar{x}(\theta)}_2\right) \norm{(\theta - \theta^*)}_2 \\
&\leq& 2B \norm{(\theta - \theta^*)}_2 \leq 2B^2.
\end{eqnarray*}

\paragraph{Part (b):} To show that $\zeta_t(\cdot, z_{l:t})$ is $L$-Lipschitz, consider the following inequality for any four vectors $(a_1, b_1, a_2 ,b_2)$, which follows as a direct application of Cauchy-Schwartz.
\[
\abs{a_1^\top b_1-a_2^\top b_2} = \abs{a_1^\top(b_1-b_2) + b_2^\top(a_1-a_2)} \leq \norm{a_1}_2\norm{b_1-b_2}_2 + \norm{b_2}_2\norm{a_1-a_2}_2.
\]
This implies, 
\begin{eqnarray*}
\abs{\zeta_{t}(\theta, z_{l:t}) - \zeta_{t}(\theta', z_{l:t})} &=& \abs{\left(\delta_t(\theta)z_{l:t} - \bar{x}(\theta)\right)^\top(\theta - \theta^*) -  \left(\delta_t(\theta')z_{l:t} - \bar{x}(\theta')\right)^\top(\theta' - \theta^*)} \nonumber \\
&\leq& \norm{\delta_t(\theta)z_{l:t} - \bar{x}(\theta)}_2 \norm{\theta - \theta'}_2 + \norm{\theta' - \theta^*}_2 \norm{\left(\delta_t(\theta)z_{l:t} - \bar{x}(\theta)\right) -  \left(\delta_t(\theta')z_{l:t} - \bar{x}(\theta')\right)}_2 \nonumber \\
&\leq& 2B\norm{\theta - \theta'}_2 + 2R \Big[ \norm{z_{l:t} \left(\delta_t(\theta) - \delta_t(\theta')\right)}_2 +  \norm{\bar{x}(\theta) - \bar{x}(\theta')}_2 \Big] \label{eq:lambda_lip_1}\\
&\leq& 2B\norm{\theta - \theta'}_2 + \frac{8R}{(1-\gamma \lambda)} \norm{\theta - \theta'}_2 \label{eq:lambda_lip_2}\\
&\leq& 6B\norm{\theta - \theta'}_2 \nonumber ,
\end{eqnarray*}
where the last inequality follows as $\frac{R}{1-\gamma\lambda} \leq B/2$ by definition. In the penultimate inequality, we used that $\norm{z_{l:t} \left(\delta_t(\theta) - \delta_t(\theta')\right)}_2 \leq \frac{2}{(1-\gamma\lambda)} \norm{\theta - \theta'}_2$ which is easy to prove. Consider,
\begin{eqnarray*}
\norm{z_{l:t} \left(\delta_t(\theta) - \delta_t(\theta')\right)}_2 &\leq& \norm{z_{l:t}}_2 \abs{\left(\delta_t(\theta) - \delta_t(\theta')\right)} \\
&\leq& \norm{\sum_{k=0}^\infty (\gamma\lambda)^k \phi(s_{t-k})}_2 \abs{\left(\delta_t(\theta) - \delta_t(\theta')\right)} \\
&\leq& \frac{1}{(1-\gamma\lambda)}
\abs{\left(\gamma \phi(s'_t) - \phi(s_t) \right)^\top (\theta - \theta')} \\
&\leq& \frac{\left(\norm{\phi(s'_t)}_2 + \norm{\phi(s_t)}_2\right)}{(1-\gamma\lambda)} \norm{\theta - \theta'}_2 \leq \frac{2}{(1-\gamma\lambda)} \norm{\theta - \theta'}_2.
\end{eqnarray*}
As $\bar{x}(\theta) = \E \left[ \delta_t(\theta) z_{-\infty:t} \right]$, this also implies that $\norm{\bar{x}(\theta) - \bar{x}(\theta')}_2 \leq \frac{2}{(1-\gamma\lambda)} \norm{\theta - \theta'}_2$ which completes the proof.

\paragraph{Part (c):} To show that $\abs{\zeta_t(\theta,z_{0:t}) - \zeta_t(\theta,z_{t-\tau:t})} \leq B^2(\gamma \lambda)^\tau \,\,\text{for all} \,\, \theta \in \Theta_R$ and $\tau \leq t$, we use that $\norm{\theta - \theta^*}_2 \leq 2R \leq B$. 
\begin{eqnarray*}
\abs{\zeta_t(\theta,z_{0:t}) - \zeta_t(\theta,z_{t-\tau:t})} &=& \abs{\left(\delta_t(\theta) z_{0:t} - \delta_t(\theta) z_{t-\tau:t} \right)^\top (\theta - \theta^*)} \\
&\leq& \abs{\delta_t(\theta)} \norm{z_{0:t} - z_{t-\tau:t}}_2 \norm{\theta - \theta^*}_2 \\
&\leq& \abs{r_t + \gamma \phi(s'_t)^\top\theta - \phi(s_t)^\top\theta} \norm{\sum_{k=\tau}^\infty (\gamma \lambda)^k \phi(s_{t-k})}_2 B \\
&\leq& \abs{r_t + 2 \norm{\theta}_2} \cdot \frac{(\gamma \lambda)^\tau}{(1 - \gamma \lambda)} \cdot B \\
&\leq& B \frac{(r_{\max} + 2 R)}{(1 - \gamma \lambda)} (\gamma \lambda)^\tau = B^2 (\gamma \lambda)^\tau.
\end{eqnarray*}
Similarly, 
\begin{eqnarray*}
\abs{\zeta_t(\theta, z_{0:t}) - \zeta_t(\theta, z_{-\infty:t})} &\leq& \abs{\delta_t(\theta) (z_{0:t} - z_{-\infty:t})^\top (\theta - \theta^*)} \\
&\leq& \abs{\delta_t(\theta)} \norm{z_{0:t} - z_{-\infty:t}}_2 \norm{\theta - \theta^*}_2 \\
&\leq& \abs{\left( r_t + \gamma \phi(s'_{t})^\top \theta - \phi(s_t)^\top \theta \right)} \norm{\sum_{k=t}^\infty (\gamma \lambda)^k \phi(s_{t-k})}_2 B \\
&\leq& B \frac{(r_{\max} + 2R)}{(1-\gamma \lambda)} (\gamma \lambda)^{t} \leq B^2(\gamma \lambda)^{t}. 
\end{eqnarray*}
\end{proof}

\newpage
\section{Proofs of Lemmas \ref{lemma:expected_gradient_opt_stopping} and \ref{lemma:expected_grad_td_lambda}} \label{appendix:a1}
In this section, we give a combined proof of Lemmas \ref{lemma:expected_gradient_opt_stopping} and \ref{lemma:expected_grad_td_lambda} which quantify the progress of the expected updates towards the limit point $\theta^*$ for TD($\lambda$) and the Q-function approximation algorithm. These lemmas can be stated more generally as shown below, instead of using the Bellman operators $F(\cdot)$ and $T^{(\lambda)}(\cdot)$.

\begin{restatable}[]{lem}{lemmageneralexoectedgrad}
\label{lemma:expected_gradient_general}
Let $\Pi_D H(\cdot)$ be a contraction with respect to $\norm{\cdot}_D$ with modulus $\gamma$ and let $V_{\theta^{*}}$ be the unique fixed point of $\Pi_{D} H(\cdot)$, i.e. $V_{\theta^*} = \Pi_{D}H V_{\theta^*}$. Define $\bar{g}(\theta) = \Phi^\top D \left( H \Phi \theta - \Phi \theta \right)$ for all $\theta\in \mathbb{R}^d$ to be the expected update. Then, 
\begin{equation*}
(\theta^* - \theta)^\top \bar{g}(\theta) \,\, \geq \,\, (1-\gamma) \norm{V_{\theta^*} - V_{\theta}}^2_{D}.
\end{equation*}
\end{restatable}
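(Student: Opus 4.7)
The plan is to prove this via a clean inner-product decomposition that mirrors the proof of Lemma \ref{lemma:expected_gradient} (the case $H = T_\mu$), replacing only the use of the specific Bellman contraction with the abstract contraction hypothesis on $\Pi_D H(\cdot)$.

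First I would rewrite the left-hand side in value-function space. Since $\bar{g}(\theta) = \Phi^\top D(H\Phi\theta - \Phi\theta)$ and $V_\theta = \Phi\theta$, we get
\[
(\theta^* - \theta)^\top \bar{g}(\theta) = \langle \Phi(\theta^* - \theta),\, H V_\theta - V_\theta\rangle_D = \langle V_{\theta^*} - V_\theta,\, H V_\theta - V_\theta\rangle_D.
\]
The next step is the key geometric observation: because $V_{\theta^*} - V_\theta$ lies in the span of $\Phi$, and $\Pi_D$ is the orthogonal projection onto this span in the inner product $\langle\cdot,\cdot\rangle_D$, we can replace $HV_\theta$ with $\Pi_D H V_\theta$ inside this inner product without changing its value. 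That is,
\[
\langle V_{\theta^*} - V_\theta,\, H V_\theta - V_\theta\rangle_D = \langle V_{\theta^*} - V_\theta,\, \Pi_D H V_\theta - V_\theta\rangle_D.
\]
This lets us work entirely with the contraction operator $\Pi_D H(\cdot)$, for which we have the hypothesized modulus $\gamma$.

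Next I would split the right factor as $\Pi_D H V_\theta - V_\theta = (\Pi_D H V_\theta - V_{\theta^*}) + (V_{\theta^*} - V_\theta)$ and use the fixed-point identity $V_{\theta^*} = \Pi_D H V_{\theta^*}$ to rewrite the first summand as $\Pi_D H V_\theta - \Pi_D H V_{\theta^*}$. This produces the decomposition
\[
(\theta^* - \theta)^\top \bar{g}(\theta) = \|V_{\theta^*} - V_\theta\|_D^2 + \langle V_{\theta^*} - V_\theta,\, \Pi_D H V_\theta - \Pi_D H V_{\theta^*}\rangle_D.
\]
Applying Cauchy--Schwarz to the cross term and then invoking the contraction assumption gives
\[
\langle V_{\theta^*} - V_\theta,\, \Pi_D H V_\theta - \Pi_D H V_{\theta^*}\rangle_D \geq -\|V_{\theta^*} - V_\theta\|_D \cdot \|\Pi_D H V_\theta - \Pi_D H V_{\theta^*}\|_D \geq -\gamma\,\|V_{\theta^*} - V_\theta\|_D^2,
\]
from which the claimed inequality $(\theta^* - \theta)^\top \bar{g}(\theta) \geq (1-\gamma)\|V_{\theta^*} - V_\theta\|_D^2$ follows immediately.

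I do not expect a major obstacle. The only subtlety worth being careful about is justifying the substitution $HV_\theta \mapsto \Pi_D H V_\theta$ in the inner product; this relies on $V_{\theta^*} - V_\theta \in \mathrm{range}(\Phi)$ and on the fact that $\Pi_D$ is self-adjoint with respect to $\langle\cdot,\cdot\rangle_D$ and acts as the identity on $\mathrm{range}(\Phi)$. Once this is noted, the rest is a one-line application of contraction plus Cauchy--Schwarz, and the same derivation specializes to both $H = T_\mu^{(\lambda)}$ (Lemma \ref{lemma:expected_grad_td_lambda}) and $H = F$ (Lemma \ref{lemma:expected_gradient_opt_stopping}) without modification.
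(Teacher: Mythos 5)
Your proposal is correct and follows essentially the same route as the paper's proof in Appendix C: rewrite $(\theta^*-\theta)^\top \bar g(\theta)$ as a $D$-weighted inner product, use the self-adjointness of $\Pi_D$ and the fact that $V_{\theta^*}-V_\theta$ lies in $\mathrm{range}(\Phi)$ to insert the projection, split off the fixed point, and finish with Cauchy--Schwarz plus the contraction bound. The only cosmetic difference is whether the projection is first attached to the left factor and then moved across (as in the paper) or applied directly to $HV_\theta$ (as you do); these are the same step.
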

\begin{proof}
We have
\begin{eqnarray}
(\theta^{*} - \theta)^\top \bar{g}(\theta) &=&(\theta^{*}-\theta) ^\top \Phi^\top D \left( H \Phi \theta - \Phi \theta \right) \nonumber \\
&=& \langle \Phi (\theta^{*}-\theta) , \left( H \Phi \theta - \Phi \theta \right) \rangle_{D} \nonumber \\
&=& \langle \Pi_D \Phi (\theta^{*}-\theta ) , \left( H \Phi \theta - \Phi \theta \right) \rangle_{D}  \label{eq:vr_0} \\
&=& \langle \Phi (\theta^{*}-\theta)  , \Pi_D\left( H \Phi \theta - \Phi \theta \right) \rangle_{D} \label{eq:vr_1} \\
&=& \langle \Phi (\theta^{*}-\theta)  , \Pi_D H \Phi \theta - \Phi \theta \rangle_{D} \nonumber \\
&=& \langle \Phi (\theta^{*}-\theta)  , \Pi_D H \Phi \theta - \Phi\theta^*  +\Phi\theta^* - \Phi \theta \rangle_{D} \nonumber \\
&=& \| \Phi(\theta^* -\theta) \|_D^2 - \langle \Phi (\theta^{*}-\theta)  , \Phi\theta^* - \Pi_D H \Phi \theta \rangle_{D}  \nonumber \\
&\geq&  \| \Phi(\theta^* -\theta) \|_D^2 -   \| \Phi(\theta^* -\theta) \|_D \cdot  \|\Pi_D H \Phi \theta - \Phi\theta^*\|_{D}  \nonumber \\
&\geq& \| \Phi(\theta^* -\theta) \|_D^2  - \gamma \cdot \norm{\Phi (\theta^{*}-\theta)  }_{D}^2 \label{eq:vr_2} \\
&=& (1-\gamma) \cdot \norm{\Phi (\theta^{*}-\theta)  }_{D}^2 = (1-\gamma) \cdot \norm{V_{\theta^*} - V_{\theta}}^2_{D}, \nonumber
\end{eqnarray}
where in going to Equation \eqref{eq:vr_0}, we used that $\forall \,\, \mathbf{x} \in \text{Span}(\Phi)$, we have $\Pi_D \, \mathbf{x}=\mathbf{x}$. In Equation \eqref{eq:vr_1}, we used that the projection matrix $\Pi_D$ is symmetric. In going to Equation \eqref{eq:vr_2}, we used that that $\Pi_D H(\cdot)$ is a contraction operator with modulus $\gamma$ with $\Phi\theta^*$ as its fixed point, which implies that $\norm{\Pi_D H \Phi \theta -\Phi \theta^{*}}_D = \norm{\Pi_D H \Phi \theta - \Pi_{D} H \Phi \theta^{*}}_D \leq \gamma \norm{\Phi \theta - \Phi \theta^*}_D$.
\end{proof}
\end{document}